\documentclass{article} 
\usepackage{iclr2026_conference,times}

\usepackage{amsmath,amsfonts,bm}

\def\eqref#1{equation~\ref{#1}}

\def\1{\bm{1}}

\DeclareMathAlphabet{\mathsfit}{\encodingdefault}{\sfdefault}{m}{sl}
\SetMathAlphabet{\mathsfit}{bold}{\encodingdefault}{\sfdefault}{bx}{n}

\DeclareMathOperator*{\argmax}{arg\,max}
\DeclareMathOperator*{\argmin}{arg\,min}

\usepackage{hyperref}
\usepackage{url}

\usepackage{booktabs}       
\usepackage{xcolor} 
\usepackage{graphicx}
\usepackage{amsthm,amsmath,amssymb}
\usepackage{mathrsfs}
\newtheorem{myDef}{Definition}

\newtheorem{prop}{Proposition}

\usepackage{amsmath}
\usepackage{stfloats} 
\usepackage{miniplot}
\usepackage{bbm}
\usepackage{algorithm}
\usepackage{algorithmic}
\usepackage{multicol}
\usepackage{wrapfig}
\usepackage{subcaption}
\usepackage{braket}
\usepackage{setspace}
\usepackage{lipsum}
\usepackage{multirow}
\usepackage{cancel}
\usepackage{dsfont}
\usepackage{enumitem}
\newcommand{\EE}[2]{\mathbb{E}_{#1}\left[#2\right]}

\usepackage{makecell}

\usepackage[toc,page,header]{appendix}
\usepackage{minitoc}

\title{Principled Fast and Meta Knowledge Learners for Continual Reinforcement Learning}

\author{%
Ke Sun$^{1*}$,
Hongming Zhang$^{2*}$,
Jun Jin$^{1}$,
Chao Gao$^{3}$,
Xi Chen$^{4}$,
Wulong Liu$^{5}$,
Linglong Kong$^{1\dagger}$\\
$^1$University of Alberta\\
$^2$National Key Laboratory of Cognition and Decision Intelligence for Complex Systems, \\
Institution of Automation, Chinese Academy of Sciences \\
$^3$Edmonton Research Center, Huawei Canada\\
$^4$Huawei Noah’s Ark Lab\\
$^5$BetaInfinity\\
\texttt{\{ksun6,jjin5,lkong\}@ualberta.ca}\\
\texttt{hongming.zhang@ia.ac.cn} \\
\texttt{\{chao.gao4,xi.chen4\}@huawei.com} \\
\texttt{wulong.liu@gmail.com}
}

\iclrfinalcopy 
\begin{document}

\maketitle

\footnotetext{* Equal Contribution.}
\footnotetext{$\dagger$ Corresponding Author.}

\begin{abstract}
Inspired by the human learning and memory system, particularly the interplay between the hippocampus and cerebral cortex, this study proposes a dual-learner framework comprising a fast learner and a meta learner to address continual Reinforcement Learning~(RL) problems. These two learners are coupled to perform distinct yet complementary roles: the fast learner focuses on knowledge transfer, while the meta learner ensures knowledge integration. In contrast to traditional multi-task RL approaches that share knowledge through average return maximization, our meta learner incrementally integrates new experiences by explicitly minimizing catastrophic forgetting, thereby supporting efficient cumulative knowledge transfer for the fast learner. To facilitate rapid adaptation in new environments, we introduce an adaptive meta warm-up mechanism that selectively harnesses past knowledge. We conduct experiments in various pixel-based  and continuous control benchmarks, revealing the superior performance of continual learning for our proposed dual-learner approach relative to baseline methods. The code is released in \url{https://github.com/datake/FAME}. 
\end{abstract}

\renewcommand \thepart{}
\renewcommand \partname{}
\doparttoc 
\faketableofcontents 

\section{Introduction}

    Most deep reinforcement learning~(RL) algorithms~\citep{sutton2018reinforcement,mnih2015human,schulman2015trust, haarnoja2018soft,schulman2017proximal} are designed for a single task, where the environmental dynamics and reward function often remain stationary over time. In contrast, humans continually face diverse and evolving environments, learning to solve new tasks sequentially throughout their lives. Building artificial general agents with such adaptive capabilities requires continual learning, i.e., the ability to acquire new knowledge efficiently without forgetting previously learned skills. In this realm,  \textit{Continual Reinforcement Learning}~\citep{khetarpal2022towards,abel2023definition} emerges as a crucial paradigm, aiming to balance \textit{plasticity} (rapid adaptation to new tasks) and \textit{stability} (retention of past knowledge). An ideal continual learning agent is capable of transferring useful knowledge forward to accelerate learning in new environments while avoiding \textit{catastrophic forgetting}~\citep{dohare2024loss} across previously encountered tasks. 
    

        Recent work in continual RL spans a wide range of strategies~\citep{barreto2020fast, kessler2022same, kaplanis2019policy,caccia2022task,kaplanis2018continual,gaya2022building,yang2023continual,wolczyk2022disentangling,anand2023prediction,wan2022towards, chandak2020optimizing,sun2025knowledge,liu2025continual,tang2025mitigating}. Despite rapid progress, continual RL remains fragmented: existing algorithms above are often motivated by heuristics or developed from distinct perspectives~(see Appendix~\ref{appendix:relatework} for a detailed comparison of related work), but there is no principled way to understand \textit{when} and \textit{why} they work. This lack of a theoretical foundation makes it difficult to assess when knowledge transfer will be beneficial, how to mitigate forgetting, and how to set explicit optimization objectives, hindering principled algorithm development in continual RL.

    To address these challenges, our study contributes to new foundations of continual RL by (1) defining the MDP difference to formally quantify environment similarity, and (2) introducing a quantitative measure of catastrophic forgetting applicable to both value- and policy-based RL. Building on these new foundations and principles, we propose a dual-learner paradigm that mirrors hippocampal–cortical interactions in the human learning and memory systems~\citep{kumaran2016learning}, decomposing continual RL into two complementary objectives: \textit{knowledge transfer} and \textit{knowledge integration}. Specifically, we maintain two distinct yet complementary components, i.e., a fast learner and a meta learner, which are analogous to the functional roles of the hippocampus (the fast learner) and the neocortex (the meta learner) in the human brain. Our approach also elucidates its more profound connection to the transfer and multi-task RL problems~\citep{schaul2015universal, rusu2015policy,parisotto2015actor,rajendran2015attend,teh2017distral,bai2023picor}.  
    
    \noindent \textbf{Knowledge Transfer via Fast Learner.} We leverage a fast learner to rapidly acquire knowledge from a new task by adaptively transferring prior knowledge stored in a meta learner. To circumvent the potential negative transfer issue, an \textit{adaptive meta warm-up} strategy is developed by either directly copying parameters as initialization or adding a behavior cloning regularization in the early training phase to guide the exploration. The function of a fast learner in knowledge transfer resembles the hippocampus. By swiftly encoding the new experiences and discriminating the effectiveness of existing knowledge, the hippocampus, guided by the neocortex, specifically functions to quickly assimilate novel scenarios in response to immediate environmental changes or drifts.

    \noindent \textbf{Knowledge Integration via Meta Learner.} After assimilating the new knowledge by the fast learner, an incremental knowledge integration incorporates the new experiences into the existing knowledge pool stored in the meta learner. Under the new foundation, the knowledge integration is incrementally updated in the principle of \textit{catastrophic forgetting minimization} under specific divergence metrics. After consolidating old and new experiences, the meta learner enhances the adaptive meta warm-up, facilitating the knowledge transfer in the next environment. The knowledge integration process plays a role akin to the cerebral cortex, which gradually integrates, incorporates, and consolidates new knowledge into the existing cognitive structure in the human brain to build a more generalizable, robust, and stable decision-making system.

    \noindent \textbf{Contributions.} The contributions of our study can be succinctly summarized as follows:
    \begin{itemize}[itemindent=\parindent,leftmargin=*, topsep=0pt, partopsep=0pt]
        \item We propose new foundations of continual RL, including the definition of MDP difference and the measure of catastrophic forgetting, underpinning the algorithmic innovations in the future. 
        \item We devise a dual-learner system that incorporates distinct yet complementary fast and meta learners to perform knowledge transfer and knowledge integration. The interplay between fast and meta learners mimics the hippocampal-cortical dialogue in the brain’s memory systems. 
        \item We provide comprehensive empirical studies to validate the efficiency of our dual-learner system in discrete and continuous action domains, across both value- and policy-based RL algorithms.
    \end{itemize}

        \section{Problem Setting and New Continual RL Foundations}
        
        \noindent \textbf{Problem Setting.} Let $[K]$ denote $\{1, 2, ..., K\}$. We consider a sequence of $K$ tasks, where each task $k \in [K]$ is modeled by a Markov Decision Process~(MDP) $ \mathcal{M}_k=\left\langle\mathcal{S}_k, \mathcal{A}_k, P_k, R_k, \gamma\right\rangle$. $\mathcal{S}_k$ and $\mathcal{A}_k$ denote the state and action spaces, $P_k: \mathcal{S}_k \times \mathcal{A}_k \rightarrow \mathcal{P}(\mathcal{S}_k)$ is the transition dynamics, $R_k: \mathcal{S}_k \times \mathcal{A}_k \rightarrow \mathbb{R}$ is the reward function, and $\gamma$ is the discounting factor. We define the action-value function $Q^\pi(s, a) = \mathbb{E}_\pi\left[\sum_{i=0}^{\infty} \gamma^i R_{t+i+1} \mid S_t=s, A_t=a\right]$ given a state $s$, an action $a$, and a policy $\pi$. Following common practice in continual RL~\citep{wolczyk2022disentangling, khetarpal2022towards, malagon2024self}, we adopt three assumptions: (1) the same state and action spaces, (2) known task boundaries, i.e., semi-continual RL~\citep{anand2023prediction}, and (3) a training budget with a moderate model size and an allowable computation cost. An ``optimal'' policy can generalize favorably across all tasks by balancing adaptation to new tasks with  prior knowledge retention.

        \noindent \textbf{Foundation 1: MDP Distance.} Theoretical analysis in continual RL requires a quantitative similarity measure between environments to determine when knowledge transfer will be beneficial or harmful, and to assess how strongly new tasks may interfere with previously learned ones. A desirable MDP distance should consider variations from both reward functions and transition dynamics between MDPs. To this end, we utilize the distance between two MDP-determined optimal Q functions or task-specific optimal policies to quantify the MDP distance in Definition~\ref{def:MDPDiff}.

      \begin{myDef}\textbf{(MDP Distance)} \label{def:MDPDiff} For two finite MDPs: $\text{MDP}_1 = (\mathcal{S}, \mathcal{A}, R_1, P_1, \gamma)$ and $\text{MDP}_2 = (\mathcal{S}, \mathcal{A}, R_2, P_2, \gamma)$, we denote their optimal Q functions as $Q^*_1$ and $Q^*_2$ and the optimal policies as $\pi_1^*$ and $\pi_2^*$. The Q-value-based and policy-based MDP distances are defined as $d_Q(Q_1^*, Q_2^*)$ and $d_\pi(\pi_1^*, \pi_2^*)$ under certain divergences or distances  $d_Q$ and $d_\pi$, e.g., the $\ell_2$ loss or the KL divergence. 
	\end{myDef}


	 \noindent \textbf{Foundation 2: Catastrophic Forgetting.} Our definition of catastrophic forgetting in continual RL is inspired by \textit{distribution drift} and \textit{catastrophic forgetting} quantified in deep learning literature~\citep{doan2021theoretical}, which we briefly recap in Appendix~\ref{appendix:CF_dl}. Grounded in the definition of MDP difference in Definition~\ref{def:MDPDiff}, we introduce catastrophic forgetting between two MDPs in Definition~\ref{def:CF}. Denote  $\mu_{k}^{\pi}$ and $\mu_{k}^{Q}$ as the state visitation distributions when a policy $\pi$ or a greedy policy $\pi^Q$ over $Q$ defined by $\pi^Q(\cdot | s) = \argmax_{a} Q(s, a)$, interacts in the $k{\text{-th}}$ environment in a sequence of $K$ tasks. 
        
        \begin{myDef}\textbf{(Catastrophic Forgetting across Two Environments)}\label{def:CF} Denote $Q_{k-1}, Q_k$ and $\pi_{k-1}, \pi_k$ as Q functions and policies after training RL algorithms across the $(k-1){\text{-th}}$ and $k{\text{-th}}$ environments sequentially. 
        The catastrophic forgetting, denoted by $\text{CF}$, is defined as
			\begin{align}
				\text{CF}(Q_{k-1}, Q_k) 
                & =  \sum_{s, a} \mu_{k-1}^{Q_{k-1}}(s)  \pi^{Q_{k-1}}(a|s) d_Q \left({Q}_{k-1}(s, a), Q_k(s, a) \right), \label{eq:CF_Q} \\
                    \text{CF}(\pi_{k-1}, \pi_{k}) & =  \sum_{s} \mu_{k-1}^{\pi_{k-1}}(s) d_\pi \left(\pi_k(\cdot | s), \pi_{k-1}(\cdot | s) \right).\label{eq:CF_pi}
			\end{align}
        \end{myDef}
        \vspace{-2mm}
        For each $s$ and $a$, the weights $\mu_{k-1}^{Q_{k-1}}(s)  \pi^{Q_{k-1}}(a|s)$ and  $\mu_{k-1}^{\pi_{k-1}}(s)$ characterize the \textit{relative importance} when measuring discrepancies between Q functions and policies. Crucially, we evaluate these weights using the preceding policy $\pi_{k-1}$ ($\pi^{Q_{k-1}}$) rather than the current policy $\pi_{k}$ ($\pi^{Q_k}$), as the past policy better reflects states and actions that mattered most in the old task. In contrast, if we use $\pi_k$  ($\pi^{Q_k}$) for the weight evaluation, significant changes in the Q-function or policy on previously important state–action pairs might be overlooked, since the new policy may no longer visit them.

   \section{FAME: Principled \underline{FA}st and \underline{ME}ta Knowledge Continual RL}

    \begin{wrapfigure}[17]{r}{0.6\textwidth}
    \vspace{-5mm}
		\centering
		\includegraphics[width=0.6\textwidth,trim=60 20 45 50,clip]{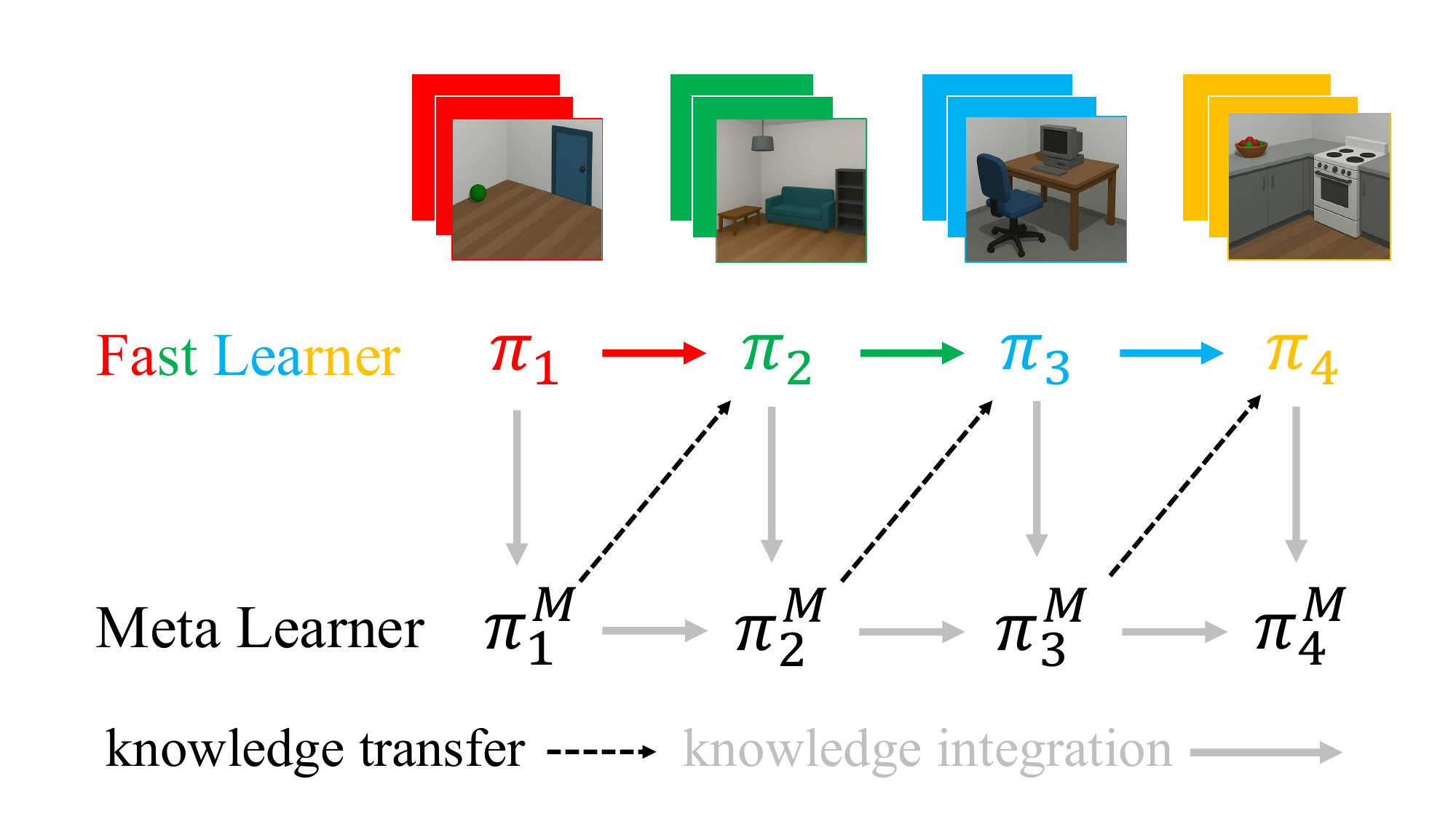}
		\caption{Illustration of FAME. In value-based continual RL, the fast learner can be denoted by $\{Q_k\}_{k=1}^K$ accordingly instead of $\{\pi_k\}_{k=1}^K$.}
		\label{fig:method}
\end{wrapfigure}

    The proposed FAME approach is applicable to both value-based and policy-based RL. In value-based RL, we denote $Q_k$ as the updated fast learner after learning task $k$, followed by a meta learner $Q^M_k$ that integrates knowledge from the preceding meta learner $Q^M_{k-1}$ and $Q_k$. In policy-based RL, we denote $\pi_k$ as the fast learner after learning task $k$, and then a meta learner $\pi^M_k$ integrates knowledge from the preceding meta learner $\pi^M_{k-1}$ and $\pi_k$. The coupled updating of the fast and meta learners in FAME is illustrated in Figure~\ref{fig:method}. In principle, the fast learner rapidly learns the new task guided by the meta learner via the proposed adaptive meta warm-up. Meanwhile, the meta learner consolidates the experience from the preceding meta learner and the current fast learner via knowledge integration to minimize catastrophic forgetting.
 
        \subsection{Value-based Continual RL with Discrete Action Spaces}\label{sec:value}

       \subsubsection{Knowledge Integration: Catastrophic Forgetting Minimization Principle}\label{sec:value_integration}

    After the fast learner $Q_k$ completes training in the $k\text{-th}$ environment, the knowledge integration phase begins. In this phase, the meta learner $Q_{k}^M$ is updated to consolidate information by combining the prior knowledge encoded in the preceding meta learner $Q_{k-1}^M$ and the new knowledge acquired by the fast learner $Q_k$. Unlike classical multi-task RL, which maximizes the average rewards, our meta learner is explicitly designed to minimize the catastrophic forgetting defined in Definition~\ref{def:CF}.

\noindent \textbf{Q-Value-based Catastrophic Forgetting and Limitations.} We extend the Q-value-based catastrophic forgetting defined in Eq.~\ref{eq:CF_Q} over a sequence of $K$ tasks. In the $k\text{-th}$ environment, the optimal meta Q value function $Q_k^M$ is the minimizer by solving the following objective function:
\begin{equation}
    \begin{aligned}\label{eq:integration_Q}
        Q^M_{k} & = \argmin_{\widetilde{Q}^M_k} \sum_{i=1}^{k} \sum_{s, a} \mu_{i}^{Q_i}(s) \pi^{Q_i}(a|s) \left(Q_i(s, a) - \widetilde{Q}^M_{k}(s, a)\right)^2,
    \end{aligned}
\end{equation}
where we recall that $\mu_i^{Q_i}$ is the state visitation distribution when the greedy policy $\pi^{Q_i}$ (i.e., $\pi^{Q_i}(\cdot | s) = \argmax_{a}Q_i(s, a)$) interacts with the $i{\text{-th}}$ environment. Intuitively, the direct minimizer $Q_k^M$ by solving Eq.~\ref{eq:integration_Q} is a weighted average among $\{Q_i\}_{i=1}^k$. However, developing the capability of continual learning by storing all previous Q functions fails to scale in the number of tasks, which is one of the crucial requirements in continual RL. Indeed, we can rewrite the above objective function as an incremental updating rule between the preceding meta learner $Q^M_{k-1}$ and the fast learner $Q_k$, which we defer to Appendix~\ref{appendix:proof_integration_value}. Nonetheless, the fundamental limitation of Q-value-based catastrophic forgetting lies in the fact that it is mainly applicable to distinct environments with \textbf{similar scales of Q values}, such as those with varying transition dynamics yet the same reward function. As the new arriving environment is agnostic, the scale of the Q-values may be hard to learn because it is not necessarily bounded and can be quite unstable~\citep{rusu2015policy}. The previously well-learned tasks with high rewards tend to be more salient in consolidating knowledge than those with small rewards~\citep{zhang2023replay}. Therefore, the policy-based definition of catastrophic forgetting in Eq.~\ref{eq:CF_pi} is more versatile than the Q-value-based one in Eq.~\ref{eq:CF_Q}, serving as a preferable alternative. In addition, policies may inherently enjoy lower variance than value functions, contributing to improved performance and stability~\citep{greensmith2004variance}. 

\noindent \textbf{Policy-based Catastrophic Forgetting.} Even in value-based continual RL, it is preferable to employ the policy-based definition of catastrophic forgetting based on Eq.~\ref{eq:CF_pi} for an incremental update of the meta learner. 
Akin to the Q-value-based catastrophic forgetting in Eq.~\ref{eq:integration_Q}, the optimal meta policy $\pi_k^M$ in the $k\text{-th}$ environment is the minimizer by solving the following objective function:
\begin{equation}
        \begin{aligned}
        \label{eq:integration_policy}
        \pi^M_k & = \argmin_{\widetilde{\pi}^M_k}  \sum_{i=1}^{k} \sum_{s} \mu_{i}^{\pi_i}(s)  d_\pi \left(\pi_i(\cdot | s) , \widetilde{\pi}^M_k(\cdot | s) \right).
        \end{aligned}
    \end{equation}

\noindent \textbf{Incremental Softmax Meta Learner Update for Value-based Continual RL.} Define the weight function $w^Q_i(s, a) = \mu_i^{Q_i}(s) {\pi^{Q_i} (a|s)}$ for each $i \in [K]$. For any measurable function $f(s, a)$ and weight function $w$ with $\sum_{s, a}w(s, a)=1$ and $w(s, a) \geq 0$ for each $s$ and $a$, we define $\mathbb{E}_{w}\left[f\right] = \sum_{s, a} w(s, a) f(s, a).$  When equipped with the categorical representation, the Q-values can be converted into a Softmax~(Boltzmann) policy, allowing the value-based continual RL to minimize the policy-based catastrophic forgetting objective defined in Eq.~\ref{eq:integration_policy}. Specifically, given a temperature $\tau$, we denote $\pi^{Q_i}(a|s) = \exp \left(Q_i(a|s)/\tau\right) \big / \sum_{a^\prime} \exp  \left( Q_i(a^\prime|s)/\tau\right)$. By employing the KL divergence as $d_\pi$, we derive a concise meta learner update rule in Proposition~\ref{prop:integration_value2policy}.

\begin{prop}[Incremental Softmax Q-Value-based Meta Learner Update]\label{prop:integration_value2policy} Denote $\widetilde{\pi}_k^M(a|s) =  \exp \left(\widetilde{Q}_k^M(a|s)/\tau\right) \big / \sum_{a^\prime} \exp  \left( \widetilde{Q}_k^M(a^\prime|s)/\tau\right)$. After a softmax policy transformation, the Q-value-based meta learner incremental update is written as 
    \begin{align}\label{eq:integration_value2policy}
     Q^M_k = \argmin_{\widetilde{Q}^M_{k}} \sum_{i=1}^{k-1} \EE{w_i^Q}{\log\frac{\pi_{k-1}^{M}}{\widetilde{\pi}_{k}^{M}}} + \EE{w_k^Q}{\log\frac{\pi^{Q_k}}{\widetilde{\pi}_{k}^{M}}}  = \argmax_{\widetilde{Q}^M_{k}} \sum_{i=1}^{k} \EE{w_i^Q}{ \log \widetilde{\pi}_k^M}.
    \end{align}
\end{prop}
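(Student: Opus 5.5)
Starting from the policy-based forgetting objective Eq.~\ref{eq:integration_policy} with $d_\pi$ taken as the KL divergence, I would specialize it to the value-based setting. Each fast learner is replaced by its softmax policy $\pi^{Q_i}$, and the candidate meta learner by $\widetilde{\pi}^M_k$, the softmax of $\widetilde{Q}^M_k$. Writing the state weighting as the weight $w_i^Q(s,a)=\mu_i^{Q_i}(s)\pi^{Q_i}(a|s)$ turns the $i$-th summand
\[
\sum_s \mu_i^{Q_i}(s)\,\mathrm{KL}\bigl(\pi^{Q_i}(\cdot|s)\,\|\,\widetilde{\pi}^M_k(\cdot|s)\bigr)
\]
into exactly $\EE{w_i^Q}{\log(\pi^{Q_i}/\widetilde{\pi}^M_k)}$, because the inner sum over $a$ is weighted by $\pi^{Q_i}(a|s)$. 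The full objective is therefore $\sum_{i=1}^k \EE{w_i^Q}{\log \pi^{Q_i}-\log\widetilde{\pi}^M_k}$.

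\textbf{Second equality.} Each term $\EE{w_i^Q}{\log \pi^{Q_i}}$ is a negative entropy that does not depend on the optimization variable $\widetilde{Q}^M_k$. Dropping these terms leaves the minimization of $-\sum_{i=1}^k\EE{w_i^Q}{\log\widetilde{\pi}^M_k}$, which is the same as the stated argmax. The $k$-th term is already in the displayed form $\EE{w_k^Q}{\log(\pi^{Q_k}/\widetilde{\pi}^M_k)}$.

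\textbf{Incremental form.} For the first $k-1$ terms, I replace $\log \pi^{Q_i}$ by $\log\pi^M_{k-1}$. This substitution only adds or changes terms of the form $\EE{w_i^Q}{\log \pi^{Q_i}}$ or $\EE{w_i^Q}{\log \pi^M_{k-1}}$, and none of these involves $\widetilde{Q}^M_k$. So the argmin over $\widetilde{Q}^M_k$ is unchanged, and both displayed objectives differ by a $\widetilde{Q}^M_k$-independent constant from $-\sum_i\EE{w_i^Q}{\log\widetilde{\pi}^M_k}$. The incremental reading is that $\pi^M_{k-1}$ stands in as a surrogate teacher for the old tasks, which is how the update is implemented without storing $Q_1,\dots,Q_{k-1}$. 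Only the old weights $w_i^Q$ (or samples from them, e.g.\ a small replay) are needed.

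\textbf{Main subtlety.} The one point needing care is making sure this substitution is an exact identity of minimizers rather than an approximation. My plan is to argue purely through $\widetilde{Q}^M_k$-independence of the cross-entropy constants. This requires the weights $w_i^Q$ to be fixed, i.e.\ computed from the earlier $Q_i$ and not from $\widetilde{Q}^M_k$, which holds by Definition~\ref{def:CF}. It does not require any optimality property of $\pi^M_{k-1}$.
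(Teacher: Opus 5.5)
Your proposal is correct and follows essentially the same route as the paper's proof. The paper also rewrites the forward-KL forgetting objective as $\sum_{i=1}^k \EE{w_i^Q}{\log(\pi^{Q_i}/\widetilde{\pi}^M_k)}$, inserts $\pi^M_{k-1}$ into the first $k-1$ terms at the cost of the $\widetilde{Q}^M_k$-independent constant $C=\sum_{i=1}^{k-1}\EE{w_i^Q}{\log(\pi^{Q_i}/\pi^M_{k-1})}$, and drops the numerators to reach the maximum-likelihood form, relying (as you do) only on this constant-shift argument rather than on any optimality property of $\pi^M_{k-1}$.
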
 
The proof of Proposition~\ref{prop:integration_value2policy} is straightforward and is therefore deferred to Appendix~\ref{appendix:proof_integration_value2policy} for completeness. Crucially, minimizing the policy-based catastrophic forgetting in Eq.~\ref{eq:integration_value2policy} is simply equivalent to the Maximum Likelihood Estimator~(MLE) by fitting the meta learner $Q_k^M$ to a mixture of state-action distributions across encountered environments. We highlight that the final simplified objective in Eq.~\ref{eq:integration_value2policy} is independent of $Q_{k-1}^M$ and $Q_k$; however, knowledge integration in principle takes the form of an incremental update rule, which we instantiate in Section~\ref{sec:policy_CF}. 

     \subsubsection{Knowledge Transfer via Adaptive Meta Warm-Up}\label{sec:value_transfer}

        \noindent \textbf{Challenges.} An effective knowledge transfer necessitates rapidly adapting to the new environment by taking advantage of the previous knowledge if accessible. However, the commonly used finetuning is effective when tasks are similar, but can lead to \textit{negative transfer} issue that frequently occurs in continual RL~\citep{ahn2025prevalence,wolczyk2022disentangling}.  The negative 
        transfer, a crucial factor of the \textit{loss of plasticity}~\citep{dohare2024loss}, leads to performance degradation owing to the dissimilarity between the two tasks. Training from scratch~(i.e., reset) is easy to implement to circumvent the negative transfer~\citep{chen2024stable,ahn2025prevalence}. However, this naive warm-up lacks flexibility and fails to make full use of the accumulated knowledge to speed up the adaptation to a new task. 
        

            \noindent \textbf{Adaptive Meta Warm-Up via One-vs-all Hypothesis Test.} When a new task arrives, it is a common strategy to initialize the fast learner with parameters from the meta learner. Nonetheless, knowledge and skills previously acquired in past tasks may become misleading when the new environment differs substantially. For instance, it is particularly evident that humans make incorrect decisions or take suboptimal actions when the new information contradicts earlier experiences.  To harmonize the potentially conflicting objectives, we propose an \textit{adaptive meta warm-up approach} that chooses the most effective warm-up strategy among \textbf{the preceding meta learner, a random learner~(i.e., reset), and the preceding fast learner~(i.e., finetune)}. Formally, the adaptive meta warm-up is framed as a one-vs-all hypothesis test based on policy evaluation during the early interaction with a new environment. When the $k{\text{-th}}$ task arrives, we have access to three types of warm-up learners, including a preceding fast learner $Q_{k-1}$, a meta policy $\pi^M_{k-1}$ with the softmax transformation from $Q_{k-1}^M$, and a random Q function $Q^0$ associated with the policy $\pi^0$. Evaluating the three warm-up learners yields their value functions defined by $V^f_{k} =\mathbb{E}_{\pi_{k-1}}\left[R\right]$,  $V^M_{k} =\mathbb{E}_{\pi^M_{k-1}}\left[R\right]$, and $V^r_{k} =\mathbb{E}_{\pi^0}\left[R\right]$. For each task $k$ that arrives, the one-vs-all hypothesis test with a composite null is expressed as
            \begin{equation}
                \begin{aligned}\label{eq:tranfer_valuetest}
                   H_0: V^M_{k} \leq \max\left\{V^f_{k}, V^r_{k}\right\}  \quad \text{vs.} \quad H_1: V^M_{k} > \max\left\{V^f_{k}, V^r_{k}\right\}.
                \end{aligned}
            \end{equation}
            When the null hypothesis $H_0$ cannot be rejected, we further compare $V^f_{k}$ and $V^r_{k}$ via a common parametric hypothesis test, e.g., $t$-test. In most scenarios, picking the best warm-up strategy according to the empirical ranking often performs favorably. However, in safety-critical scenarios, e.g., autonomous driving, a rigorous statistical test is crucial either by bootstrapping or anytime valid inference~\citep{ramdas2023game} on the adaptively collected dataset used for the policy evaluation.  

            \noindent \textbf{Meta Warm-Up via Behavior Cloning Regularization.} Once we reject $H_0$, we are ready to perform the meta warm-up. However, directly initializing the fast learner $Q_{k}$ via the meta policy $\pi^M_{k-1}$ is infeasible as the meta learner is now represented as a policy instead of a Q function under the update in Proposition~\ref{prop:integration_value2policy}. An easy and effective way to address this policy-to-value transfer mismatch is to impose Behavior Cloning~(BC) regularization in the early training phase, when the meta policy $\pi^M_k$ serves as the expert for data collection and early exploration. Concretely, $Q_k$ is the minimizer of the BC regularized loss $L(Q_k) = L_0(Q_k) + \lambda \mathbb{E}_s \left[\text{KL}(\pi_{k-1}^M(\cdot | s) || \pi^{Q_k}(\cdot|s)) \right]$, where $L_0(Q_k)$ is the original loss to update $Q_k$, such as the MSE or Huber loss in DQN~\citep{mnih2015human}.

        \subsubsection{Algorithm: Value-based FAME}


        \noindent \textbf{Meta Buffer $\mathcal{M}$ in Knowledge Integration.} In the \textit{last} $N$ steps of updating the fast learner in each environment, we additionally store the state-action pairs in a meta learner's buffer $\mathcal{M}$, which are used to approximate $w_i^Q$ for $i \in [k]$ in Eq.~\ref{eq:integration_value2policy}. Note that the stored state-action pairs are only a small portion of the training dataset for each task~(around 1$\%$ or 2$\%$ in our experiments), yielding a moderate size of the meta buffer $\mathcal{M}$. The moderate and fixed size of a meta buffer is crucial in reality, as we are not expected to store too much past data in continual RL. 

\begin{algorithm}[h]
    \centering
	\caption{ Value-based FAME Update in the $k\text{-th}$ Environment}
	\begin{algorithmic}[1] 
		\STATE \textbf{Initialize}: Fast Buffer $\mathcal{F}$, Meta Buffer $\mathcal{M}$, $Q^M_{k-1}$, $Q_{k-1}$, $Q^0$, Warm-Up Step $L$, Estimation Step $N$.
		\STATE {\color{gray} \# Knowledge Transfer: Adaptive Meta Warm-Up }
            \STATE Initialize $Q_k$ in $\{Q_{k-1}, Q_k^M, Q^0\}$ via Eq.~\ref{eq:tranfer_valuetest} within $L$ steps
		\FOR{$t = L$ to $T$}
            \STATE Observe $S_t$, take action $A_t$, receive $R_{t}$, observe $S_{t+1}$
            \STATE Store $(S_t, A_t, R_{t}, S_{t+1})$ in $\mathcal{F}$
            \STATE Update $Q_k$
		\IF{$t > T - N$}  
		\STATE Store $(S_t, A_t)$ in $\mathcal{M}$ {\color{gray} \# To Estimate $w_k^Q$}
		\ENDIF
		\ENDFOR
            \STATE Reset $\mathcal{F}$ 
            \STATE {\color{gray} \#  Knowledge Integration: Minimize Catastrophic Forgetting} \\
		\STATE Update $Q_k^M$ via Eq.~\ref{eq:integration_value2policy} using state-action pairs in $\mathcal{M}$
	\end{algorithmic}
	\label{alg:value}
\end{algorithm}

       \noindent \textbf{Algorithm.} Denote the buffer of the fast learner as $\mathcal{F}$ and $Q^0$ as the randomly initialized Q function. We denote $T$ as the timesteps in each environment. As suggested in Algorithm~\ref{alg:value}, when the $k\text{-th}$ environment arrives, we warm start the fast learner $Q_k$ via the adaptive meta warm-up strategy among the preceding meta learner $Q_{k-1}^M$, the preceding fast learner $Q_{k-1}$ and a random learner $Q^0$ (i.e., reset) within the first $L$ steps. The adaptive meta warm-up makes full use of previous information to perform an adaptive knowledge transfer. Once the $k\text{-th}$  task ends, the knowledge integration phase starts, when the meta learner $Q_k^M$ is updated via Eq.~\ref{eq:integration_value2policy} on the data collected in the meta buffer $\mathcal{M}$. The meta learner $Q_k^M$ incorporates the acquired knowledge in $Q_k$ into $Q_{k-1}^M$ via an incremental update rule in principle.

        \subsection{Policy-based Continual RL with Continuous Action Spaces}

            \subsubsection{Knowledge Integration: Catastrophic Forgetting Minimization Principle}\label{sec:policy_CF}   
    As opposed to the value-based continual RL with an incremental softmax meta learner updates in Proposition~\ref{prop:integration_value2policy}, in policy-based continual RL, we directly minimize the policy-based catastrophic forgetting in Eq.~\ref{eq:integration_policy} regarding the parameterized policy function. The detailed incremental update rule depends on \textbf{the choice of $d_\pi$ and how we represent the policy} in a continuous action space. Next, we will introduce two variants of policy-based continual RL methods when equipped with the forward KL divergence and Wasserstein distance, respectively. 

    \noindent \textbf{Method 1~(FAME-KL): Policy Distillation under Forward KL Divergence.} We show that the policy-based knowledge integration reduces to a form of policy distillation. Analogous to Proposition~\ref{prop:integration_value2policy} in the value-based continual RL, the policy-based knowledge integration in Eq.~\ref{eq:integration_policy}, when instantiated with the forward KL divergence under an accessible probabilistic policy, yields an update rule as
  \begin{equation}
        \begin{aligned}
        \label{eq:integration_policy_KL}
         \pi^M_k & = \argmax_{\widetilde{\pi}^M_k} \sum_{i=1}^{k} \EE{w_i}{\log \widetilde{\pi}_k^M},
        \end{aligned}
    \end{equation}
    where we recall that $w_i(s, a) = \mu^{\pi_i}_i(s) \pi_i(a|s)$ denotes the policy-based steady state-action distribution on the $i$-th environment. Importantly, the policy-based knowledge integration objective above coincides with the knowledge distillation update used in policy distillation~\citep{rusu2015policy} and with typical multi-task RL formulations~\citep{teh2017distral}, establishing an intriguing connection. 

        \noindent \textbf{Method 2~(FAME-WD): Wasserstein Distance~(WD)-based Knowledge Integration.} Although the KL divergence is often adopted for its computational simplicity, the Wasserstein distance is preferable when comparing more complex policy distributions. In contrast to the KL divergence, which only measures pointwise differences between distributions, the Wasserstein distance by definition explicitly accounts for the underlying geometry of the data space, such as the probability space of the policy outputs, and therefore provides a more informative notion of distributional discrepancy~\citep{panaretos2019statistical,arjovsky2017wasserstein}. In Proposition~\ref{prop:integration_policy_wasserstein}, we derive the policy-based incremental update rule under the Wasserstein distance. Particularly, when the policy function is represented by a (multivariate) Gaussian distribution, as is common in many policy-based algorithms, a closed-form incremental update rule becomes available, enabling efficient knowledge integration for the meta learner. The proof is given in  Appendix~\ref{appendix:proof_integration_policy_wasserstein}. 
        \begin{prop}[Incremental Policy-based Meta Learner Update under Wasserstein Distance]\label{prop:integration_policy_wasserstein} Consider $d_\pi$ to be the squared 2-Wasserstein distance denoted by $W_2^2$ in Eq.~\ref{eq:CF_pi} of Definition~\ref{def:CF}. The policy is represented as an independent (multivariate) Gaussian distribution over the action $a$. Minimizing policy-based catastrophic forgetting in Eq.~\ref{eq:integration_policy} is equivalent to:
        \begin{equation}
    \begin{aligned}\label{eq:integration_policy_wasserstein}
        \pi^{\text{M}}_k & = \argmin_{\widetilde{\pi}^M_k} \left\{ \sum_{i=1}^{k-1}  \sum_{s} \mu_{i}^{\pi_i}(s) W_2^2\left(\widetilde{\pi}^M_k(\cdot|s), \pi^M_{k-1}( \cdot |s)\right) +  \sum_{s} \mu_{k}^{\pi_k}(s) W_2^2\left(\widetilde{\pi}^M_{k}(\cdot|s), \pi_k(\cdot|s) \right)  \right\}. \\
    \end{aligned}
\end{equation}
    \end{prop}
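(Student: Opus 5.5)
The plan is to exploit the closed form of the squared $2$-Wasserstein distance between Gaussians with diagonal covariances. This makes the objective in Eq.~\ref{eq:integration_policy} a \emph{weighted least-squares} problem in the Gaussian parameters, separately at each state. The incremental form in Eq.~\ref{eq:integration_policy_wasserstein} will then follow from the usual bias--variance (parallel-axis) decomposition of a sum of squared deviations. Throughout, I assume the policy class is expressive enough, e.g.\ tabular in $s$, that the minimization can be carried out pointwise in $s$. I also assume $\pi^M_{k-1}$ is the \emph{exact} minimizer of the $(k-1)$-task objective in Eq.~\ref{eq:integration_policy}, which is how the incremental rule is to be read.

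\textbf{Step 1 (closed form).} Write $\pi(\cdot|s)=\mathcal{N}(m(s),\mathrm{diag}(\sigma^2(s)))$. For two Gaussians with commuting covariances, the Bures term reduces to a sum of squared differences of standard deviations, so
\begin{equation*}
W_2^2\bigl(\mathcal{N}(m,\mathrm{diag}(\sigma^2)),\mathcal{N}(m',\mathrm{diag}(\sigma'^2))\bigr)=\|m-m'\|_2^2+\|\sigma-\sigma'\|_2^2 = \|\theta-\theta'\|_2^2,
\end{equation*}
with $\theta=(m,\sigma)$. Consequently, $W_2^2$ restricted to this family is a squared Euclidean distance in the parameter $\theta(s)$.

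\textbf{Step 2 (pointwise minimizer).} Substituting into Eq.~\ref{eq:integration_policy}, the $k$-task objective becomes $\sum_s \sum_{i=1}^k \mu_i^{\pi_i}(s)\,\|\tilde\theta(s)-\theta_i(s)\|^2$. This is minimized state by state, and on $\{s:\sum_i\mu_i^{\pi_i}(s)>0\}$ the minimizer is the weighted barycenter $\bar\theta_k(s)=\sum_{i\le k}\mu_i^{\pi_i}(s)\theta_i(s)\big/\sum_{i\le k}\mu_i^{\pi_i}(s)$. Since every $\sigma_i>0$, the weighted average of the $\sigma_i$ is again positive, so the barycenter is a valid Gaussian and no constraint is active. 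The same argument at stage $k-1$ identifies $\theta^M_{k-1}(s)=\bar\theta_{k-1}(s)$.

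\textbf{Step 3 (incremental rewriting).} Apply the parallel-axis identity
\begin{equation*}
\sum_{i=1}^{k-1}\mu_i^{\pi_i}(s)\|\tilde\theta-\theta_i\|^2=\Bigl(\sum_{i=1}^{k-1}\mu_i^{\pi_i}(s)\Bigr)\|\tilde\theta-\bar\theta_{k-1}(s)\|^2+C(s),
\end{equation*}
where $C(s)$ does not depend on $\tilde\theta$; this holds because the cross term vanishes by definition of $\bar\theta_{k-1}$. Replacing $\|\tilde\theta-\bar\theta_{k-1}(s)\|^2$ by $W_2^2(\tilde\pi^M_k(\cdot|s),\pi^M_{k-1}(\cdot|s))$ via Step~1, and dropping $\sum_s C(s)$, yields exactly Eq.~\ref{eq:integration_policy_wasserstein}. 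The two objectives therefore differ by a constant and share the same argmin.

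\textbf{Main obstacle.} The delicate point is the identification $\pi^M_{k-1}=$ barycenter, i.e.\ that the previous meta learner exactly solves its own objective. The argument needs both expressiveness of the policy class and care at states where $\sum_{i<k}\mu_i^{\pi_i}(s)=0$. At such states the first term in Eq.~\ref{eq:integration_policy_wasserstein} vanishes, so the rewriting still holds trivially regardless of how $\pi^M_{k-1}$ is defined there. I would state these as standing assumptions, in the same spirit as the value-based incremental rule in Appendix~\ref{appendix:proof_integration_value}. I would also check Step~1 carefully, since the Bures formula simplifies only because the diagonal covariances commute; this is why the statement requires an \emph{independent} Gaussian policy.
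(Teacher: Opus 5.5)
Your proposal is correct and follows essentially the same route as the paper. Both arguments reduce $W_2^2$ between diagonal Gaussians to a squared Euclidean distance in (mean, standard deviation), then expand the first $k-1$ terms around $\pi^M_{k-1}$ so that the cross term vanishes by the optimality condition of the previous meta learner; this is exactly your parallel-axis identity. Your explicit standing assumptions ($\pi^M_{k-1}$ being the exact pointwise minimizer, zero-weight states, positivity of the averaged standard deviations) are left implicit in the paper and do not change the argument.
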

 \subsubsection{Knowledge Transfer via Adaptive Meta Warm-Up}

            For the adaptive meta warm-up in policy-based RL, we perform policy evaluation across the first $L$ steps and conduct the one-vs-all hypothesis test in Eq.~\ref{eq:tranfer_valuetest} when a new task arrives, which is similar to value-based continual RL. Once we determine the best-performing warm-up policy among the fast policy $\pi_{k-1}$, the meta policy $\pi_{k-1}^M$, and a random policy $\pi^0$, we directly initialize the fast policy. Using parameter initialization as the meta warm-up strategy is more convenient for deployment than adding the BC regularization used in the value-based continual RL introduced in Section~\ref{sec:value_transfer}.
            
\noindent \textbf{Algorithm.} Since the description of the policy-based FAME algorithm closely parallels Algorithm~\ref{alg:value}, we defer it with a discussion of computational cost and practical guidance to Appendix~\ref{appendix:algorithm_policy}.
We also provide a general discussion of FAME framework in Appendix~\ref{appendix:discussion}.

\section{Experiments}\label{sec:experiment}

    In this section, we validate our FAME approach across a sequence of tasks from multiple environments and domains, including the pixel-based tasks with a discrete action space in Section~\ref{sec:experiment_value} and control problems with a continuous action space in Section~\ref{sec:experiment_policy}. The central hypothesis is that the interplay between knowledge transfer and knowledge integration of the fast and meta learners in FAME benefits both forward transfer~(i.e., plasticity) and catastrophic forgetting~(i.e., stability).

    \noindent \textbf{Evaluation Metrics.} We employ the standard metrics~\citep{wolczyk2021continual, wolczyk2022disentangling} in continual RL to evaluate \textit{average performance}, \textit{forgetting} to measure stability, and \textit{forward transfer} to quantify plasticity. Let $p_i(t)$ be the success rate or average returns in task $i$  by using the policy at time $t$ with $t \in [K \cdot T]$, where $K$ is the number of environments, and $T$ is the total timesteps in each task. $p_i(t)$ is task-specific with $p_i(t) \in \mathbb{R}$ for our pixel-based tasks and  $p_i(t) \in [0, 1]$ in our control tasks. 
    \begin{itemize}[itemindent=\parindent,leftmargin=*, topsep=0pt, partopsep=0pt]
        \item \textbf{Average Performance.} The average performance is evaluated on the policy at time $t$ across all $K$ tasks by $P_K(t) = \frac{1}{K} \sum_{i=1}^K p_i(t)$. By default, the average performance is calculated on the final policy when $t = K \times T$. For FAME, this metric is calculated on the meta learner.

        \item \textbf{Forward Transfer~(FT)}: The forward transfer is defined as the normalized area between the training curve of the considered algorithm and the baseline. Namely, $\text{FT}=\frac{1}{K} \sum_{i=1}^K \mathrm{FTr}_i$ with 
        \begin{equation}
\mathrm{FTr}_i=\frac{\mathrm{AUC}_i-\mathrm{AUC}_i^b}{1-\mathrm{AUC}_i^b}, \quad \mathrm{AUC}_i=\frac{1}{\Delta} \int_{(i-1) \cdot \Delta}^{i \cdot \Delta} p_i(t) \mathrm{d} t, \quad 
\mathrm{AUC}_i^b=\frac{1}{\Delta} \int_{(i-1)\Delta}^{i \Delta} p_i^b(t) \mathrm{d} t.
        \end{equation}
         To evaluate this metric in pixel-based tasks, we first normalize $p_i(t)$ in each task to ensure $\mathrm{AUC}_i \in [0, 1]$ and then we calculate a normalized metric of the forward transfer. 
         
        \item \textbf{Forgetting~(F)}: Forgetting is the performance difference between the policy at the end of a task and after the whole sequence of tasks. Namely, $F = \frac{1}{K} \sum_{i=1}^K F_i$ with $F_i=p_i(i \cdot T)-p_i(K \cdot T)$.
        
    \end{itemize}

    \noindent \textbf{Experimental Setup.} \textbf{(1)} For the pixel-based tasks, we perform experiments on MinAtar and Atari games~\citep{young2019minatar,bellemare2013arcade}. MinAtar is a commonly used continual RL benchmark~\citep{anand2023prediction,tang2025mitigating} with relatively lighter computational requirements, allowing us to sweep a range of hyperparameters, report statistical results averaged over 30 seeds, and probe the mechanism and advantages of our proposal. We employ DQN~\citep{mnih2015human} in breakout, freeway, and spaceinvaders games, and run for 3.5M steps by randomly choosing each of the three games every 500k steps, i.e., 7 tasks in each sequence. We study two sequences of Atari games following ~\citep{malagon2024self}, with the 10 playing modes of the \textit{ALE/SpaceInvaders-v5} environment and 7 playing modes of the \textit{ALE/Freeway-v5} environment. We run Proximal Policy Optimization~(PPO)~\citep{schulman2017proximal} and each task is run for $ 1 M$ timesteps. \textbf{(2)} For the robotics arm manipulation tasks, we employ Meta-World~\citep{yu2020meta}, a standard and established benchmark commonly employed in continual RL~\citep{malagon2024self, chung2024parseval,ahn2025prevalence}. Note that the traditional Continual World~\citep{wolczyk2021continual} is built on top of Meta-World with a specific sequence of tasks (e.g., CW10 or CW20). Instead, we evaluate on 3 randomly selected task sequences following \citep{chung2024parseval} in Meta-World~(see Appendix~\ref{appendix:experiment_policy_setup}), offering a more flexible and robust evaluation. We deploy the Soft Actor-Critic (SAC) algorithm~\citep{zhang2020combine,haarnoja2018soft} with $ 1 M$ timesteps on each task with 10 tasks in each sequence.

\subsection{Pixel-based Environments with Discrete Action Spaces}\label{sec:experiment_value}

\noindent \textbf{Comparison Methods.} \textbf{(1)} For MinAtar, we follow~\citep{anand2023prediction} and compare our \texttt{FAME} approach with DQN~(\texttt{Reset}), DQN-Finetune~(\texttt{Finetune}), DQN with a large buffer~(\texttt{LargeBuffer}), DQN with multi-heads that knows the task identity~(\texttt{MultiHead}), \texttt{PT-DQN}~\citep{anand2023prediction}. Both fast and meta learners in our FAME method employ the same DQN architecture. (2) For Atari games, we add \texttt{PackNet}~\citep{mallya2018packnet} and \texttt{ProgressiveNet}~\citep{rusu2016progressive} as baselines. Both fast and meta learners in FAME adopt the same PPO architecture. Except for \texttt{Finetune}, we reset the parameters of all baseline methods when each new environment arrives. By contrast, \texttt{FAME} applies the adaptive meta warm-up among fast, random initialization, and initial learning with behavior cloning regularization in Section~\ref{sec:value_transfer}. More details of our experimental setup and hyperparameters are given in Appendix~\ref{appendix:experiment_value_setup}.

\begin{table}[htbp]
\vspace{-8pt}
\centering
\caption{\footnotesize  Main results on \textbf{MinAtar} on Average Performance~(\textit{Avg. Perf}), Forward Transfer~(\textit{FT}), and Forgetting. Results (Mean $\pm$ SE) are averaged over 10 sequences, each with 3 seeds. $\uparrow$ denotes a positive metric~(more is better), while $\downarrow$ is a negative one~(less is better). \texttt{Reset} is the baseline for evaluating FT. Forgetting is normalized by the standard deviation in each task. \normalsize} 
\label{tab:minatar}
\vspace{-3pt}
\scalebox{0.8}{
\begin{tabular}{l|ccc|c|c}
    \toprule
    \multirow{2}{*}{\textbf{Method}}  & \multicolumn{3}{c}{\textbf{Ave. Perf} $\uparrow$}    & \multirow{2}{*}{\textbf{FT} $\uparrow$} & \multirow{2}{*}{\textbf{Forgetting} $\downarrow$}  \\
    ~&  Breakout & Spaceinvader & Freeway & ~ &  \\
    \midrule
    Reset & 6.51 $\pm$ 1.67 & 3.29 $\pm$ 3.09 & 0.74 $\pm$ 0.38  & 0.00 $\pm$ 0.00 & 1.31 $\pm$ 0.23 \\
    Finetune & 10.62 $\pm$ 2.75 & 4.95 $\pm$ 2.92 & 0.89 $\pm$ 0.49  & 0.13 $\pm$ 0.03 & 1.26 $\pm$ 0.32 \\
    MultiHead & 6.85 $\pm$ 1.76 & 3.26 $\pm$ 2.99 & 0.94 $\pm$ 0.42  & -0.01 $\pm$ 0.00 &  1.25 $\pm$ 0.22 \\
    LargeBuffer & 10.71 $\pm$ 2.84 & 3.24 $\pm$ 2.91 & 1.16 $\pm$ 0.59  & \textbf{0.16 $\pm$ 0.02}  & 1.65 $\pm$ 0.33  \\
    PT-DQN & 0.39 $\pm$ 0.02 & 0.00 $\pm$ 0.00 & 0.00 $\pm$ 0.00  & 0.07 $\pm$ 0.02 & 1.64 $\pm$ 0.02 \\
     \midrule
    FAME  & \textbf{14.54 $\pm$ 0.58 }  & \textbf{18.72 $\pm$ 0.52} & \textbf{1.69 $\pm$ 0.17}& \textbf{0.16 $\pm$ 0.03} & \textbf{0.72 $\pm$ 0.13} \\
    \bottomrule
\end{tabular}
}
\vspace{-5pt}
\end{table}

\noindent \textbf{Main Results: MinAtar.}  Table~\ref{tab:minatar} summarizes the metric scores of all methods, demonstrating that  \texttt{FAME} consistently outperforms other baselines in improving knowledge transfer and retaining all knowledge to mitigate catastrophic forgetting. Notably, for the average performance, \texttt{FAME} is most stable with minimal variations among all algorithms except for \texttt{PT-DQN}, for which the \textit{permanent value function}~(i.e., the counterpart of the meta learner) in \citep{anand2023prediction} has limited capability to retain the knowledge and thus keeps almost zero average performance. Regarding the forward transfer, \texttt{LargeBuffer} performs similarly to \texttt{FAME} as storing more past knowledge also contributes to adapting to a known environment. A detailed sensitivity analysis about $\lambda$, Warm-Up step $L$, and Estimation step $N$ is provided in Appendix~\ref{appendix:experiment_value_ablation}.

\begin{figure}[b!]
 \vspace{-10pt}
\begin{center}
     \begin{subfigure}[b]{0.65\textwidth}
        \includegraphics[width=1\textwidth,trim=90 30 50 0,clip]{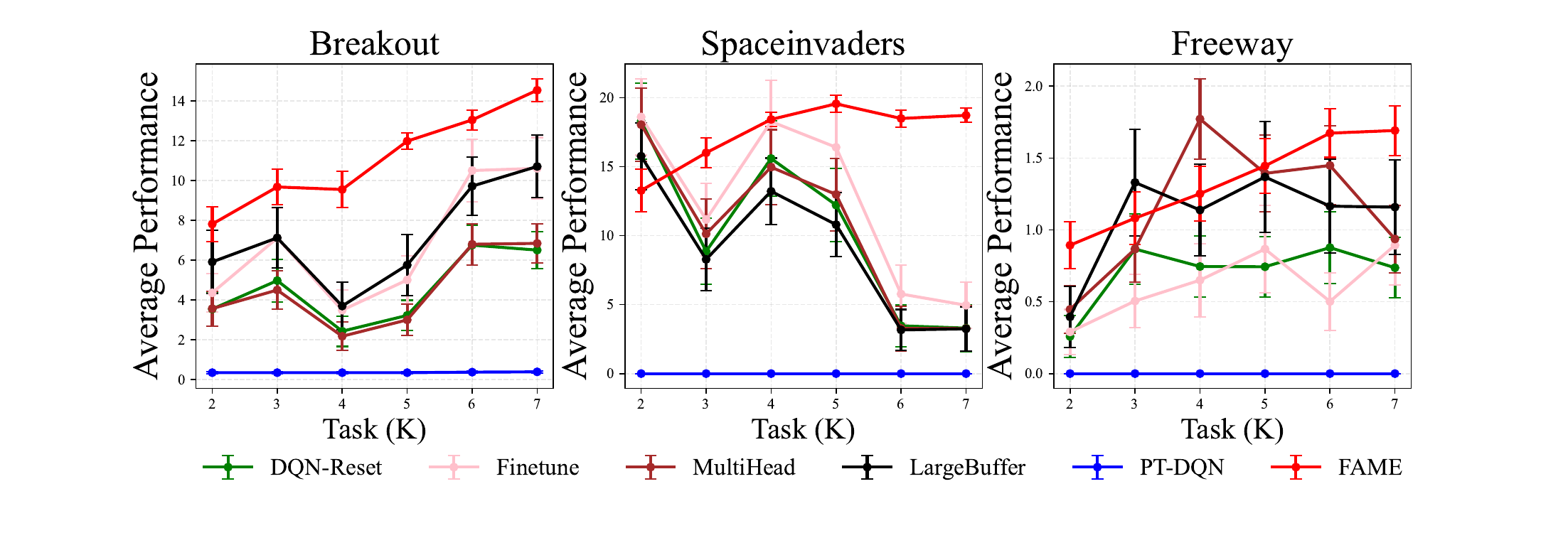}
    \end{subfigure}
        \begin{subfigure}[b]{0.33\textwidth}
        \includegraphics[width=1\textwidth,trim=30 30 20 20,clip]{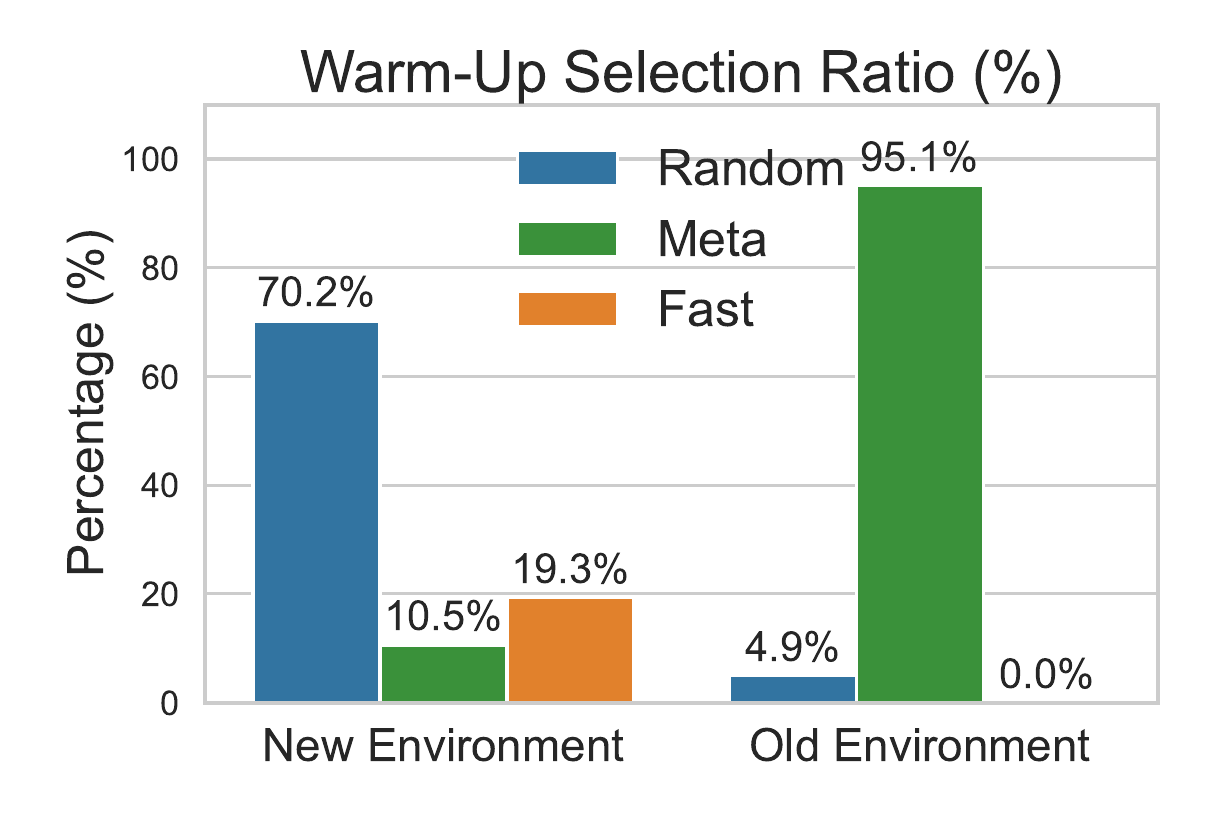}
    \end{subfigure}
        \end{center}
        \caption{\footnotesize \textbf{(Left)} Average performance of the policy across each task across 10 sequences on \textbf{MinAtar}. Results are averaged over 3 seeds. The vertical lines at each point represent the standard errors. \textbf{(Right)} The selection ratio among three warm-up strategies when the arriving environment is previously encountered or novel.\normalsize}
        \vspace{-10pt}
        \label{fig:experiment_value_auxillary}
\end{figure}

\noindent \textbf{Performance of Knowledge Integration.} Figure~\ref{fig:experiment_value_auxillary} (left) presents the average performance of all methods at the end of each task, reflecting the tendency of catastrophic forgetting. It turns out that \texttt{FAME} achieves the highest average performance in the whole training process in most cases, validating the effectiveness of the meta learner in retaining information through knowledge integration.

\noindent \textbf{Performance of Adaptive Meta Warm-Up in Knowledge Transfer.} Figure~\ref{fig:experiment_value_auxillary} (right) exhibits the warm-up selection ratio when the agent encounters different types of arriving environments, revealing the adaptive mechanism. Concretely, if the agent has already stored relevant data previously in $\mathcal{M}$ about the arriving environment, the meta warm-up is chosen with a $95.1\%$ probability. When a new task occurs against the agent's knowledge, the random initialization is more commonly selected in the adaptive meta warm-up.  The learning curves of all algorithms are given in Appendix~\ref{appendix:experiment_value_learningcurves}.

\begin{wraptable}{r}{0.58\textwidth}
\vspace{-10pt}
\centering
\caption{\footnotesize  Main results on \textbf{Atari games} on Average Performance~(\textit{Avg. Perf}) and Forward Transfer~(\textit{FT}). Results (Mean $\pm$ SE) are averaged over 3 seeds. The Forgetting metric is omitted as \texttt{PackNet} and \texttt{ProgressiveNet} store past model parameters and have zero forgetting.  \texttt{Reset} is the baseline for FT.  \normalsize} 
\label{tab:atari}
\vspace{-5pt}
\scalebox{0.9}{
\begin{tabular}{l|c|c|c|c}
\toprule
 \multirow{2}{*}{\textbf{Method}}  & \multicolumn{2}{c}{\textbf{Freeway}}    & \multicolumn{2}{c}{\textbf{SpaceInvader}}  \\
~ & \textbf{Avg. Perf $\uparrow$} & \textbf{FT $\uparrow$} & \textbf{Avg. Perf $\uparrow$} & \textbf{FT $\uparrow$} \\
\midrule
Reset & 0.16 $\pm$ 0.18 & 0.00 & 0.10 $\pm$ 0.22 & 0.00 \\
Finetune & 0.21 $\pm$ 0.17 & 0.53 & 0.61 $\pm$ 0.41 & \textbf{0.65} \\
ProgressiveNet & 0.39 $\pm$ 0.25 & 0.21 & 0.61 $\pm$ 0.03 & 0.06 \\
PackNet & 0.41 $\pm$ 0.24 & 0.18 & 0.47 $\pm$ 0.06 & 0.17 \\
\midrule
FAME & \textbf{0.90 $\pm$ 0.12} & \textbf{0.68} & \textbf{0.96 $\pm$ 0.02} & 0.63 \\
\bottomrule
\end{tabular}
}
\vspace{-6pt}
\end{wraptable}

\noindent \textbf{Main Results: Atari games.}  We further compare \texttt{FAME} with more baselines on two sequences of Atari games: 
\textit{ALE/SpaceInvaders-v5} and \textit{ALE/Freeway-v5}. Table~\ref{tab:atari} showcases that  \texttt{FAME} outperforms all baselines in terms of average performance and forward transfer.
While \texttt{Finetune} also benefits from forward transfer, especially on SpaceInvader, where its forward transfer is comparable to \texttt{FAME}, we hypothesize that this advantage arises from the similar underlying environmental dynamics and objectives in each sequence of tasks, despite the distinct playing modes. We also provide the learning curves of all considered algorithms in Appendix~\ref{appendix:experiment_value_learningcurves}.


\subsection{Robotic Manipulation Tasks with Continuous Action Spaces}\label{sec:experiment_policy}

\noindent \textbf{Comparison Methods.} (1) \texttt{Reset}; (2) \texttt{FineTune}; (3) \texttt{Average}: we average the Temporal Difference~(TD) targets among all past tasks in evaluating the critic loss; (4) \texttt{PackNet}; (5) \texttt{FAME-KL}: we employ knowledge integration under KL in Eq.~\ref{eq:integration_policy_KL}~(Method 1); (6) \texttt{FAME-WD}: we apply knowledge integration under Wasserstein distance in Eq.~\ref{eq:integration_policy_wasserstein}~(Method 2). All methods share the same network architecture as standard SAC.  In adaptive meta warm-up, we perform the policy evaluation for 10 episodes among a random policy, the preceding fast policy, and the meta policy. Then we initialize the fast policy with the best-performing one. The collected data in evaluation is also stored in the fast learner’s replay buffer $\mathcal{F}$ without incurring additional interaction costs with the environment. More experimental details of our \texttt{FAME} methods (5) and (6) are provided in Appendix~\ref{appendix:experiment_policy_setup}. 

\begin{wraptable}{r}{0.65\textwidth}
\vspace{-15pt}
\centering
\caption{\footnotesize  Main results on \textbf{Meta-World} on Average Performance~(\textit{Ave. Perf}), Forward Transfer~(\textit{FT}), and Forgetting averaged over 3 sequences. Results are presented as averages and standard errors across 10 seeds.
\normalsize} 
\scalebox{0.9}{
\begin{tabular}{l|c|c|c}
    \toprule
    \textbf{Methods}  & \textbf{Avg. Perf} $\uparrow$ & \textbf{FT} $\uparrow$ & \textbf{Forgetting} $\downarrow$ \\
    \midrule
    Reset  & 0.093 $\pm$ 0.017  & 0.000 $\pm$ 0.000 & 0.710 $\pm$ 0.030 \\
    Finetune  & 0.037 $\pm$ 0.011  & -0.265 $\pm$ 0.028 & 0.427 $\pm$ 0.033 \\
    Average & 0.013 $\pm$ 0.007  & -0.530 $\pm$ 0.024 & 0.070 $\pm$ 0.022  \\
     PackNet & 0.491 $\pm$ 0.025 & -0.194 $\pm$ 0.018 & \textbf{0.000 $\pm$ 0.000} \\
    \hline
     FAME-KL  & 0.733 $\pm$ 0.026 & \textbf{0.022 $\pm$ 0.015} & 0.073 $\pm$ 0.019 \\
    FAME-WD   & \textbf{0.767 $\pm$ 0.024} & -0.003 $\pm$ 0.014 & 0.023 $\pm$ 0.015 \\
    \bottomrule
\end{tabular}
}
\label{tab:metaworld}
\vspace{-1pt}
\end{wraptable}

\noindent \textbf{Main Results.} As exhibited in Table~\ref{tab:metaworld}, both \texttt{FAME-KL} and \texttt{FAME-WD} outperform most baselines significantly. \texttt{PackNet} achieves zero forgetting by storing (masks of) past model parameters and knowing the task identifiers and number in advance, which is less practical in real scenarios. By contrast, the superior forward transfer of FAME indicates that the adaptive meta warm-up fosters the fast learner to adapt to a new environment by leveraging prior knowledge from the meta learner. Moreover, the highest average performance and almost minimal forgetting of our \texttt{FAME} approaches highlight that the meta learner consolidates all past knowledge by conducting incremental updates in knowledge integration. 

\begin{wrapfigure}{r}{0.7\textwidth}
  \vspace{-15pt}
    \begin{center}
        \begin{subfigure}[b]{0.65\textwidth}
            \includegraphics[width=1\textwidth]{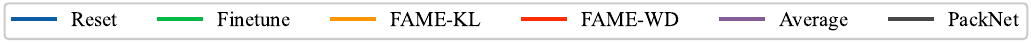}
        \end{subfigure}
        \begin{subfigure}[b]{0.34\textwidth}
            \includegraphics[width=1\textwidth,trim=0 0 0 0,clip]{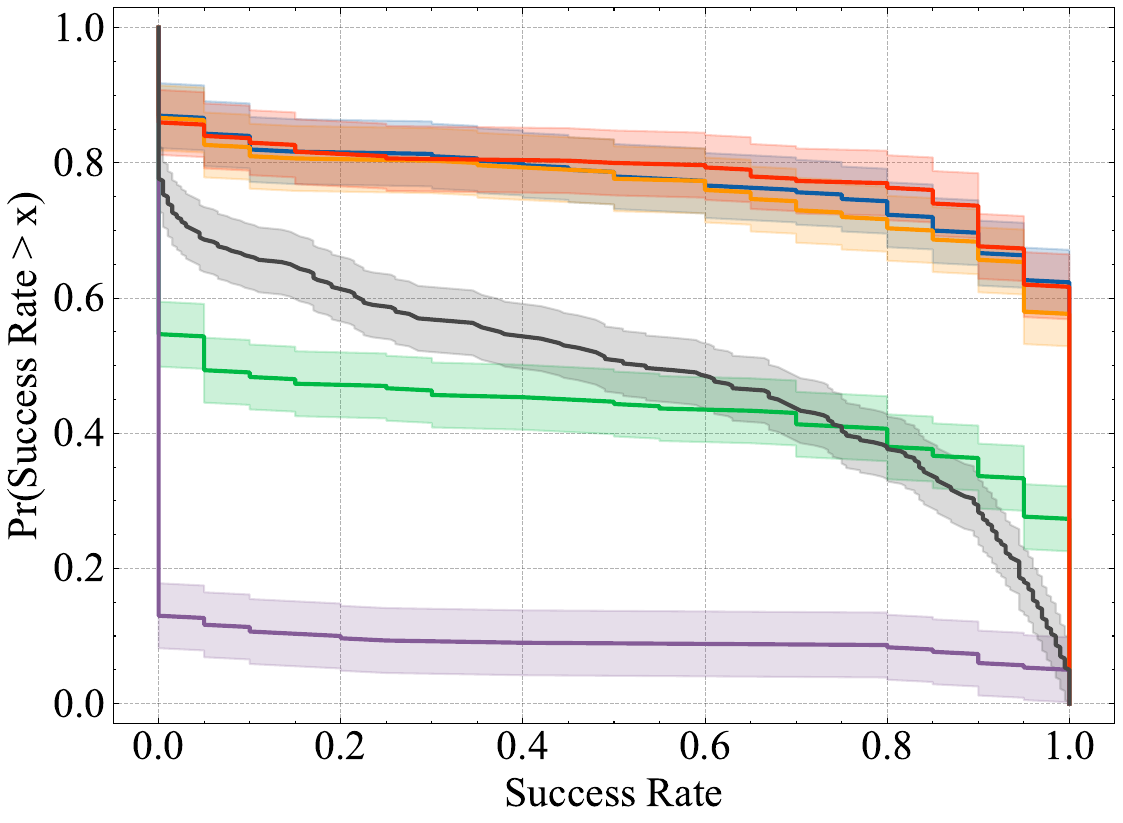}
        \end{subfigure}
                \begin{subfigure}[b]{0.34\textwidth}
            \includegraphics[width=1\textwidth,,trim=0 0 0 0,clip]{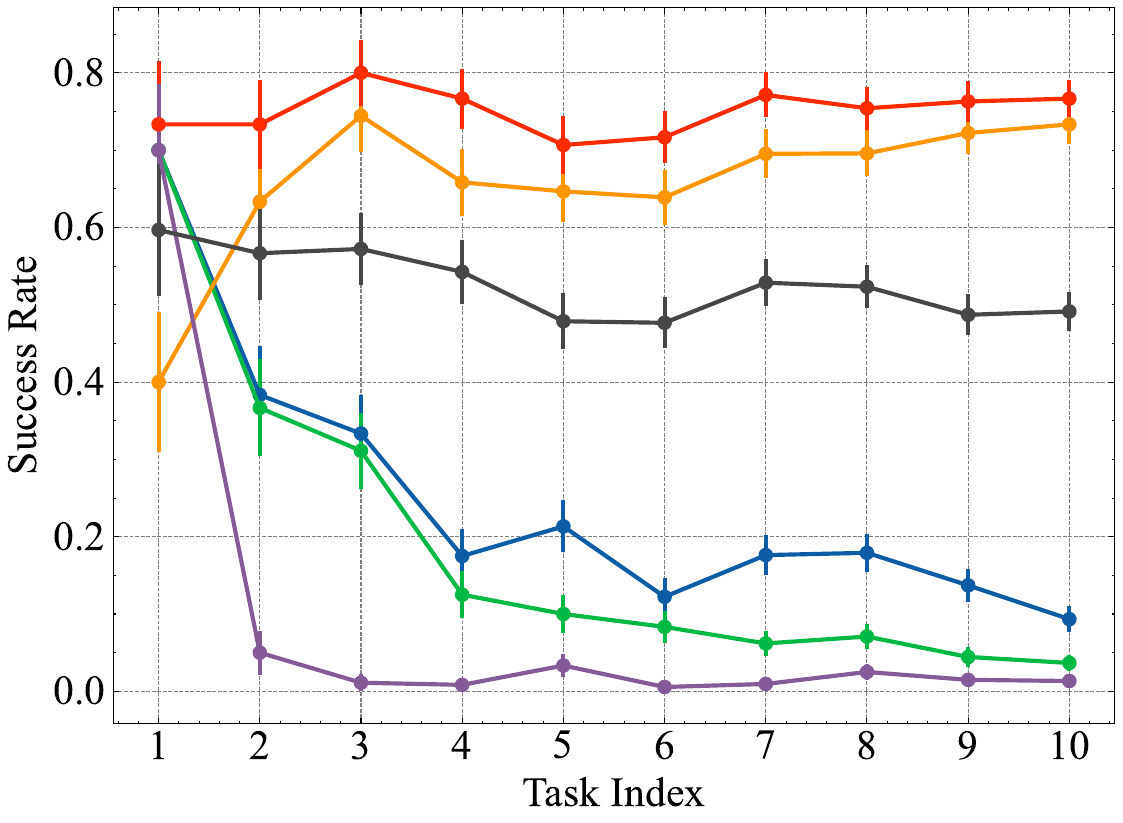}
        \end{subfigure}
            \end{center}
            \vspace{-6pt}
     \caption{\footnotesize \textbf{(Left)} Performance profile of the fast learner across tasks, where the y-axis shows the proportion of tasks that achieve a success rate greater than or equal to the x-axis value. \textbf{(Right)} Average performance by evaluating the average success rates in past tasks across 10 seeds.\normalsize}
     \label{fig:profile_performance}
     \vspace{-10pt}
\end{wrapfigure}

\noindent \textbf{Performance of Knowledge Transfer.} To comprehensively verify the knowledge transfer benefit due to the adaptive meta warm-up in \texttt{FAME}, we present the performance profile~\citep{agarwal2021deep} that reflects the overall performance of the fast learner over the whole sequence of tasks. Figure~\ref{fig:profile_performance} (left) suggests that both \texttt{FAME} methods outperform all baselines, substantiating that the meta learner effectively consolidates knowledge and enhances knowledge transfer. Learning curves and performance profiles for each sequence are also given in Appendix~\ref{appendix:experiment_policy_results}. 

\noindent \textbf{Performance of Knowledge Integration.} To reflect the tendency of the catastrophic forgetting of \texttt{FAME}, we also illustrate the average performance of the meta learner at the end of each task and then take the average over 3 sequences. As suggested in Figure~\ref{fig:profile_performance} (right), \texttt{FAME-KL} and \texttt{FAME-MD} enjoy the highest average performance over time across all encountered tasks.

\section{Discussions and Conclusion}\label{sec:conclusion}

In this paper, we contribute to the foundation of continual RL and develop a novel dual-learner algorithm to conduct the knowledge transfer and integration via the coupled update of fast and meta knowledge learners. Two ideas might be worth reemphasizing here. (1) Adaptively selecting practical prior knowledge~(e.g.,  via the hypothesis test) is crucial to overcoming the negative transfer issue. (2) Deriving an incremental update rule based on existing multi-task learning objectives is necessary to connect continual and multi-task RL. 

\noindent \textbf{Limitations and Future Work}. In this study, a meta learner is utilized to retain all knowledge. Alternatively, it is also possible to learn an effective latent representation that can not only distill all knowledge but also perform efficient reasoning to guide the adaptation to a new environment. Beyond the proposed adaptive meta warm-up, more techniques in knowledge transfer can be explored in the future, such as context embedding.  Lastly, extending our algorithm to the full continual RL context without knowing the task boundary (possibly by developing online one-vs-all hypothesis test) or assuming the same state and action spaces~\citep{hu2024solving} is also valuable for practitioners.


\paragraph{Ethics Statement.} This study focuses on theoretical and algorithmic aspects of continual reinforcement learning, without involving human subjects, personal data, or sensitive applications. We acknowledge that continual learners deployed in real-world systems could, without careful control and monitoring, exhibit unexpected behaviors due to forgetting or transfer misalignment. However, our study remains purely theory- and algorithm-oriented, and we therefore do not foresee any direct ethical concerns arising from this research.


\section*{Acknowledgements}

Linglong Kong was partially supported by grants from the Canada CIFAR AI Chairs program, the Alberta Machine Intelligence Institute (AMII), the Natural Sciences and Engineering Council of Canada (NSERC), and the Canada Research Chair program from NSERC. Hongming Zhang was supported by the National Key Laboratory of Cognition and Decision Intelligence for Complex Systems, Institute of Automation, Chinese Academy of Sciences (Grant No. E5SPFZ0112). We also thank all the constructive suggestions and comments from the reviewers and area chairs.

\bibliography{ContinualRL}
\bibliographystyle{iclr2026_conference}

\clearpage
\appendix
\addcontentsline{toc}{section}{Appendix} 
\part{Appendix} 
\parttoc 

\clearpage
 \section{Related Work}\label{appendix:relatework}
    \paragraph{Measure of MDP Distance and Catastrophic Forgetting.} Although in a different perspective, Bisimulation metrics~\citep{ferns2014bisimulation} provides additional evidence to support the strategy that optimal value functions can be naturally used to measure the state similarity.

    \paragraph{Continual RL.} Continual RL~\citep{khetarpal2022towards,abel2023definition} addresses the challenges in learning a sequence of decision-making tasks, particularly balancing the stability (i.e., mitigating catastrophic forgetting) and plasticity~(i.e., rapid adaptation to a new environment). Below, we categorize the existing literature into two groups based on their focus and make a detailed comparison with our contribution in this study.
    
    \begin{itemize}[itemindent=\parindent,leftmargin=*, topsep=0pt, partopsep=0pt]
        \item \textbf{Catastrophic Forgetting}. \textbf{(1)} The replay-based approach is commonly applied to mitigate forgetting. A replay-based recurrent methodology was initially proposed for task-agnostic agents~\citep{caccia2022task}. RECALL~\citep{zhang2023replay} leverages adaptive normalization on approximate targets and policy distillation on old tasks to enhance generality and stability. Generative replay~\citep{chen2024stable} was recently proposed using the diffusion model to memorize the high-return trajectory distribution of each encountered task. \cite{sun2025knowledge} continually learns a generative model for experience replay without storing the past data within a model-based RL framework. Similarly, \cite{liu2025continual} learns an online world model and acts by planning via model prediction control to construct a unified world dynamics to handle the catastrophic forgetting issue. \textbf{(2)} The second branch is regularization-based from distinct perspectives. Earlier, the behavioral cloning was investigated across historical policies in \citep{wolczyk2022disentangling}. Inspired by complex synapses, \citep{kaplanis2018continual,kaplanis2019policy} developed the policy consolidation strategy by simultaneously remembering the agent’s policy at a range of timescales and regularizing the current policy by its own historical experience. Sparse prompting~\citep{yang2023continual} imposes a regularization via dictionary learning to produce sparse masks as prompts, extracting a sub-network for each task from a meta-policy network. \textbf{(3)} Model expansion also serves as a promising direction to investigate~\citep{mallya2018packnet, malagon2024self, gaya2022building}. \cite{gaya2022building} builds the subspace of policies to consider the scalable continual RL, while pointing out the trade-off between the agent's size and the performance of continual learning. \cite{malagon2024self} uses a growing policy neural network and applies the attention mechanism to integrate the knowledge from the previous policies and the current state to ``self-compose'' an internal policy. Despite the effectiveness of model expansion-based approaches, the primary concerns lie in their high memory and inference costs due to the leverage of previous policies with specific aggregated strategies. For instance, the attention module is adopted in \citep{malagon2024self}, where the number of parameters grows \textit{linearly} and the theoretical computational cost is \textit{quadratic} with respect to the number of tasks. \textbf{Comparison:} Our FAME approach with a dual-learner strategy can be viewed as lying at the intersection of the three predominant paradigms. In the knowledge integration, minimizing catastrophic forgetting requires an experience replay from the meta buffer. In the knowledge transfer, a behavior cloning regularization is possibly adopted to guide the exploration if the meta learner is chosen as the best in the adaptive meta warm-up framework. Although employing a dual-learner strategy, FAME retains the fixed model sizes of both fast and meta learners to avoid the stability issue, which is distinct from other model expansion approaches.

        \item \textbf{Knowledge Transfer and Loss of Plasticity}. The effectiveness of knowledge forward transfer determines the loss of plasticity, a reduced capability to rapidly adapt to a new environment~\citep{abel2018policy,dohare2024loss,wolczyk2022disentangling,tao2021repaint}. Within the continual RL literature, a large number of studies mainly focus on the knowledge transfer and plasticity capability through distinct perspectives. \cite{xie2022lifelong} improves the forward transfer and mitigates the loss of plasticity by selectively identifying the most relevant samples for the new task using learnable importance weights. The value function decomposing approach~\citep{anand2023prediction} is proposed to perform an interplay between fast and slow learning at various levels for value-based continual RL with a discrete action space to address the loss of plasticity. Through the lens of optimization, Parseval network~\citep{chung2024parseval} imposes orthogonality constraints to mitigate interference, while \cite{muppidi2024fast} employs parameter-free online convex optimization to retain plasticity. A recent work by \citep{tang2025mitigating} establishes the connection between plasticity and the \textit{churn} via the Neural Tangent Kernel~(NTK) matrix. \textit{Negative transfer} issue~\citep{ahn2025prevalence} was revealed in the adaptation to a new task due to the task dissimilarity, amplifying the loss of plasticity. As such, Reset and Distill (R$\&$D)~\citep{ahn2025prevalence} explicitly resets the policy in each new environment to avoid the negative transfer. \textbf{Comparison:} In terms of the knowledge transfer, it is not necessary that the reset is always the best choice, especially when the task similarity often exists. By leveraging the one-vs-all hypothesis test, we propose the adaptive meta warm-up approach that selectively discriminates the most effective weight initialization and warm-up strategy to facilitate the knowledge transfer and reduce the loss of plasticity.

    \end{itemize}

    \paragraph{Transfer, Multi-task, and Meta RL.} The knowledge transfer phase in our approach is closely linked with transfer RL, where the accrued knowledge can be transferred through representation~\citep{rusu2016progressive,devin2017learning}, learned models~\citep{finn2017deep,eysenbach2020off}, and network weights~\citep{fernandez2006probabilistic} or experience~\citep{andrychowicz2017hindsight,tirinzoni2018importance,xie2022lifelong}. Although the set of tasks should be learned simultaneously, multi-task RL also integrates the component of knowledge transfer, such as \citep{caruana1997multitask,sodhani2021multi,teh2017distral, yang2023continual,rusu2015policy,hausman2018learning}. As such, transfer RL has been an underpinning building block for both multi-task and continual RL, e.g., the explicit knowledge transfer in our dual-learner algorithm. As opposed to multi-task RL, the incremental or sequential learning nature of continual RL makes it more challenging to minimize catastrophic forgetting. However, our study shows that minimizing catastrophic forgetting can be, in principle, equivalent to the objectives in multi-task learning. Meta RL~\citep{finn2017model} can be seen as an extension or generalization of multi-task RL, with explicit mechanisms for fast adaptation and few-shot learning, and is also closely linked with continual RL.  Continual Meta-Policy Search (CoMPS)~\citep{berseth2021comps}  probes the setting of meta-training in an incremental fashion, extending meta-RL to a continual learning scenario. \textbf{Comparison:} The fast and meta learners adopted in our FAME approach intermingle the key techniques of transfer, multi-task, and meta RL, illuminating their deep connections within our algorithmic framework. The interplay of knowledge transfer and knowledge integration via the coupled updates between fast and meta learners simultaneously tackles the involved challenges, exhibiting promising solutions to address continual RL.

\section{Preliminaries: Definition of Distribution Drift and Catastrophic Forgetting in Deep Learning}\label{appendix:CF_dl}

	We first introduce the concept of \textit{drift} in the process of learning a parameterized function $f$ from the source data distribution $\tau_S$ with the dataset $\mathcal{D}_{\tau_S}$ to the target data distribution $\tau_T$ with the dataset $\mathcal{D}_{\tau_T}$. After learning $f$ on the source dataset $\mathcal{D}_{\tau_S}$, we obtain the estimated function $\widehat{f}_{\tau_S}$. Then we apply the same model architecture $f$ on the target dataset $\mathcal{D}_{\tau_T}$ with any learning algorithms, and finally we evaluate the drift of the attained $\widehat{f}_{\tau_T}$ via $\delta^{\tau_S \rightarrow \tau_T}$ defined as~\citep{doan2021theoretical}:
	\begin{equation}
		\begin{aligned}
			\delta^{\tau_S \rightarrow \tau_T}\left(X^{\tau_S}\right)=\left(\widehat{f}_{\tau_T}(x)-\widehat{f}_{\tau_S}(x)\right)_{(x, y) \in \mathcal{D}_{\tau_S}}
		\end{aligned}
	\end{equation}
	Based on the definition of \textit{drift}, we define the \textit{vanilla catastrophic forgetting} $\Delta^{\tau_S \rightarrow \tau_T}$ as
	\begin{eqnarray}
		\begin{aligned}
			\Delta^{\tau_S \rightarrow \tau_T}\left(X^{\tau_S}\right) =\left\|\delta^{\tau_S \rightarrow \tau_T}\left(X^{\tau_S}\right)\right\|_2^2   =\sum_{(x, y) \in \mathcal{D}_{\tau_S}}\left(\widehat{f}_{\tau_T}(x)-\widehat{f}_{\tau_S}(x)\right)^2,
		\end{aligned}
	\end{eqnarray}
	where the catastrophic forgetting can be further simplified as $\Delta^{\tau_S \rightarrow \tau_T}=\left\|\phi\left(X^{\tau_S}\right)\left(\omega_{\tau_T}^*-\omega_{\tau_S}^*\right)\right\|_2^2$ in the Neural Tangent Kernel~(NTK) regime~\citep{doan2021theoretical,jacot2018neural}, allowing the proposal of new continual learning approaches.  In deep learning, minimizing the catastrophic forgetting $\Delta^{\tau_S \rightarrow \tau_T}$ is equivalent to minimizing a weighted drift in terms of the prediction function $\widehat{f}$ with the weights determined by the dataset. The preliminary results in deep learning largely inspire us to explore the foundations in continual RL.

\section{Theoretical Results}\label{appendix:proof}

\subsection{Incremental Q-Value-based Meta Learner Update in Proposition~\ref{prop:integration_value}}\label{appendix:proof_integration_value}

In Proposition~\ref{prop:integration_value}, we provide an efficient incremental update rule of the meta learner to minimize the principled Q-value-based catastrophic forgetting that we define in Definition~\ref{def:CF}.

\begin{prop}[Incremental Q-Value-based Meta Learner Update]\label{prop:integration_value}   Let $d_Q$ be $\ell_2$ loss in Eq.~\ref{eq:CF_Q} in Definition~\ref{def:CF}. Minimizing Q-value-based catastrophic forgetting in Eq.~\ref{eq:integration_Q} is equivalent to:
\begin{align}
     Q^M_{k}   & = \argmin_{\widetilde{Q}^M_{k}} \sum_{i=1}^{k-1} \mathbb{E}_{w^Q_i}\left[\left(Q^M_{k-1} - \widetilde{Q}^M_k \right)^2\right] + \mathbb{E}_{w_k^Q}  \left[\left(Q_k - \widetilde{Q}^M_{k} \right)^2\right]. 
    \end{align}
\end{prop}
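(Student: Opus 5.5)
The plan is a pointwise bias--variance (Pythagorean) decomposition, treating $\widetilde{Q}^M_k$ as ranging over all functions on the finite set $\mathcal{S}\times\mathcal{A}$. Write $W_{k-1}(s,a)=\sum_{i=1}^{k-1} w_i^Q(s,a)$. The objective in Eq.~\ref{eq:integration_Q} at stage $k$ is
\[
J_k(\widetilde{Q})=\sum_{i=1}^{k-1}\sum_{s,a} w_i^Q(s,a)\bigl(Q_i(s,a)-\widetilde{Q}(s,a)\bigr)^2+\sum_{s,a} w_k^Q(s,a)\bigl(Q_k(s,a)-\widetilde{Q}(s,a)\bigr)^2 .
\]
The same objective at stage $k-1$ consists of the first sum only. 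The aim is to show that this first sum equals $\sum_{i=1}^{k-1}\mathbb{E}_{w_i^Q}\bigl[(Q^M_{k-1}-\widetilde{Q})^2\bigr]$ plus a constant that does not depend on $\widetilde{Q}$. Adding the untouched $k$-th term then gives the claimed objective, so the two argmins coincide.

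\textbf{Step 1: characterize $Q^M_{k-1}$.} The stage-$(k-1)$ objective separates over $(s,a)$ into one-dimensional quadratics. Hence on $\{W_{k-1}>0\}$ its minimizer is the weighted average
\[
Q^M_{k-1}(s,a)=\frac{\sum_{i=1}^{k-1} w_i^Q(s,a)Q_i(s,a)}{W_{k-1}(s,a)}.
\]
Equivalently, this is the first-order condition $\sum_{i=1}^{k-1} w_i^Q(s,a)\bigl(Q_i(s,a)-Q^M_{k-1}(s,a)\bigr)=0$ for every $(s,a)$. On $\{W_{k-1}=0\}$ the value of $Q^M_{k-1}$ is arbitrary, but every term indexed by $i\le k-1$ vanishes there, so the condition still holds trivially.

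\textbf{Step 2: expand.} For each $i\le k-1$ and each $(s,a)$, write $Q_i-\widetilde{Q}=(Q_i-Q^M_{k-1})+(Q^M_{k-1}-\widetilde{Q})$ and expand the square. Summing with weights $w_i^Q(s,a)$ over $i\le k-1$ gives three pieces:
\begin{itemize}
\item a term $\sum_i w_i^Q(Q_i-Q^M_{k-1})^2$, which is independent of $\widetilde{Q}$;
\item the desired term $\sum_i w_i^Q(Q^M_{k-1}-\widetilde{Q})^2$;
\item a cross term $2(Q^M_{k-1}-\widetilde{Q})\sum_i w_i^Q(Q_i-Q^M_{k-1})$, which vanishes identically by the first-order condition from Step 1.
\end{itemize}
Summing over $(s,a)$ yields $J_k(\widetilde{Q})=C_{k-1}+\widehat{J}_k(\widetilde{Q})$, where $\widehat{J}_k$ is the objective in the proposition and $C_{k-1}$ is a constant. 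Hence $\argmin J_k=\argmin \widehat{J}_k$.

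The main obstacle is making Step 1 legitimate. The cross-term cancellation needs $Q^M_{k-1}$ to be the \emph{exact} pointwise minimizer of the stage-$(k-1)$ objective. This holds in the tabular setting, or for any function class closed under the relevant pointwise operations, which is how I would state the assumption. It also needs $Q^M_{k-1}$ to have been defined via Eq.~\ref{eq:integration_Q} rather than via the incremental rule. I would handle the latter by a short induction on $k$: the incremental rule at stage $k-1$ has the same minimizer as Eq.~\ref{eq:integration_Q} at stage $k-1$, by the very argument above applied one step earlier. The base case $k=1$ is trivial, since the sum over $i\le k-1$ is empty. For a restricted parametric class, the equivalence would hold only up to the approximation error of $Q^M_{k-1}$, and I would remark on this caveat rather than prove it.
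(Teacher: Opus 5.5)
Your proposal is correct and follows the paper's proof essentially step for step. The paper likewise derives the pointwise first-order condition $\sum_{i=1}^{k-1} w_i^Q(Q_i-Q^M_{k-1})=0$, inserts $\pm Q^M_{k-1}$ into each of the first $k-1$ squared terms, and cancels the cross term with that condition, leaving a constant plus the incremental objective. Your handling of the set $\{W_{k-1}=0\}$, the induction ensuring $Q^M_{k-1}$ is an exact minimizer, and the tabular-versus-parametric caveat are refinements that the paper leaves implicit.
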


\begin{proof}

\noindent \textbf{Step 1: Optimality Condition.} Recap $w_i^Q(s, a) = \mu_i^{Q_i}(s) \pi^{Q_i}(a|s)$. We aim to minimize the Q-value-based catastrophic forgetting defined in Eq.~\ref{eq:integration_Q}: 
\begin{align*}
        Q^M_{k} & = \argmin_{\widetilde{Q}^M_k} \sum_{i=1}^{k} \sum_{s, a} w^Q_i(s, a) \left(Q_i(s, a) - \widetilde{Q}^M_{k}(s, a)\right)^2.
    \end{align*}
For each $s$ and $a$, by taking the derivative of the objective in Eq.~\ref{eq:integration_Q} regarding $\widetilde{Q}^M_{k}$, the first-order optimality condition is
\begin{align}\label{eq:appendix_integration_Q_optimality}
            \sum_{i=1}^{k}  w^Q_i(s, a) \left( Q_k^M(s, a) - Q_i(s, a) \right)= 0.
    \end{align}
By rewriting the two optimality conditions regarding $Q^M_k$ and $Q^M_{k-1}$, we can attain that
    \begin{align*}
        Q^M_k(s, a) = \frac{\sum_{i=1}^k w^Q_i(s, a) Q_i(s, a)}{\sum_{j=1}^k w^Q_j(s, a) }, \quad Q^M_{k-1}(s, a) = \frac{\sum_{i=1}^{k-1} w^Q_i(s, a) Q_i(s, a)}{\sum_{j=1}^{k-1} w^Q_j(s, a) }.
    \end{align*}

\noindent \textbf{Step 2: Incremental Update Rule.} For brevity, we employ the expectation operation $\mathbb{E}_{w}$. Based on the two optimality conditions above, we can derive the following incremental update rule:
    \begin{align}
         Q^M_{k} & = \argmin_{\widetilde{Q}^M_k} \sum_{i=1}^{k}  \EE{w^Q_i}{\left(Q_i - \widetilde{Q}^M_{k} \right)^2} \notag \\
         & = \argmin_{\widetilde{Q}^M_k} \underbrace{\sum_{i=1}^{k-1} \EE{w^Q_i}{\left(Q_i - Q_{k-1}^M + Q_{k-1}^M - \widetilde{Q}^M_{k} \right)^2} }_{\textcircled{1}} + \EE{w^Q_k}{\left(Q_k - \widetilde{Q}^M_{k} \right)^2} \label{eq:appendix_integration_Q}.
    \end{align}
\begin{align*}
    {\textcircled{1}} & = \sum_{i=1}^{k-1} \EE{w^Q_i}{\left( Q_i - Q_{k-1}^M  \right)^2 +  \left( Q_{k-1}^M - \widetilde{Q}^M_{k}  \right)^2 + 2 \left( Q_i - Q_{k-1}^M  \right)\left( Q_{k-1}^M - \widetilde{Q}^M_{k}  \right)} \\
    & = \sum_{i=1}^{k-1} \EE{w^Q_i}{ \left( Q_i - Q_{k-1}^M  \right)^2 + \left( Q_{k-1}^M - \widetilde{Q}^M_{k}  \right)^2} + \\
    & \quad 2 \sum_{s, a} \left( Q_{k-1}^M(s, a) - \widetilde{Q}^M_{k}(s, a)  \right)  \left( \sum_{i=1}^{k-1} w^Q_i(s, a) \left( Q_i(s, a) - Q_{k-1}^M(s, a)   \right) \right) \\
    & \overset{(a)}{=} \sum_{i=1}^{k-1} \EE{w^Q_i}{ \left( Q_i - Q_{k-1}^M  \right)^2 + \left( Q_{k-1}^M - \widetilde{Q}^M_{k}  \right)^2}  \\
    & = C + \sum_{i=1}^{k-1} \EE{w^Q_i}{ \left( Q_{k-1}^M - \widetilde{Q}^M_{k}  \right)^2} ,
\end{align*}
where $(a)$ holds as $\sum_{i=1}^{k-1} w^Q_i(s, a) \left( Q_i(s, a) - Q_{k-1}^M(s, a)   \right) = 0$ is the optimality condition for $Q^M_{k-1}$ in Eq.~\ref{eq:appendix_integration_Q_optimality} of Step 1. $C = \sum_{i=1}^{k-1} \EE{w^Q_i}{ \left( Q_i - Q_{k-1}^M  \right)^2} $ is a constant in terms of $\widetilde{Q}^M_k$. Putting all together into Eq.~\ref{eq:appendix_integration_Q}, we have
\begin{align*}
     Q^M_{k} & = \argmin_{\widetilde{Q}^M_{k}} \sum_{i=1}^{k-1} \EE{w^Q_i}{\left(Q^M_{k-1} - \widetilde{Q}^M_k \right)^2} + \EE{w_k^Q}{\left(Q_k - \widetilde{Q}^M_{k} \right)^2}.
\end{align*}

\end{proof}

\subsection{Proof of Proposition~\ref{prop:integration_value2policy}}\label{appendix:proof_integration_value2policy}
\textbf{Proposition}~\ref{prop:integration_value2policy} [Incremental Softmax Q-Value-based Meta Learner Update] Denote ${\pi}_k^M(a|s) =  \exp \left({Q}_k^M(a|s)/\tau\right) \big / \sum_{a^\prime} \exp  \left( {Q}_k^M(a^\prime|s)/\tau\right)$. After a softmax policy transformation, the Q-value-based meta learner incremental update is rewritten as 
\begin{align*}
 Q^M_k = \argmin_{\widetilde{Q}^M_{k}} \sum_{i=1}^{k-1} \EE{w_i^Q}{\log\frac{\pi_{k-1}^{M}}{\widetilde{\pi}_{k}^{M}}} + \EE{w_k^Q}{\log\frac{\pi^{Q_k}}{\widetilde{\pi}_{k}^{M}}}  = \argmax_{\widetilde{Q}^M_{k}} \sum_{i=1}^{k} \EE{w_i^Q}{ \log \widetilde{\pi}_k^M},
\end{align*}

\begin{proof}
    We rely on the softmax transformation to transfer a meta Q function to a meta policy. As such, the policy-based catastrophic forgetting in Eq.~\ref{eq:integration_policy}, when adapted from value-based continual RL and equipped with KL divergence as $d_\pi$, can be expressed as
        \begin{align}
        \label{eq:appendix_integration_policy_kl}
        Q^M_k & = \argmin_{\widetilde{Q}^M_k}  \sum_{i=1}^{k} \sum_{s} \mu_{i}^{Q_i}(s) \left[ \text{KL} \left(\pi^{Q_i}(\cdot | s)  || \widetilde{\pi}^M_k(\cdot | s) \right)\right].
        \end{align}
    where $\widetilde{\pi}_k^M(a|s) =  \exp \left(\widetilde{Q}_k^M(a|s)/\tau\right) \big / \sum_{a^\prime} \exp  \left( \widetilde{Q}_k^M(a^\prime|s)/\tau\right)$. By the definition of the KL divergence, we can rewrite the objective function in Eq.~\ref{eq:appendix_integration_policy_kl} as an incremental update rule:
    \begin{align}
         Q^M_k & = \argmin_{\widetilde{Q}^M_k}  \sum_{i=1}^{k} \sum_{s, a}  \mu_{i}^{Q_i}(s) \pi^{Q_i}(a|s) {\log \frac{\pi^{Q_i}(a  | s)}{\widetilde{\pi}^M_k(a | s)}} \notag \\
         & =  \argmin_{\widetilde{Q}^M_k}  \sum_{i=1}^{k} \EE{w_i^Q}{\log \frac{\pi^{Q_i} }{\widetilde{\pi}^M_k }} \notag \\
         & = \argmin_{\widetilde{Q}^M_k}  \sum_{i=1}^{k-1} \EE{w_i^Q}{\log \left( \frac{\pi^{Q_i} }{\widetilde{\pi}^M_k } \frac{\pi^M_{k-1}}{\pi^M_{k-1}  } \right) } + \EE{w_k^Q}{\log \frac{\pi^{Q_k}}{\widetilde{\pi}^M_k}} \notag \\
         & = \argmin_{\widetilde{Q}^M_k}  \left\{\sum_{i=1}^{k-1} \EE{w_i^Q}{\log \frac{\pi^M_{k-1}}{\widetilde{\pi}^M_k}  } + \EE{w_k^Q}{\log \frac{\pi^{Q_k} }{\widetilde{\pi}^M_k }} \right\} + C \notag \\
    & = \argmin_{\widetilde{Q}^M_k}  \left\{\sum_{i=1}^{k-1} \EE{w_i^Q}{\log \frac{\pi^M_{k-1}}{\widetilde{\pi}^M_k}  } + \EE{w_k^Q}{\log \frac{\pi^{Q_k} }{\widetilde{\pi}^M_k }} \right\} \label{eq:appendix_integration_policy_kl1} \\         
         & = \argmin_{\widetilde{Q}^M_k}  \left\{\sum_{i=1}^{k-1} \EE{w_i^Q}{\log \frac{1}{\widetilde{\pi}^M_k}  } + \EE{w_k^Q}{\log \frac{1}{\widetilde{\pi}^M_k }} \right\} \notag \\
         & = \argmax_{\widetilde{Q}^M_k} \sum_{i=1}^{k} \EE{w_i^Q}{\log \widetilde{\pi}^M_k}, \label{eq:appendix_integration_policy_kl2}
    \end{align}
where $C = \sum_{i=1}^{k-1} \EE{w_i^Q}{\log \frac{\pi^{Q_{i}}}{{\pi^M_{k-1}} }} $ is a constant and is independent of $\widetilde{Q}^M_k$. Although it may be trivial to keep the form of Eq.~\ref{eq:appendix_integration_policy_kl1}, it emphasizes an incremental update rule of $Q_k^M$ based on $Q_{k-1}^M$ ($\pi_{k-1}^M$) and $Q_k$ ($\pi^{Q_k}$). Eventually, this minimization leads to an Maximum Likelihood estimation regarding the meta learner $\widetilde{Q}^M_k$ in Eq.~\ref{eq:appendix_integration_policy_kl2}, on a mixture of state-action distribution of all encountered environments up to $k$. 
    
\end{proof}

\subsection{Proof of Proposition~\ref{prop:integration_policy_wasserstein}}\label{appendix:proof_integration_policy_wasserstein}

\textbf{Proposition~\ref{prop:integration_policy_wasserstein}} [Incremental Policy-based Meta Learner Update under Wasserstein Distance] Consider $d_\pi$ to be the squared 2-Wasserstein distance in Eq.~\ref{eq:CF_pi} of Definition~\ref{def:CF} and the policy is represented as an independent (multivariate) Gaussian distribution over the action $a$. Minimizing policy-based catastrophic forgetting in Eq.~\ref{eq:integration_policy} is equivalent to:
    \begin{align*}
        \pi_{\text{M}}^k & = \argmin_{\widetilde{\pi}^M_k} \left\{ \sum_{i=1}^{k-1}  \sum_{s} \mu_{i}^{\pi_i}(s) W_2^2\left(\widetilde{\pi}^M_k(\cdot|s), \pi^M_{k-1}( \cdot |s)\right) +  \sum_{s} \mu_{k}^{\pi_k}(s) W_2^2\left(\widetilde{\pi}^M_{k}(\cdot|s), \pi_k(\cdot|s) \right)  \right\}. \\
    \end{align*}

\begin{proof}
    Recap the objective of the policy-based catastrophic forgetting based on Eq.~\ref{eq:integration_policy} under  squared 2-Wasserstein distance:
        \begin{align*}
        \pi^M_k & = \argmin_{\widetilde{\pi}^M_k}  \sum_{i=1}^{k} \sum_{s} \mu_{i}^{\pi_i}(s) W_2^2 \left(\pi_i(\cdot | s) , \widetilde{\pi}^M_k(\cdot | s) \right).
        \end{align*}
    where the squared 2‐Wasserstain distance between two Gaussian distributions $p$ and $q$ has a closed-form solution: 
    \begin{align}\label{eq:Wasserstein_closeform}
        W_2^2(p, q)=\left\|\nu_p-\nu_q\right\|_2^2+\operatorname{tr}\left(\Sigma_p+\Sigma_q-2\left(\Sigma_q^{1 / 2} \Sigma_p \Sigma_q^{1 / 2}\right)^{1 / 2}\right),
    \end{align}
    with the two Gaussian distributions denoted by $\mathcal{N}(\nu_p, \Sigma_p)$ and $\mathcal{N}(\nu_q, \Sigma_q)$. In particular, when the policy is represented as an independent (multivariate) Gaussian distribution across the action $a$, it implies that $\Sigma_p$ and $\Sigma_q$ are diagonal (i.e., variables are independent), then the squared 2‐Wasserstain distance in Eq.~\ref{eq:Wasserstein_closeform} can be further simplified as 
    \begin{align}\label{eq:Wasserstein_closeform_further}
        W_2^2(p, q)=\left\|\nu_p-\nu_q\right\|_2^2 + \left\|\sigma_p-\sigma_q\right\|_2^2, 
    \end{align}
    where $\sigma_p$ and $\sigma_q$ are the diagonal vector of $\Sigma_p$ and $\Sigma_q$, respectively. Then, the objective of the policy-based catastrophic forgetting based on Eq.~\ref{eq:integration_policy} can be simplified as
  \begin{align}\label{eq:appendix_integration_policy_wasserstein}
        \pi^M_k & = \argmin_{\widetilde{\nu}^M_k, \widetilde{\sigma}^M_k}  \sum_{i=1}^{k} \sum_{s} \mu_{i}^{\pi_i}(s) \left( \Vert \nu_i(s) - \widetilde{\nu}_k^M(s) \Vert_2^2  + \Vert \sigma_i(s) - \widetilde{\sigma}_k^M(s) \Vert_2^2 \right),
    \end{align}
where $\pi_i(\cdot | s)$ is represented as a (multivariate) Gaussian distribution $\mathcal{N}(\nu_i(s), \sigma^2_i(s))$, where $\nu_i(s)$ and $\sigma^2_i(s)$ are the mean (vector) and (the diagonal vector of) the variance. Similarly, $\pi_M^k$ is represented as a (multivariate) Gaussian distribution $\mathcal{N}(\nu_k^M(s), (\sigma_k^M(s))^2)$.

\noindent \textbf{Step 1: Optimality Condition.} For each $s$, we take the derivative of Eq.~\ref{eq:appendix_integration_policy_wasserstein} in terms of $\widetilde{\nu}^M_k$ and $\widetilde{\sigma}^M_k$, respectively. Consequently, it arrives at the following optimality condition:
\begin{align}
    \sum_{i=1}^k \mu_i^{\pi_i}(s)  \left( \nu_i(s) - {\nu}_k^M(s) \right) &= 0 \label{eq:appendix_integration_policy_wasserstein_mean}\\
    \sum_{i=1}^k \mu_i^{\pi_i}(s)  \left( \sigma_i(s) - {\sigma}_k^M(s) \right) &= 0 \label{eq:appendix_integration_policy_wasserstein_std}.
\end{align}

\noindent \textbf{Step 2: Incremental Update.} We first rewrite Eq.~\ref{eq:appendix_integration_policy_wasserstein} as
\begin{align*}
        \pi^M_k & = \argmin_{\widetilde{\nu}^M_k, \widetilde{\sigma}^M_k}  \underbrace{\sum_{i=1}^{k-1} \sum_{s} \mu_{i}^{\pi_i}(s) \Vert \nu_i(s) - \widetilde{\nu}_k^M(s) \Vert_2^2}_{\textcircled{1}} + \sum_{s} \mu_{k}^{\pi_k}(s) \Vert \nu_k(s) - \widetilde{\nu}_k^M(s) \Vert_2^2 \\
        & + \underbrace{\sum_{i=1}^{k-1} \sum_{s}  \mu_{i}^{\pi_i}(s) \Vert \sigma_i(s) - \widetilde{\sigma}_k^M(s) \Vert_2^2 }_{\textcircled{2}} + \sum_{s} \mu_{k}^{\pi_k}(s)  \Vert \sigma_k(s) - \widetilde{\sigma}_k^M(s) \Vert_2^2.
    \end{align*}
\begin{align*}
    \textcircled{1} & = \sum_{i=1}^{k-1} \sum_{s} \mu_{i}^{\pi_i}(s) \Vert \nu_i(s) - \nu_{k-1}^M(s) + \nu_{k-1}^M(s) - \widetilde{\nu}_k^M(s) \Vert_2^2 \\
    & = \sum_{i=1}^{k-1} \sum_{s} \mu_{i}^{\pi_i}(s) \left(\Vert \nu_i(s) - \nu_{k-1}^M(s) \Vert_2^2 + \Vert \nu_{k-1}^M(s) - \widetilde{\nu}_k^M(s) \Vert_2^2\right) + \\
    & \quad \quad \quad \quad \quad 2 \sum_{i=1}^{k-1} \sum_{s} \mu_{i}^{\pi_i}(s) \langle \nu_i(s) - \nu_{k-1}^M(s), \nu_{k-1}^M(s) - \widetilde{\nu}_k^M(s) \rangle \\
    & = \sum_{i=1}^{k-1} \sum_{s} \mu_{i}^{\pi_i}(s) \left(\Vert \nu_i(s) - \nu_{k-1}^M(s) \Vert_2^2 + \Vert \nu_{k-1}^M(s) - \widetilde{\nu}_k^M(s) \Vert_2^2\right) + \\
    &  \quad \quad \quad \quad \quad 2     \sum_{s}\langle \sum_{i=1}^{k-1}  \mu_{i}^{\pi_i}(s) \left( \nu_i(s) - \nu_{k-1}^M(s) \right), \nu_{k-1}^M(s) - \widetilde{\nu}_k^M(s) \rangle \\
    & \overset{(a)}{=} \sum_{i=1}^{k-1} \sum_{s} \mu_{i}^{\pi_i}(s) \left(\Vert \nu_i(s) - \nu_{k-1}^M(s) \Vert_2^2 + \Vert \nu_{k-1}^M(s) - \widetilde{\nu}_k^M(s) \Vert_2^2\right),
\end{align*}
where $(a)$ holds due to the optimality condition $\sum_{i=1}^{k-1} \mu_i^{\pi_i}(s)  \left( \nu_i(s) - {\nu}_{k-1}^M(s) \right) = 0$ we derived in Eq.~\ref{eq:appendix_integration_policy_wasserstein_mean} of Step 1. Similarly, we can show this simplification regarding the variance:
\begin{align*}
    \textcircled{2} & = \sum_{i=1}^{k-1} \sum_{s} \sigma_{i}^{\pi_i}(s) \Vert \sigma_i(s) - \sigma_{k-1}^M(s) + \sigma_{k-1}^M(s) - \widetilde{\sigma}_k^M(s) \Vert_2^2 \\
    & = \sum_{i=1}^{k-1} \sum_{s} \mu_{i}^{\pi_i}(s) \left(\Vert \sigma_i(s) - \sigma_{k-1}^M(s) \Vert_2^2 + \Vert \sigma_{k-1}^M(s) - \widetilde{\sigma}_k^M(s) \Vert_2^2\right) + \\
    & \quad \quad \quad \quad \quad 2 \sum_{i=1}^{k-1} \sum_{s} \mu_{i}^{\pi_i}(s) \langle \sigma_i(s) - \sigma_{k-1}^M(s), \sigma_{k-1}^M(s) - \widetilde{\sigma}_k^M(s) \rangle \\
    & = \sum_{i=1}^{k-1} \sum_{s} \mu_{i}^{\pi_i}(s) \left(\Vert \sigma_i(s) - \sigma_{k-1}^M(s) \Vert_2^2 + \Vert \sigma_{k-1}^M(s) - \widetilde{\sigma}_k^M(s) \Vert_2^2\right) + \\
    &  \quad \quad \quad \quad \quad 2     \sum_{s}\langle \sum_{i=1}^{k-1}  \mu_{i}^{\pi_i}(s) \left( \sigma_i(s) - \sigma_{k-1}^M(s) \right), \sigma_{k-1}^M(s) - \widetilde{\sigma}_k^M(s) \rangle \\
    & \overset{(b)}{=} \sum_{i=1}^{k-1} \sum_{s} \mu_{i}^{\pi_i}(s) \left(\Vert \sigma_i(s) - \sigma_{k-1}^M(s) \Vert_2^2 + \Vert \sigma_{k-1}^M(s) - \widetilde{\sigma}_k^M(s) \Vert_2^2\right),
\end{align*}
where $(b)$ holds due to the optimality condition $\sum_{i=1}^{k-1} \mu_i^{\pi_i}(s)  \left( \sigma_i(s) - {\sigma}_{k-1}^M(s) \right) = 0$ we derived in Eq.~\ref{eq:appendix_integration_policy_wasserstein_std} of Step 1. Putting all together, we have
\begin{align*}
        \pi^M_k & = \argmin_{\widetilde{\nu}^M_k, \widetilde{\sigma}^M_k}  \sum_{i=1}^{k-1} \sum_{s} \mu_{i}^{\pi_i}(s) \left(\Vert \nu_i(s) - \nu_{k-1}^M(s) \Vert_2^2 + \Vert \nu_{k-1}^M(s) - \widetilde{\nu}_k^M(s) \Vert_2^2\right) + \sum_{s} \mu_{k}^{\pi_k}(s) \Vert \nu_k(s) - \widetilde{\nu}_k^M(s) \Vert_2^2 \\
        & + \sum_{i=1}^{k-1} \sum_{s} \mu_{i}^{\pi_i}(s) \left(\Vert \sigma_i(s) - \sigma_{k-1}^M(s) \Vert_2^2 + \Vert \sigma_{k-1}^M(s) - \widetilde{\sigma}_k^M(s) \Vert_2^2\right) + \sum_{s} \mu_{k}^{\pi_k}(s)  \Vert \sigma_k(s) - \widetilde{\sigma}_k^M(s) \Vert_2^2 
    \end{align*}
    By removing the constant terms independent of $\widetilde{\nu}^M_k$ and $\widetilde{\sigma}^M_k$, we further have
    \begin{align*}
        & = \argmin_{\widetilde{\nu}^M_k, \widetilde{\sigma}^M_k}  \sum_{i=1}^{k-1}  \sum_{s}  \mu_{i}^{\pi_i}(s) \Vert \nu_{k-1}^M(s) - \widetilde{\nu}_k^M(s) \Vert_2^2 + \sum_{s} \mu_{k}^{\pi_k}(s) \Vert \nu_k(s) - \widetilde{\nu}_k^M(s) \Vert_2^2 \\
        & + \sum_{i=1}^{k-1} \sum_{s} \mu_{i}^{\pi_i}(s) \Vert \sigma_{k-1}^M(s) - \widetilde{\sigma}_k^M(s) \Vert_2^2 + \sum_{s} \mu_{k}^{\pi_k}(s)  \Vert \sigma_k(s) - \widetilde{\sigma}_k^M(s) \Vert_2^2  \\
        & = \argmin_{\widetilde{\pi}^M_k} \left\{ \sum_{i=1}^{k-1}  \sum_{s} \mu_{i}^{\pi_i}(s) W_2^2\left(\widetilde{\pi}^M_k(\cdot|s), \pi^M_{k-1}( \cdot |s)\right) +  \sum_{s}  \mu_{k}^{\pi_k}(s) W_2^2\left(\widetilde{\pi}^M_{k}(\cdot|s), \pi_k(\cdot|s) \right)  \right\}.
    \end{align*}
This leads to the incremental policy-based meta learner update under the squared 2-Wasserstein distance.

\end{proof}

\section{Policy-based FAME Algorithm}\label{appendix:algorithm_policy}

\paragraph{Algorithm Description.} We first denote the fast buffer as $\mathcal{F}$ and $\pi^0$ as the initialized policy. As suggested in Algorithm~\ref{alg:value}, when the $k\text{-th}$ environment arrives, we initialize the fast learner $\pi_k$ via the adaptive meta warm-up among the preceding meta learner $\pi_{k-1}^M$, the preceding fast learner $\pi_{k-1}$ and a random learner $\pi^0$ (reset strategy) within $L$ steps. The adaptive meta warm-up makes full use of previous information to perform an effective knowledge transfer. Once the $k\text{-th}$  task ends, the knowledge integration phase starts, when the meta learner $\pi_k^M$ is updated via Eq.~\ref{eq:integration_policy_KL} (FAME-KL) or via Eq.~\ref{eq:integration_policy_wasserstein} (FAME-MD) on the data collected in the meta buffer $\mathcal{M}$. The meta learner incrementally incorporates the knowledge from $\pi_k$ into $\pi_{k-1}^M$, leading to an updated meta learner $\pi_k^M$. 
            
\begin{algorithm}[htbp]
    \centering
	\caption{Policy-based FAME Update in the $k\text{-th}$ Environment}
	\begin{algorithmic}[1] 
		\STATE \textbf{Initialize}: Fast Buffer $\mathcal{F}$, Meta Buffer $\mathcal{M}$, $\pi^M_{k-1}$, $\pi_{k-1}$, $\pi^0$, Warm-Up Step $L$, Estimation Step $N$.
		\STATE {\color{gray} \# Knowledge Transfer: Adaptive Meta Warm-Up }
            \STATE Initialize $\pi_k$ in $\{\pi_{k-1}, \pi_k^M, \pi^0\}$ via Eq.~\ref{eq:tranfer_valuetest} after $L$ steps
		\FOR{$t = L$ to $T$}
            \STATE Observe $S_t$, take action $A_t$, receive $R_{t}$, observe $S_{t+1}$
            \STATE Store $(S_t, A_t, R_{t}, S_{t+1})$ in $\mathcal{F}$
            \STATE Update $\pi_k$
		\IF{$t > T - N$}  
		\STATE Method 1~(FAME-KL): Store $(S_t, A_t)$ in $\mathcal{M}$ {\color{gray} \# To Estimate $w_k$ } \\
             \STATE Method 2~(FAME-WD):  Store $S_t$ in $\mathcal{M}$ {\color{gray} \# To Estimate $\mu_k^{\pi_k}$}
		\ENDIF
		\ENDFOR
            \STATE Reset $\mathcal{F}$ 
            \STATE {\color{gray} \#  Knowledge Integration: Minimize Catastrophic Forgetting} \\
		\STATE Method 1~(FAME-KL): Update $\pi_k^M$ via Eq.~\ref{eq:integration_policy_KL} on state-action pairs in $\mathcal{M}$ 
            \STATE Method 2~(FAME-WD): Update $\pi_k^M$ via Eq.~\ref{eq:integration_policy_wasserstein} on states in $\mathcal{M}$
	\end{algorithmic}
	\label{alg:policy}
\end{algorithm}     

\paragraph{Computational Cost, Performance Variability, and Practical Guidance.} The increased computational overhead of FAME-WD is negligible compared to FAME-KL when using standard Gaussian policy parameterizations, which is common in RL in a continuous action space. As exhibited in Eq.~\ref{eq:Wasserstein_closeform_further},  the simplification of Wasserstein distance leads to the same output of the policy network, ensuring a comparable computational cost to the KL divergence. We also remark that although the Wasserstein distance better captures the data geometry, the Gaussian policy may restrict the Wasserstein distance's advantage. Therefore, FAME-WD does  not necessarily outperform FAME-KL. In complex environments, the task involves high distributional shift and the change of stochasticity, where capturing distribution geometry becomes critical. We believe that is when FAME-WD becomes potentially superior to FAME-KL. 

\section{Discussion of the FAME Framework}\label{appendix:discussion}

\paragraph{Scalability of Storing Past Trajectories in the Meta Buffer.} As described in the value-based FAME algorithm, at the end of training in each environment, we store a subset of state–action pairs to estimate the weight $w_k$. Importantly, the stored pairs constitute only a small fraction of the total training data for each task, approximately 1–2$\%$ in our experiments. Consequently, our FAME approach incurs negligible scalability overhead on the benchmarks considered. For a fair comparison, we maintain the meta buffer at the same size as the replay buffer used in standard RL, and our FAME approaches still exhibit superior performance across all considered benchmarks. Still, as shown in Table~\ref{tab:minatar_buffersize} in Appendix~\ref{appendix:experiment_value_ablation}, enlarging the meta buffer by including more past trajectories generally leads to improved performance. In addition, as the meta learning update is in a supervised learning manner, which is much cheaper than the online RL training, even across a larger training set in the meta buffer. Overall, FAME achieves strong performance with minimal scalability concerns, highlighting its potential to effectively address key challenges in continual reinforcement learning.

\paragraph{Principled Strategy for Determining $L$ for the Policy Evaluation.} Regarding the choice of the warm-up step $L$, the optimal $L$ is indeed task-dependent. As increasing $L$ will reduce the number of available samples for the online learning, given a fixed total number of interaction steps, the optimal $L$ needs to strike a balance between selecting the most effective prior knowledge and preserving sufficient data for online adaptation. However, in the Meta-World experiments, the warm-up evaluation was simply fixed at 10 episodes because this choice already provides strong performance and significantly outperforms other baselines. In value-based continual RL, a longer warm-up duration $L$ in the behavior cloning regularization does not necessarily improve overall learning. When the meta learner is selected to warm up the fast learner via the one-vs-all hypothesis test, the conclusion is that the prior knowledge from the meta learner provides a better initialization than either a random learner or the preceding fast learner, and is able to accelerate the training process especially in the early stages of training. However, as learning progresses, the fast learner gradually adapts to the new environment and can surpass the meta learner’s performance. In such cases, a prolonged behavior cloning regularization tends to over-constrain the fast learner, thereby hindering adaptation and leading to the degradation in final performance (e.g., the non-monotonic effect in Table~\ref{tab:minatar_warmup}).

\paragraph{Principled Strategy for Determining $N$.} Regarding the number of stored trajectories $N$, its selection primarily depends on computational and memory constraints (i.e., the meta buffer size). As shown in Table~\ref{tab:minatar_buffersize} of Appendix~\ref{appendix:experiment_value_ablation}, increasing 
$N$ generally improves performance but comes with higher costs in both memory (increasing the meta buffer size) and computation (meta learner update on a larger trajectory dataset) in the knowledge integration phase. However, as discussed previously, FAME achieves strong performance with minimal scalability concerns and we can increase $N$ in practice to pursue superior performance as long as the memory budget is allowed.

\noindent \textbf{Space and Computational Complexity of FAME.}  (1) In terms of space complexity, the dual learner system of FAME requires an additional memory copy as the fast learner (the normal learner in baselines such as \texttt{Reset} and \texttt{Finetune}), and an additional meta replay buffer with the same size as the buffer of the fast learner, which is scalable as the total sample memory is fixed and independent of the number of future tasks. (2) In terms of the computational cost, we employ the same number of agent's interaction steps with the environment for a fair comparison. In other words, the policy evaluation occurs at the cost of reducing the policy optimization update steps in total. The main additional computation cost is the updating of the meta-learner, which is efficiently conducted in a supervised learning way. For instance, we found it only takes around a couple of minutes across 200 epochs to update the meta-learners in MinAtar after one task finishes. This additional computation overhead is negligible to the overall computation cost in the online training of RL algorithms.

\clearpage

\section{Experiments: Value-based Continual RL with Discrete Action Space}\label{appendix:experiment_value}

\subsection{Details of Experimental Setup and Comparison Methods}\label{appendix:experiment_value_setup}

\noindent \textbf{Metric Calculation.} As average performance is the main metric in continual RL, we report the results for each environment. For the evaluation of forgetting, we first calculate the metric scores for each environment and then normalize them by their standard deviation across all methods in each environment. This standard normalization mitigates the influence of different reward scales of each game and allows us to average them across games to report a more comprehensive forgetting score.

\noindent \textbf{Hyperparameters: MinAtar.} Our implementation adapts from the released code in \citep{anand2023prediction}. For our FAME approach, after doing the line search of the hyperparameter in Section~\ref{appendix:experiment_value_ablation}, we choose the estimation step $N=12000$, i.e., the number of data to be stored in the meta buffer $\mathcal{M}$ with size 100000 in each task, the policy evaluation step $n=600$ for one-vs-all hypothesis test, and the warm-up step $L=50000$~($10\%$ of the training steps in each task). \textit{Note that we use $L$ to represent the warm-up steps in the learning with the behavior cloning regularization, while introducing another notation $n$ to denote the policy evaluation steps for adaptive warm-up. } The ratio of stored data across the whole training steps in each task is $12000/500\text{k}=2.4\%$. In the knowledge integration phase, we train the meta learner across 200 epochs from a $1 \times 10^{-3}$ learning rate with a decaying strategy. The learning rate for the fast learner is kept as $1 \times 10^{-5}$, the same as the other variants of DQN baselines. Every time a new environment arrives, we clear the fast buffer $\mathcal{F}$ and reinitialize the parameters of all involved learners, except for DQN-Finetune. For FAME, after the adaptive meta warm-up, we can automatically choose between a random initialization and an initialization from the preceding fast learner with or without an additional behavior cloning regularization term for demonstration. We choose $\tau=1$ in Proposition~\ref{prop:integration_value2policy} across all tasks.

\noindent \textbf{Sequences of Tasks: MinAtar.} We randomly select 10 sequences of environments and then fix them for reproductivity. We run 3 seeds for each sequence of tasks.

\begin{enumerate}[itemindent=\parindent,leftmargin=*, topsep=0pt, partopsep=0pt]
\item \texttt{[`breakout', `spaceinvaders', `breakout', `spaceinvaders', `spaceinvaders', `freeway', `breakout']}

\item \texttt{[`spaceinvaders', `breakout', `breakout', `spaceinvaders', `spaceinvaders', `breakout', `breakout']}

\item \texttt{[`breakout', `spaceinvaders', `breakout', `freeway', `freeway', `breakout', `freeway']}

\item \texttt{[`freeway', `breakout', `spaceinvaders', `breakout', `breakout', `breakout', `spaceinvaders']}

\item \texttt{[`freeway', `freeway', `spaceinvaders', `spaceinvaders', `breakout', `breakout', `freeway']}

\item \texttt{[`freeway', `spaceinvaders', `freeway', `freeway', `breakout', `spaceinvaders', `breakout']}

\item \texttt{[`freeway', `spaceinvaders', `breakout', `freeway', `spaceinvaders', `freeway', `breakout']}

\item \texttt{[`breakout', `spaceinvaders', `freeway', `breakout', `spaceinvaders', `freeway', `breakout']}

\item \texttt{[`breakout', `spaceinvaders', `spaceinvaders', `spaceinvaders', `freeway', `breakout', `breakout']}

\item \texttt{[`freeway', `breakout', `freeway', `spaceinvaders', `freeway', `breakout', `freeway'] }

\end{enumerate}

\noindent \textbf{Remark: Choice of Repeated Tasks.}  The shuffled order of tasks is particularly useful to illuminate the mechanism of our adaptive meta warm-up strategy. In real applications, tasks are not strictly one-pass; the newly arriving task is agnostic, which can be either totally unknown or re-encountered with a similar structure—especially in dynamic, cyclical, or seasonal settings. For instance, the house robot is required to clean the floor again in the following weeks after it finishes this week. Although a shuffled order of sequence is slightly less challenging than totally non-repeating scenarios, evaluating mixed types of new tasks is also meaningful for general continual RL scenarios.

\noindent \textbf{Hyperparameters: Atari Games.} Our implementation adapts from the released code in \citep{malagon2024self}. For our FAME approach based on PPO, we choose the estimation step $N=20000$, i.e., the number of data to be stored in the meta buffer $\mathcal{M}$ with size 200000 in each task, the policy evaluation step $n=1200$ for one-vs-all hypothesis test, and the warm-up step $L=50000$~($5\%$ of the training steps in each task). The ratio of stored data over the whole training data in each task is $20000/1M=2\%$. In the knowledge integration phase, we train the meta learner across 200 epochs with a $2.5 \times 10^{-4}$ learning rate. The learning rate for the fast learner is also $2.5 \times 10^{-4}$. We adopt $\lambda=1.0$ for the behavior cloning regularization in the meta warm-up. We choose $\tau=1$ in Proposition~\ref{prop:integration_value2policy} across all tasks.

\noindent \textbf{Sequences of Tasks: Atari Games.} We follow the setting and adapt the implementation from \citep{malagon2024self}. (1) For the \textit{ALE/SpaceInvaders-v5} environment, we have 10 tasks and each task refers to one playing mode of the game. All tasks share the same objective: shoot space invaders before they reach the Earth. Observations consist of 210 × 160
RGB images of the frames, and actions are: do nothing, fire, move right, move left, a combination of move right and fire, and
a combination of move left and fire. The detailed descriptions of the 10 modes are as follows:
\begin{itemize}[itemindent=\parindent,leftmargin=*, topsep=0pt, partopsep=0pt]

\item \textbf{Mode 0:} It is the default setting. The player has three lives,
and destroying space invaders is rewarded (hitting the invaders in the back rows gives more reward).

\item \textbf{Mode 1:} Shields move back and forth on the screen, instead of staying in a fixed position. Using them as protection becomes unreliable.

\item \textbf{Mode 2:} The laser bombs dropped by the invaders \textit{zigzag} as they come down the screen, making it more difficult to predict the place they are going to land.

\item \textbf{Mode 3:} This task combines modes 1 and 2.

\item \textbf{Mode 4:} Same as mode 0 but laser bombs fall considerably faster.

\item \textbf{Mode 5:} Same as mode 1 but laser bombs fall considerably faster.

\item \textbf{Mode 6:} Same as mode 2 but laser bombs fall considerably faster.
 
\item \textbf{Mode 7:} Same as mode 3 but laser bombs fall considerably faster.

\item \textbf{Mode 8:} Same as mode 0 but invaders become invisible for a few frames.

\item \textbf{Mode 9:} Same as mode 1 but invaders become invisible for a few frames

\end{itemize}

For the \textit{ALE/Freeway-v5} environment, the objective is to guide a chicken to cross a road with busy traffic. The detailed descriptions of the 7 modes are as follows:

\begin{itemize}[itemindent=\parindent,leftmargin=*, topsep=0pt, partopsep=0pt]

\item \textbf{Mode 0:} It is the default setting. 

\item \textbf{Mode 1:} Traffic is heavier and the speed of the vehicles increases, the upper lane closest to the center has trucks. Trucks are longer vehicles, and thus, more difficult to avoid.

\item \textbf{Mode 2:} Trucks move faster than the fastest vehicles of the previous modes, and traffic is heavier.

\item \textbf{Mode 3:} There are trucks in all lanes, and trucks move as fast as in mode 2 in some of the lanes.

\item \textbf{Mode 4:} Similar traffic to previous modes, there are no trucks, but the velocity of the vehicles is randomly increased or decreased.

\item \textbf{Mode 5:} Same as mode 1 with the speed of the vehicles randomly changing and some vehicles come in groups of two or
three very close to each other.

\item \textbf{Mode 6:} Same as the previous mode but with heavier traffic.

\end{itemize}

\subsection{Ablation Study in MinAtar}\label{appendix:experiment_value_ablation}

\noindent \textbf{Regularization Hyperparameter $\lambda$ in Behavior Cloning.} Table~\ref{tab:minatar_reg} suggests that an overly large or small $\lambda$ results in inferior performance in FT and Forgetting, although the average performance is still favorable. For example, the metric scores in FT and Forgetting are worst for FAME~($\lambda=0.1$). In practice, we could choose $\lambda=1.0$ to achieve the highest score in FT, or $\lambda=5.0$ for the best forgetting score.

\begin{table}[htbp]
\centering
\caption{\footnotesize  Ablation Study of \textbf{Regularization Hyperparameter} $\lambda$ on MinAtar on Average Performance~(\textit{Avg. Perf}), Forward Transfer~(\textit{FT}), and Forgetting. Results are averaged over 10 sequences, each with 3 seeds. \normalsize} 
\label{tab:minatar_reg}
\scalebox{0.9}{
\begin{tabular}{l|ccc|c|c}
    \toprule
    \multirow{2}{*}{\textbf{Method}}  & \multicolumn{3}{c}{\textbf{Ave. Perf} $\uparrow$}    & \multirow{2}{*}{\textbf{FT} $\uparrow$} & \multirow{2}{*}{\textbf{Forgetting} $\downarrow$}  \\
    ~&  Breakout & Spaceinvader & Freeway & ~ &  \\
    \midrule
  FAME~($\lambda$=0.1)  & 12.79 $\pm$ 0.42   & \textbf{19.52 $\pm$ 0.50} & \textbf{1.71 $\pm$ 0.17} &  0.13 $\pm$ 0.03  & 0.77 $\pm$ 0.08 \\
    FAME~($\lambda$=1.0)  & \textbf{14.54 $\pm$ 0.58}   & 18.72 $\pm$ 0.52 & 1.69 $\pm$ 0.17& \textbf{0.16 $\pm$ 0.03} & 0.72 $\pm$ 0.13 \\
     FAME~($\lambda$=5.0)  & 13.90 $\pm$ 0.55    & 19.38 $\pm$ 0.62 & 1.62 $\pm$ 0.16 &  0.14 $\pm$ 0.03  & \textbf{0.64 $\pm$ 0.07} \\
    FAME~($\lambda$=10.0)  & 13.88 $\pm$ 0.55   & \textbf{19.52 $\pm$ 0.57} & 1.63 $\pm$ 0.16 &  0.14 $\pm$ 0.03  & 0.67 $\pm$ 0.08  \\
    \bottomrule
    
\end{tabular}
}
\end{table}

\noindent \textbf{Warm-Up Step $L$ under the BC Regularization.} Table~\ref{tab:minatar_warmup} shows the comprehensive metric scores of FAME in terms of the number of warm-up steps with the BC regularization. It is interesting to highlight that increasing the number of warm-up steps boosts the FT and average performance on certain games, such as Spaceinvade and Freeway. However, it worsens the general forgetting. To maintain the forgetting capability, it is recommended to keep the warm-up until a specific phase of the fast learning, such as $5\times10^4$ or $20\times10^4$ training steps. 

\begin{table}[htbp]
\centering
\caption{\footnotesize  Ablation Study of \textbf{Warm-Up Step } $L$ on MinAtar on Average Performance~(\textit{Avg. Perf}), Forward Transfer~(\textit{FT}), and Forgetting. Results are averaged over 10 sequences, each with 3 seeds. \normalsize} 
\label{tab:minatar_warmup}
\scalebox{0.9}{
\begin{tabular}{l|ccc|c|c}
    \toprule
    \multirow{2}{*}{\textbf{Method}}  & \multicolumn{3}{c}{\textbf{Ave. Perf} $\uparrow$}    & \multirow{2}{*}{\textbf{FT} $\uparrow$} & \multirow{2}{*}{\textbf{Forgetting} $\downarrow$}  \\
    ~&  Breakout & Spaceinvader & Freeway & ~ &  \\
    \midrule
 FAME~($L=1\times10^4$)  & 13.34 $\pm$ 0.52   & 19.17 $\pm$ 0.68  & 1.64 $\pm$ 0.16 & 0.13  $\pm$ 0.03   &  0.74 $\pm$ 0.08 \\
    FAME~($L=5\times10^4$)  & \textbf{14.54 $\pm$ 0.58}   & 18.72 $\pm$ 0.52 & 1.69 $\pm$ 0.17& 0.16 $\pm$ 0.03 & \textbf{0.72 $\pm$ 0.13} \\
     FAME~($L=20\times10^4$)  & 13.28 $\pm$  0.50  & 19.87  $\pm$ 0.66 & 1.65 $\pm$ 0.16 & \textbf{0.17  $\pm$ 0.03}  & \textbf{0.72 $\pm$ 0.08} \\
    FAME~($L=50\times10^4$) & 11.83  $\pm$ 0.71   & \textbf{20.00 $\pm$ 0.96} & \textbf{1.87 $\pm$ 0.20} &  \textbf{0.17 $\pm$  0.03} & 0.78 $\pm$ 0.09 \\
    \bottomrule
    
\end{tabular}
}
\end{table}

\noindent \textbf{Weight Estimation Step $N$.} For a fixed size of the meta learner buffer $\mathcal{M}$, we vary the weight estimation step $N$, which determines the collected data in a new environment used for knowledge integration of the meta learner. Table~\ref{tab:minatar_buffersize} showcases that decreasing the weight estimation step consistently worsens the performance across all metric scores. It is worthwhile to increase $N$ in the future to further enhance our performance, but this would require a larger buffer size of $\mathcal{M}$ than the one employed in other baselines. We leave the investigation of the performance of FAME with a larger buffer size of $\mathcal{M}$ as future work.

\begin{table}[htbp]
\centering
\caption{\footnotesize  Ablation Study of \textbf{Weight Estimation Step} $N$ on MinAtar on Average Performance~(\textit{Avg. Perf}), Forward Transfer~(\textit{FT}), and Forgetting. Results are averaged over 10 sequences, each with 3 seeds. \normalsize} 
\label{tab:minatar_buffersize}
\scalebox{0.9}{
\begin{tabular}{l|ccc|c|c}
    \toprule
    \multirow{2}{*}{\textbf{Method}}  & \multicolumn{3}{c}{\textbf{Ave. Perf} $\uparrow$}    & \multirow{2}{*}{\textbf{FT} $\uparrow$} & \multirow{2}{*}{\textbf{Forgetting} $\downarrow$}  \\
    ~&  Breakout & Spaceinvader & Freeway & ~ &  \\
    \midrule
       FAME~($N$=4000)  & 11.95 $\pm$ 0.47  & 14.69  $\pm$ 0.55  & 1.54 $\pm$ 0.17  & 0.14  $\pm$ 0.03  &  0.93$\pm$ 0.08  \\
 FAME~($N$=8000)  & 12.49 $\pm$ 0.37   & 17.99 $\pm$ 0.58 & 1.67 $\pm$ 0.17 &   
0.15 $\pm$ 0.03   & 0.86 $\pm$ 0.08 \\
   FAME~($N$=12000)  & \textbf{14.54 $\pm$ 0.58}   & \textbf{18.72 $\pm$ 0.52} & \textbf{1.69 $\pm$ 0.17}& \textbf{0.16 $\pm$ 0.03} & \textbf{0.72 $\pm$ 0.13} \\
    \bottomrule
    
\end{tabular}
}
\end{table}

\begin{table}[b!]
\centering
\caption{\footnotesize  Ablation Study of \textbf{Policy Evaluation Step} $n$ on MinAtar on Average Performance~(\textit{Avg. Perf}), Forward Transfer~(\textit{FT}), and Forgetting. Results are averaged over 10 sequences, each with 3 seeds. \normalsize} 
\label{tab:minatar_detectionstep}
\scalebox{0.9}{
\begin{tabular}{l|ccc|c|c}
    \toprule
    \multirow{2}{*}{\textbf{Method}}  & \multicolumn{3}{c}{\textbf{Ave. Perf} $\uparrow$}    & \multirow{2}{*}{\textbf{FT} $\uparrow$} & \multirow{2}{*}{\textbf{Forgetting} $\downarrow$}  \\
    ~&  Breakout & Spaceinvader & Freeway & ~ &  \\
    \midrule
 FAME~($n$=300)  & 13.02 $\pm$ 0.49   & 19.19 $\pm$0.65  & 1.42 $\pm$ 0.11 &   0.12 $\pm$ 0.03  & \textbf{0.69 $\pm$ 0.07} \\
    FAME~($n$=600)  & \textbf{14.54 $\pm$ 0.58}   & 18.72 $\pm$ 0.52 & 1.69 $\pm$ 0.17& \textbf{0.16 $\pm$ 0.03} & 0.72 $\pm$ 0.13 \\
     FAME~($n$=1200)  & 13.46 $\pm$ 0.63   & \textbf{19.39 $\pm$ 0.42} & 1.83 $\pm$ 0.20 &   
   \textbf{0.16 $\pm$ 0.03}  & 0.79 $\pm$ 0.08 \\
    FAME~($n$=5000)  & 13.09 $\pm$ 0.41   & 19.32 $\pm$ 0.52 & \textbf{1.84 $\pm$ 0.18} &  0.14 $\pm$ 0.03   & 0.76 $\pm$ 0.07  \\
    \bottomrule
\end{tabular}
}
\end{table}

\noindent \textbf{Policy Evaluation Step $n$ for Adaptive Meta Warm-Up before the BC Regularization with $L$ Steps.} In Table~\ref{tab:minatar_detectionstep}, a proper range of the policy evaluation step does not affect the metric scores significantly. As expected, a small policy evaluation step may not sufficiently select the best warm-up strategy, thus decreasing FT. We found $n=600$ is sufficient to maintain a favorable FT, while managing the forgetting as well.

\clearpage
\subsection{Learning Curves Analysis for Plasticity Ability}\label{appendix:experiment_value_learningcurves}

\noindent \textbf{Learning Curves of the Fast Learner: MinAtar.} We provide the learning curves of all considered continual RL algorithms in the MinAtar environment across 10 sequences of tasks in Figure~\ref{fig:learningcurve_value_minata}, demonstrating the favorable adaptation capability of the fast learner guided by the adaptive meta warm-up in each new environment. 

  \begin{figure}[htbp]
    \begin{center}
        \begin{subfigure}[b]{0.49\textwidth}
            \includegraphics[width=1\textwidth,trim=0 0 0 0,clip]{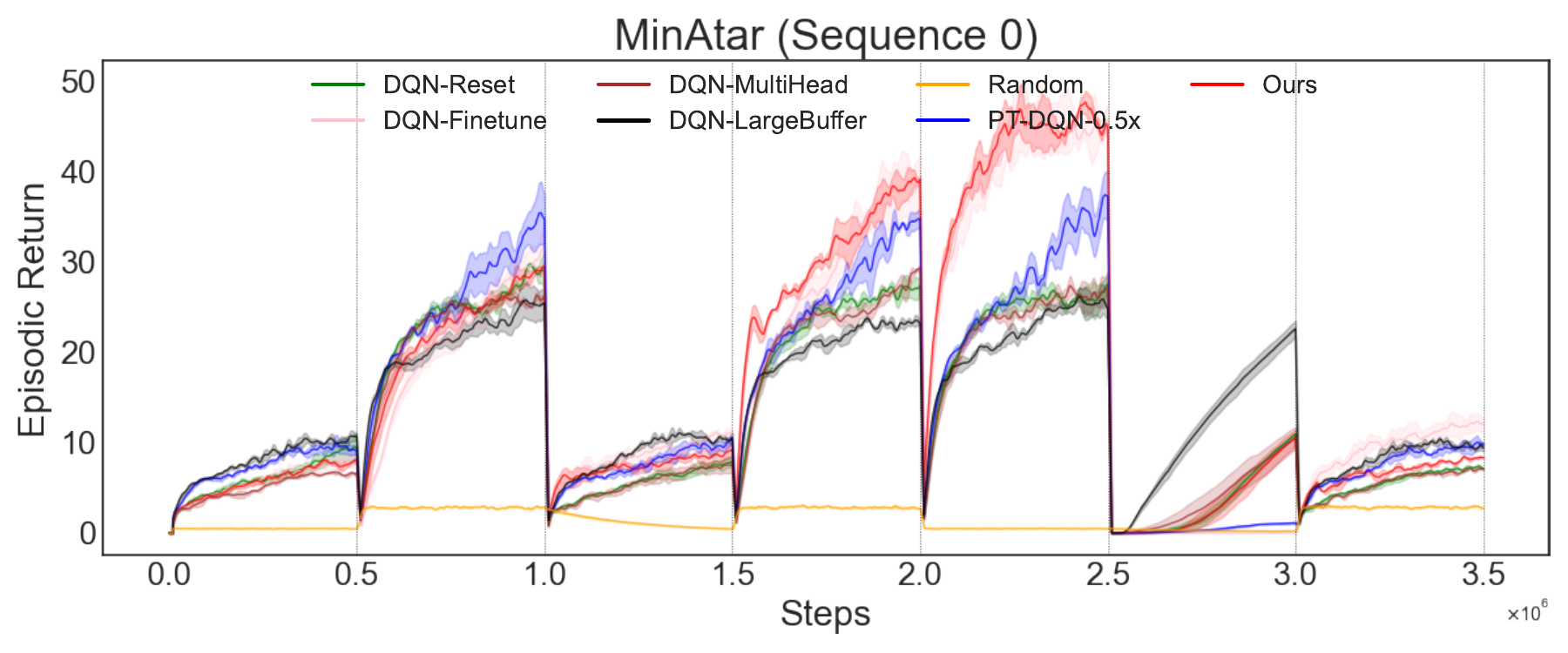}
        \end{subfigure}
                \begin{subfigure}[b]{0.49\textwidth}
            \includegraphics[width=1\textwidth,,trim=0 0 0 0,clip]{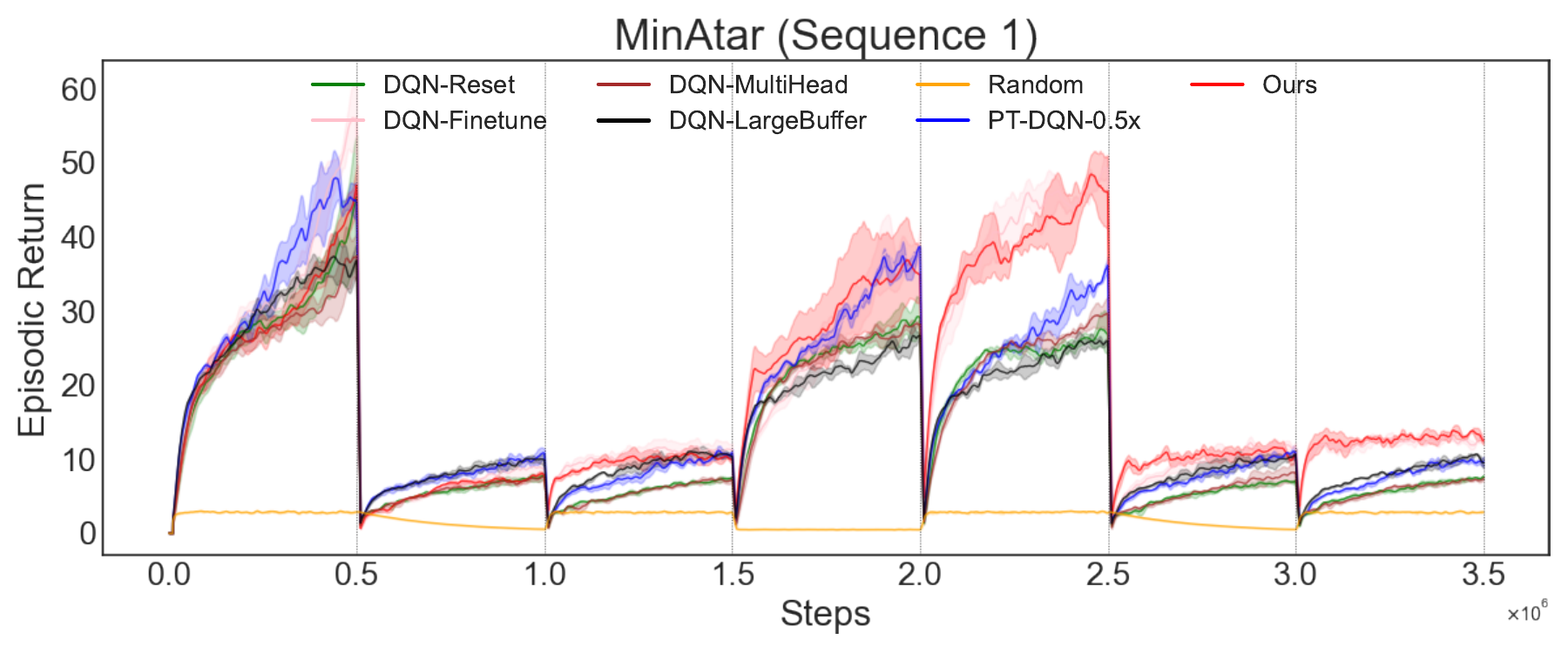}
        \end{subfigure}
        \begin{subfigure}[b]{0.49\textwidth}
            \includegraphics[width=1\textwidth,trim=0 0 0 0,clip]{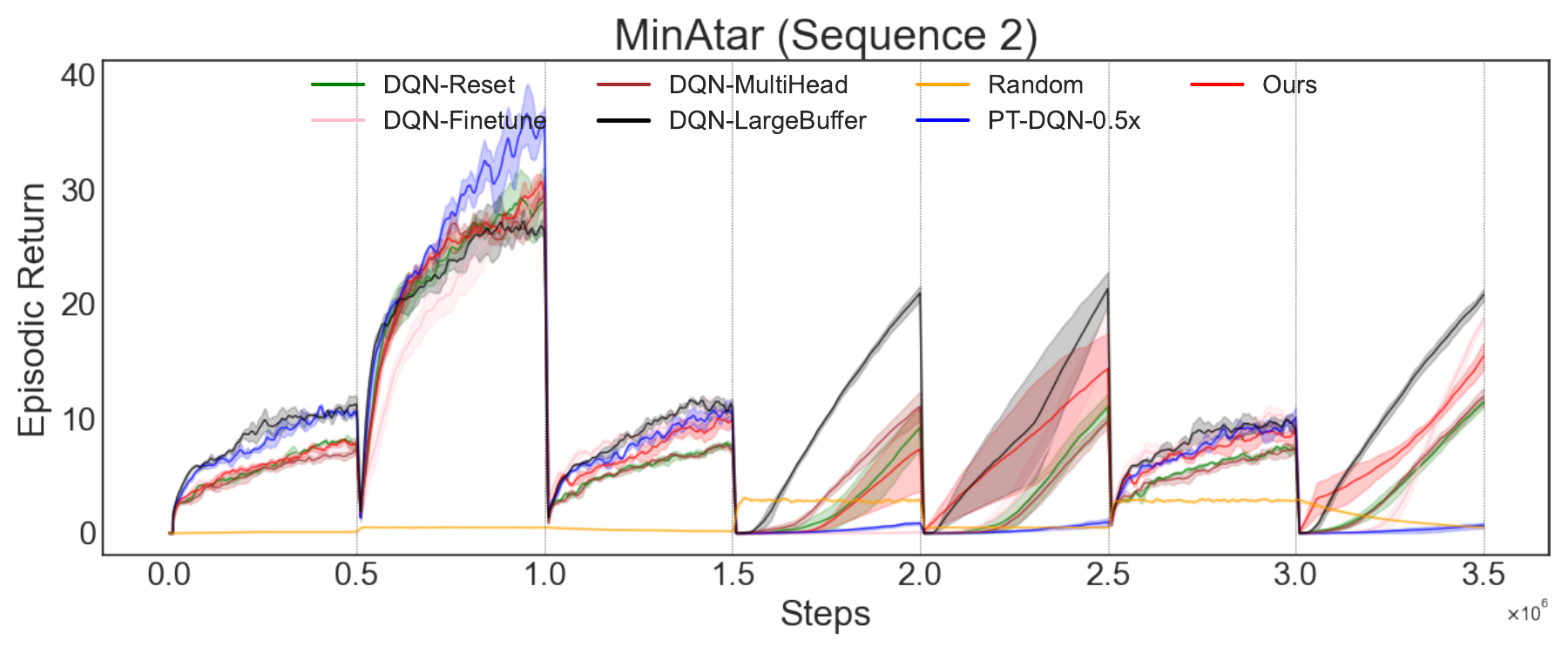}
        \end{subfigure}
                \begin{subfigure}[b]{0.49\textwidth}
            \includegraphics[width=1\textwidth,,trim=0 0 0 0,clip]{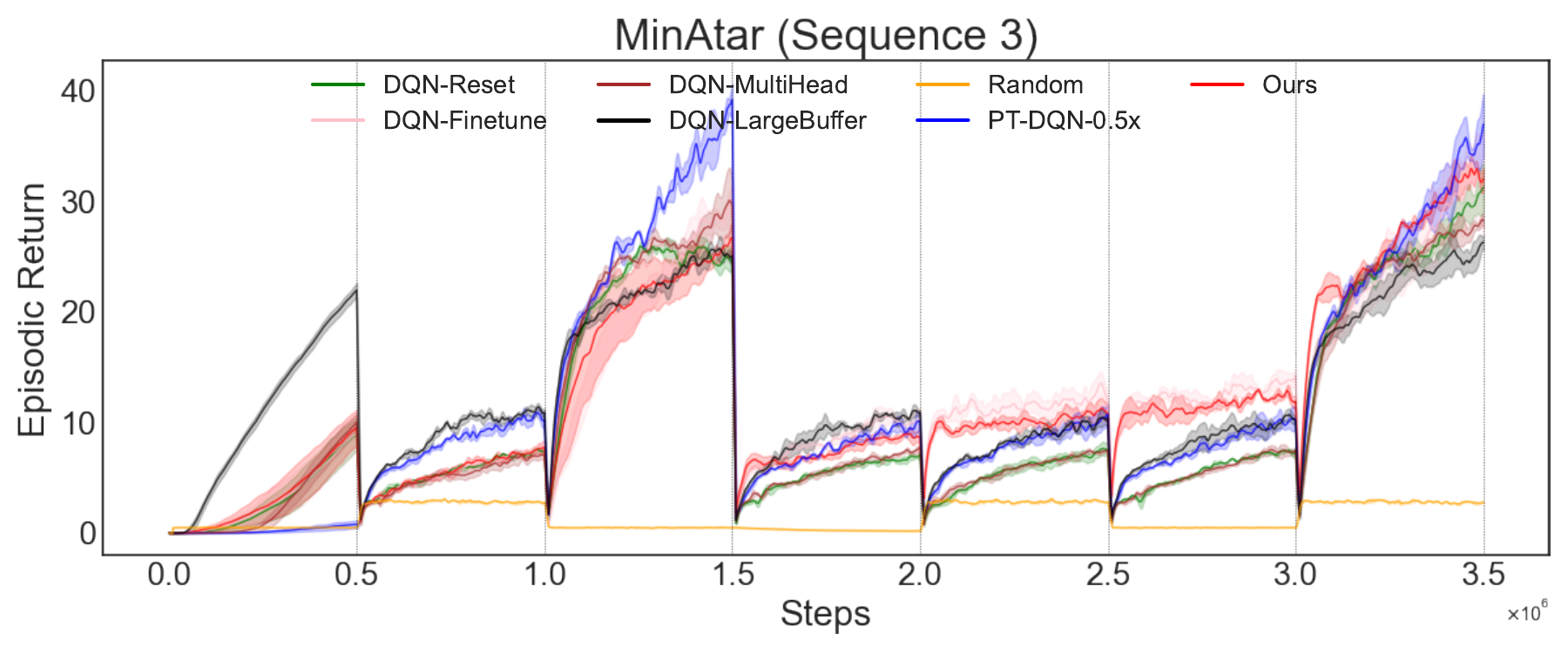}
        \end{subfigure}
        \begin{subfigure}[b]{0.49\textwidth}
            \includegraphics[width=1\textwidth,trim=0 0 0 0,clip]{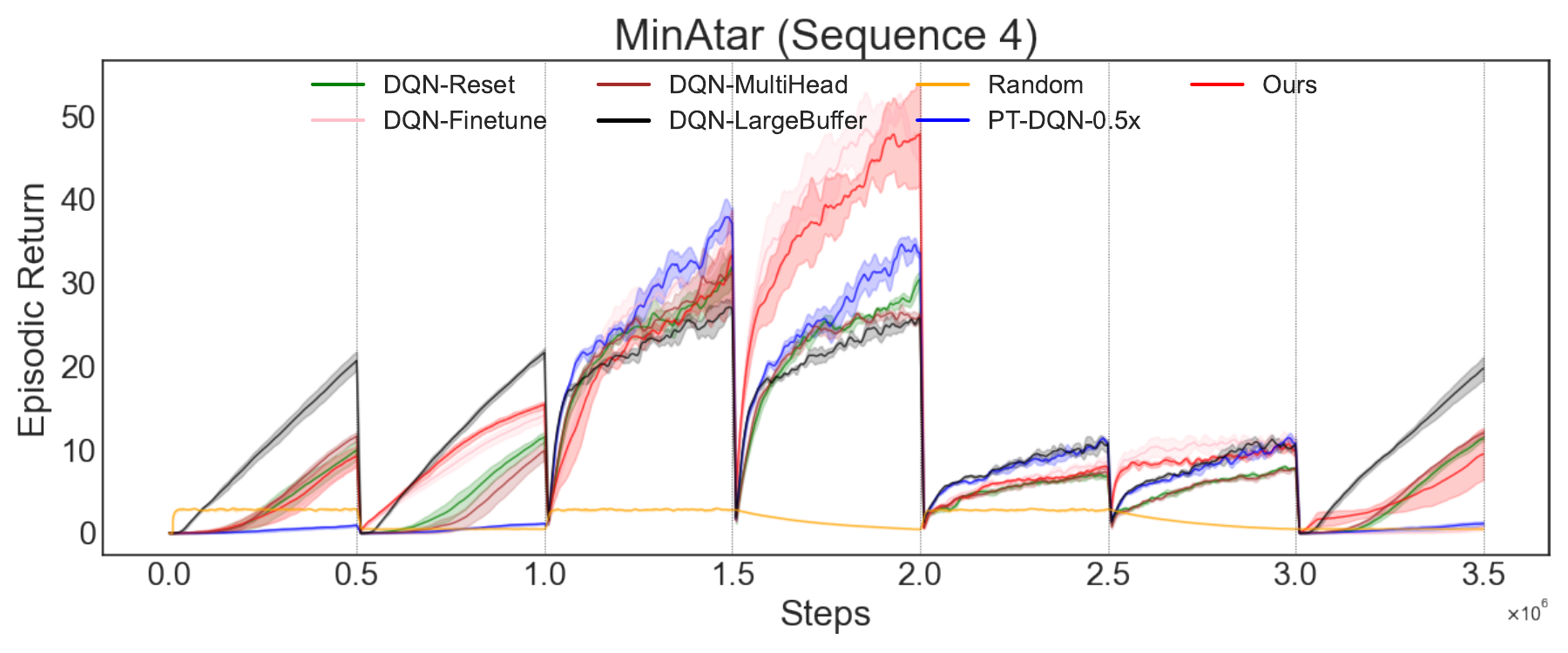}
        \end{subfigure}
                \begin{subfigure}[b]{0.49\textwidth}
            \includegraphics[width=1\textwidth,,trim=0 0 0 0,clip]{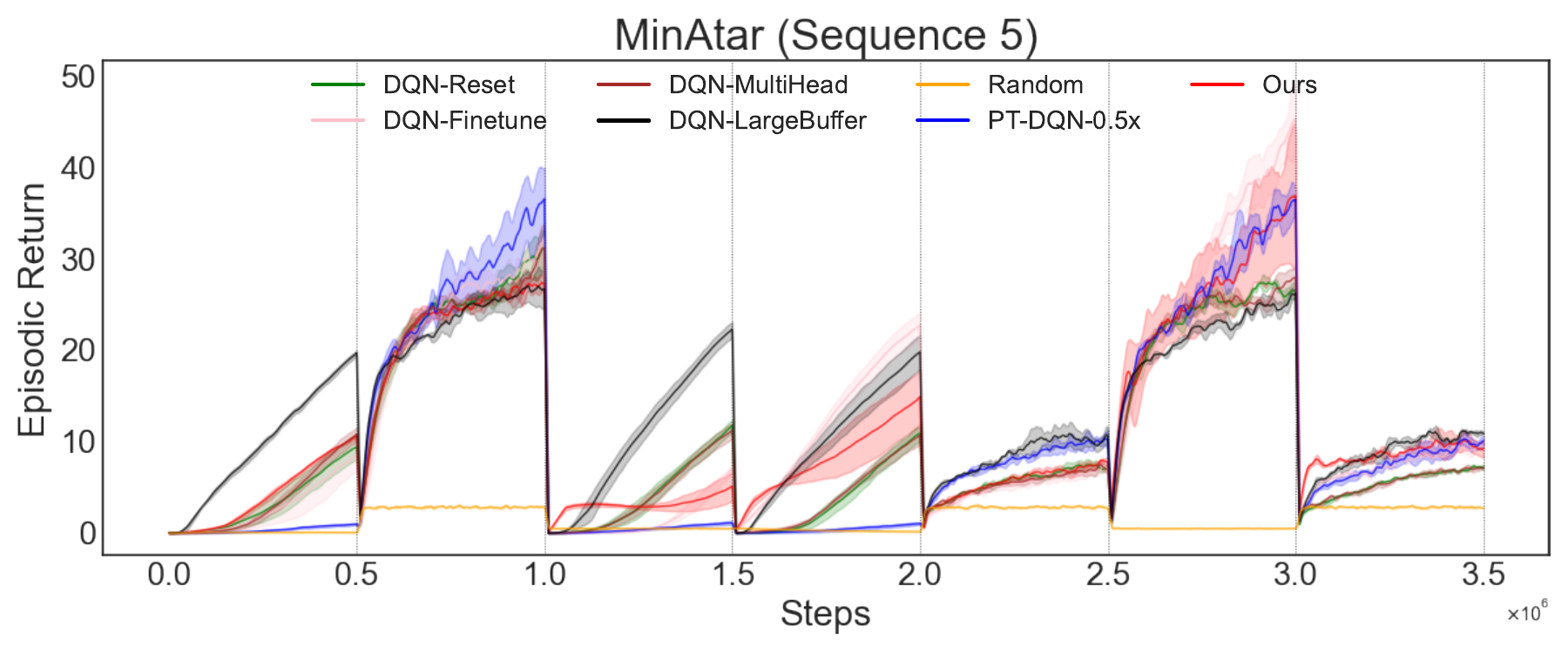}
        \end{subfigure}
        \begin{subfigure}[b]{0.49\textwidth}
            \includegraphics[width=1\textwidth,trim=0 0 0 0,clip]{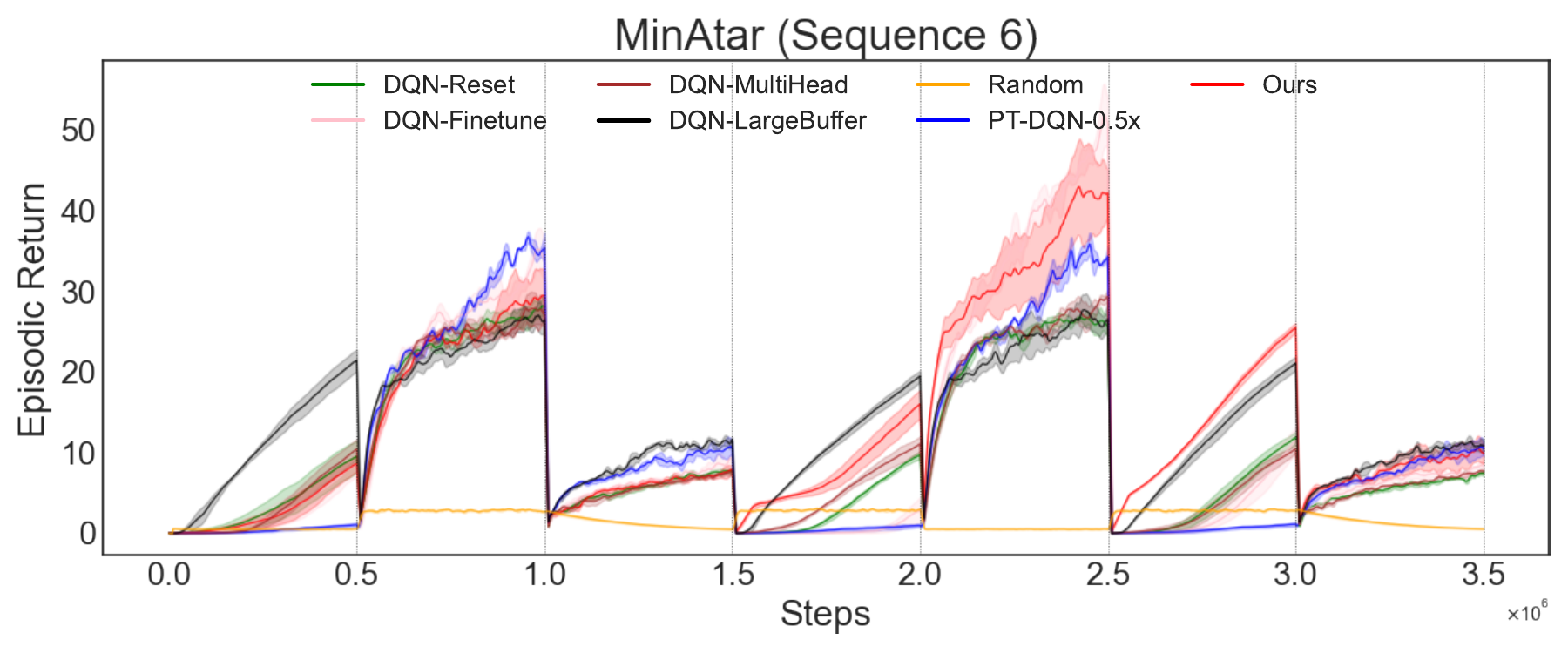}
        \end{subfigure}
                \begin{subfigure}[b]{0.49\textwidth}
            \includegraphics[width=1\textwidth,,trim=0 0 0 0,clip]{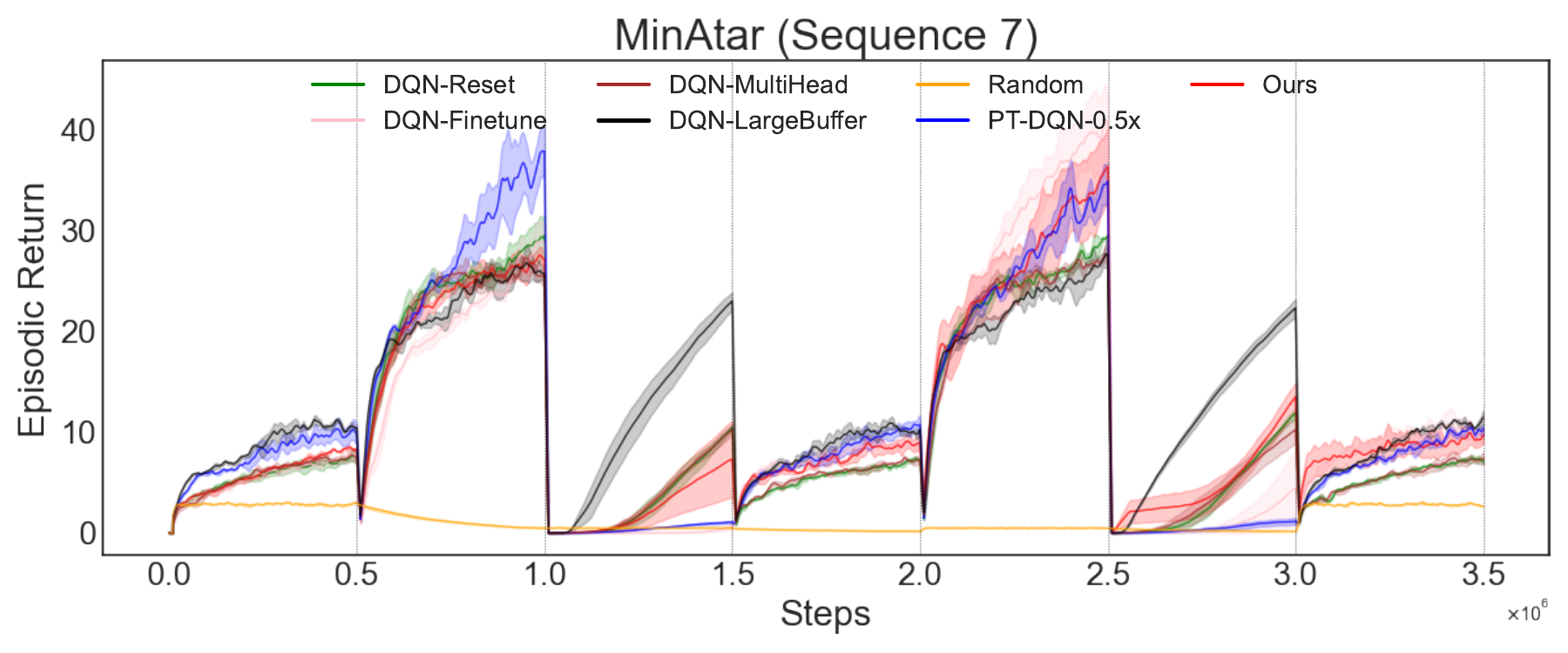}
        \end{subfigure}
        \begin{subfigure}[b]{0.49\textwidth}
            \includegraphics[width=1\textwidth,trim=0 0 0 0,clip]{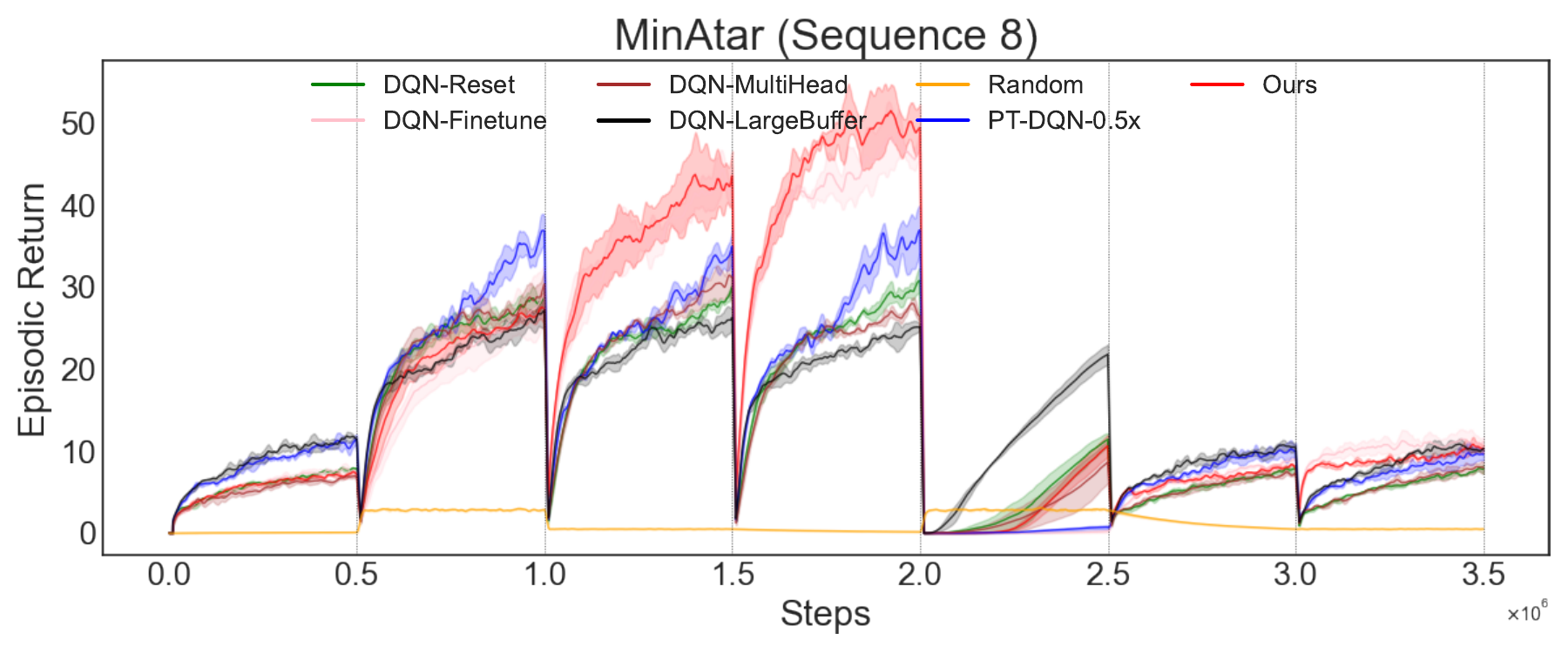}
        \end{subfigure}
                \begin{subfigure}[b]{0.49\textwidth}
            \includegraphics[width=1\textwidth,,trim=0 0 0 0,clip]{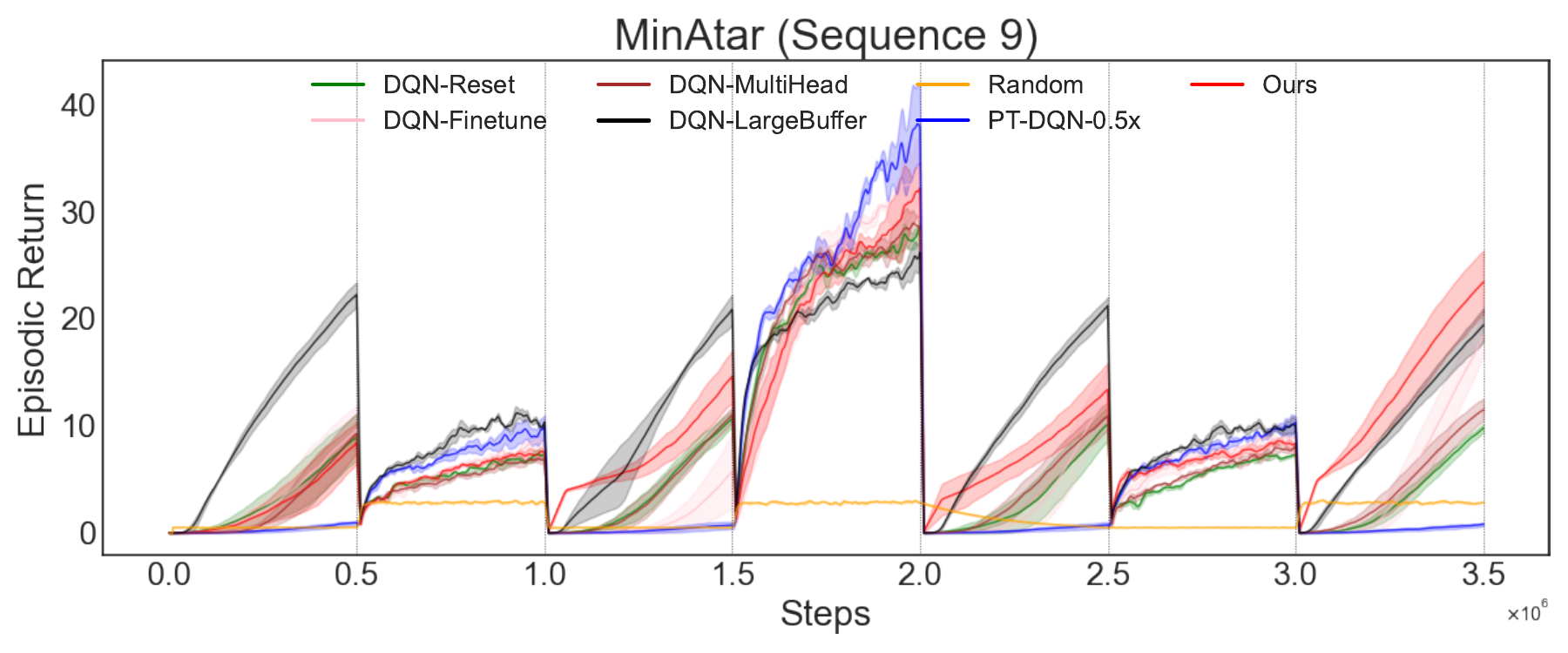}
        \end{subfigure}
            \end{center}
     \caption{\footnotesize Learning curves of the fast learner in FAME on MinAtar Environments across 10 sequences of tasks.\normalsize}
     \label{fig:learningcurve_value_minata}
    \end{figure}
  
\noindent \textbf{Learning Curves of the Fast Learner: Atari Games.} We provide the learning curves of all considered continual RL algorithms in the two Atari environments in Figures~\ref{fig:learningcurve_value_spaceinvader} and \ref{fig:learningcurve_value_freeway}. They demonstrate the favorable adaptation capability of the fast learner guided by the adaptive meta warm-up in each new environment. 

\begin{figure}[htbp]
 \centering
    \includegraphics[width=1.0\textwidth,trim=0 0 0 50,clip]{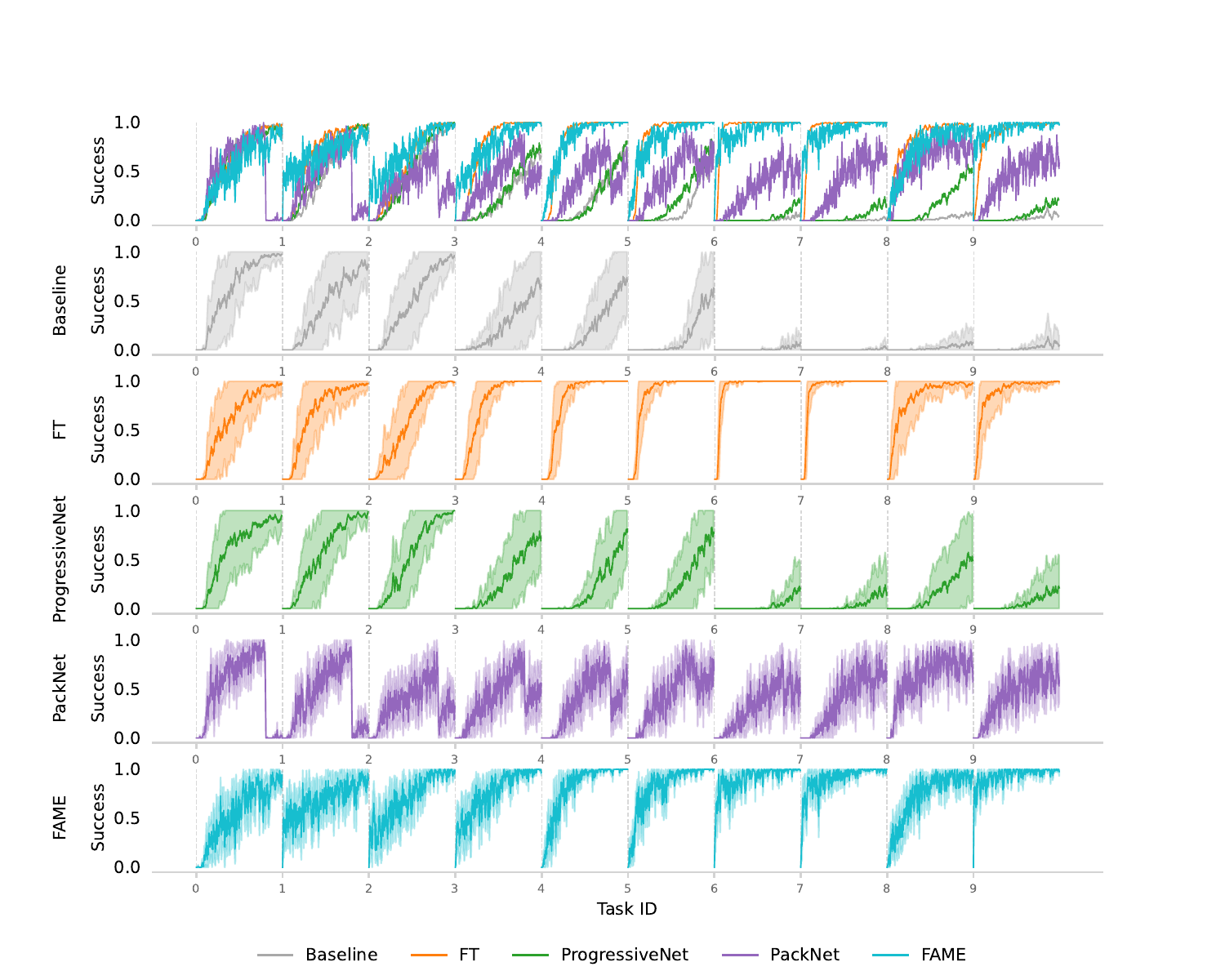}
        \caption{\footnotesize Learning curves of the fast learner in FAME on the SpaceIvader environment averaged over 3 seeds.\normalsize}
        \vspace{-10pt}
        \label{fig:learningcurve_value_spaceinvader}
\end{figure}

\begin{figure}[htbp]
 \centering
    \includegraphics[width=1.0\textwidth,trim=0 0 0 50,clip]{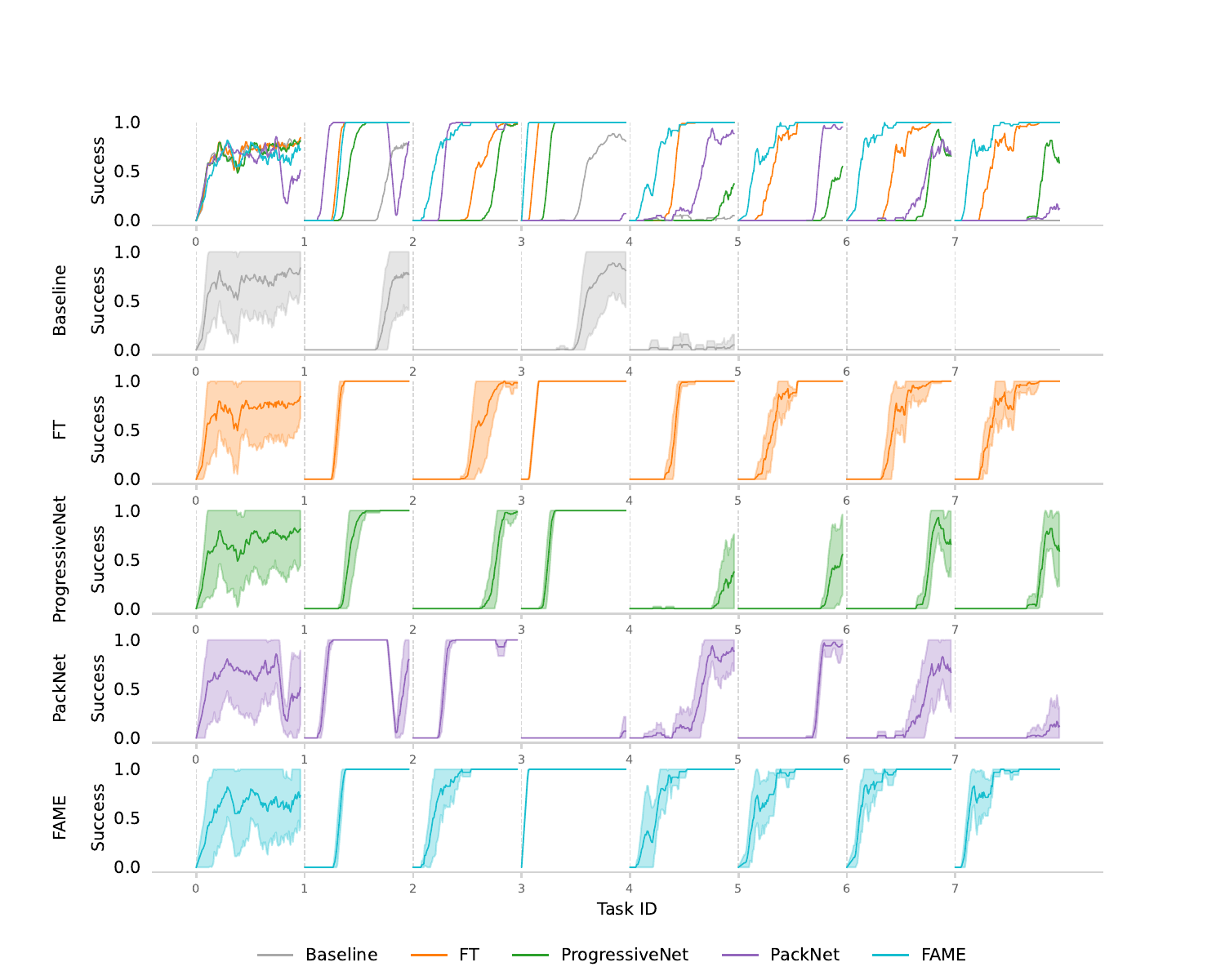}
        \caption{\footnotesize Learning curves of the fast learner in FAME on the Freeway environment averaged over 3 seeds.\normalsize}
        \vspace{-10pt}
        \label{fig:learningcurve_value_freeway}
\end{figure}

\clearpage
\section{Experiments: Policy-based Continual RL with Continuous Action Space}\label{appendix:experiment_policy}

\subsection{Details of Experimental Setup and Comparison Methods}\label{appendix:experiment_policy_setup}

\noindent \textbf{Hyperparameter Details.} Our implementation adapts from the released code in \citep{haarnoja2018softb,chung2024parseval,zhang2024provable,zhang2024exploiting,malagon2024self}. We employ a replay buffer size of $1 M$ for the fast learner and $0.1 M$ for the meta learner. The buffer size for the meta learner should not be overly large as we expect the algorithm to develop the continual learning capability without replaying too much data in the past. When the new environment arrives, the fast learner starts the training guided by the adaptive meta warm-up (i.e., knowledge transfer). At the same time, the replay buffer for the fast learner is reset, which aims only to contain transitions from the current task. In contrast, the meta learner maintains a buffer that stores the most recent 1\% of transitions (50 episodes) leveraged to update the fast learner for each task. After finishing a task, the meta learner is trained using $5000 \times k$ mini-batches, where $k$ is the number of tasks so far.


\noindent \textbf{Task Sequences.} We randomly choose 3 sequences of tasks from \citep{chung2024parseval}:
\begin{itemize}[leftmargin=0.2in]
\item \texttt{[`button-press-v2’, `plate-slide-back-side-v2’, `window-close-v2’, `plate-slide-side-v2’, `peg-unplug-side-v2’, `plate-slide-back-v2’, `coffee-button-v2’, `window-open-v2’, `handle-pull-side-v2’, `door-close-v2’]},
\item \texttt{[`plate-slide-back-side-v2', `soccer-v2', `sweep-into-v2', `handle-pull-side-v2', `plate-slide-side-v2', `peg-unplug-side-v2', `door-lock-v2', `reach-v2', `plate-slide-back-v2', `coffee-button-v2'],
}
\item \texttt{[`coffee-push-v2', `button-press-v2', `reach-v2', `peg-unplug-side-v2', `reach-wall-v2', `door-close-v2', `window-open-v2', `handle-pull-side-v2', `plate-slide-back-side-v2', `soccer-v2'].
}
\end{itemize}

\begin{table}[b!]
\vspace{-18pt}
\centering
\caption{\footnotesize Results on Meta-World on Average Performance~(\textit{Ave. Perf}), Forward Transfer~(\textit{FT}), and Forgetting \textbf{for each sequence of tasks}. Results are presented as averages and standard errors across 10 seeds. Each table, from top to bottom, represents each sequence of tasks, respectively.  The best results are highlighted in bold, and the second best are underlined. $\uparrow$ denotes a positive metric~(more is better), while $\downarrow$ is a negative one~(less is better). \textit{Reset} is the baseline for evaluating FT.
\normalsize} 
\scalebox{0.9}{
\begin{tabular}{l|c|c|c}
    \toprule
    \textbf{Methods}  & \textbf{Avg. Perf} $\uparrow$ & \textbf{FT} $\uparrow$ & \textbf{Forgetting} $\downarrow$ \\
    \midrule
    Reset  & 0.090 $\pm$ 0.029  & 0.000 $\pm$ 0.000 & 0.800 $\pm$ 0.040 \\
    Finetune  & 0.070 $\pm$ 0.026  & -0.294 $\pm$ 0.039 & 0.480 $\pm$ 0.050 \\
    Average & 0.020 $\pm$ 0.014  & -0.584 $\pm$ 0.035 & 0.100 $\pm$ 0.030  \\
     PackNet & 0.703 $\pm$ 0.041 & -0.111 $\pm$ 0.028 & \textbf{0.000 $\pm$ 0.000} \\
    \hline
    FAME-WD   & \textbf{0.87 $\pm$ 0.034} & \underline{0.004 $\pm$ 0.022} & \underline{0.010 $\pm$ 0.017} \\
    FAME-KL  & \underline{0.86 $\pm$ 0.035} & \textbf{0.042 $\pm$ 0.019} & 0.050 $\pm$ 0.026 \\
    \bottomrule
\end{tabular}
} \\
\scalebox{0.9}{
\begin{tabular}{l|c|c|c}
    \toprule
    \textbf{Methods}  & \textbf{Avg. Perf} $\uparrow$ & \textbf{FT} $\uparrow$ & \textbf{Forgetting} $\downarrow$ \\
    \midrule
    Reset  & 0.110 $\pm$ 0.031  & 0.000 $\pm$ 0.000 & 0.680 $\pm$ 0.047 \\
    Finetune  & 0.040 $\pm$ 0.020  & -0.252 $\pm$ 0.045 & 0.440 $\pm$ 0.052 \\
    Average & 0.000 $\pm$ 0.000  & -0.496 $\pm$ 0.039 & 0.110 $\pm$ 0.031  \\
    PackNet & 0.413 $\pm$ 0.041 & -0.249 $\pm$ 0.031 & \textbf{0.000 $\pm$ 0.000}   \\
    \hline
    FAME-WD   & \textbf{0.750 $\pm$ 0.044} &  \textbf{0.008 $\pm$ 0.024} & \underline{0.040 $\pm$ 0.032} \\
    FAME-KL  & \underline{0.680 $\pm$ 0.047} & \underline{0.004 $\pm$ 0.029} & 0.120 $\pm$ 0.036 \\
    \bottomrule
\end{tabular}
} \\
\scalebox{0.9}{
\begin{tabular}{l|c|c|c}
    \toprule
    \textbf{Methods}  & \textbf{Avg. Perf} $\uparrow$ & \textbf{FT} $\uparrow$ & \textbf{Forgetting} $\downarrow$ \\
    \midrule
    Reset  & 0.080 $\pm$ 0.027  & \underline{0.000 $\pm$ 0.000} & 0.650 $\pm$ 0.052 \\
    Finetune  & 0.000 $\pm$ 0.000  & -0.25 $\pm$ 0.036 & 0.360 $\pm$ 0.048 \\
    Average & 0.020 $\pm$ 0.014  & -0.509 $\pm$ 0.036 & \underline{0.000 $\pm$ 0.002}  \\
     PackNet & 0.358 $\pm$ 0.043 & -0.222 $\pm$ 0.032 & \textbf{0.000 $\pm$ 0.000} \\
    \hline
    FAME-WD   & \textbf{0.680 $\pm$ 0.047} & -0.020 $\pm$ 0.028 & 0.020 $\pm$ 0.032 \\
    FAME-KL  & \underline{0.660 $\pm$ 0.048}& \textbf{0.022 $\pm$ 0.028} & 0.050 $\pm$ 0.030 \\
    \bottomrule
\end{tabular}
}
\label{tab:metaworld all ten seeds}
\end{table}

\begin{table}[htbp]
\vspace{-15pt}
\centering
\caption{\footnotesize Results on Meta-World \textbf{averaged over the three sequences} on Average Performance~(\textit{Ave. Perf}), Forward Transfer~(\textit{FT}), and Forgetting. Results are presented as averages and standard errors across 10 seeds. The best results are highlighted in bold, and the second best are underlined. $\uparrow$ denotes a positive metric~(more is better), while $\downarrow$ is a negative one~(less is better). \textit{Reset} is the baseline for evaluating FT.
\normalsize} 
\scalebox{0.9}{
\begin{tabular}{l|c|c|c}
    \toprule
    \textbf{Methods}  & \textbf{Avg. Perf} $\uparrow$ & \textbf{FT} $\uparrow$ & \textbf{Forgetting} $\downarrow$ \\
    \midrule
    Reset  & 0.093 $\pm$ 0.017  & \underline{0.000 $\pm$ 0.000} & 0.710 $\pm$ 0.030 \\
    Finetune  & 0.037 $\pm$ 0.011  & -0.265 $\pm$ 0.028 & 0.427 $\pm$ 0.033 \\
    Average & 0.013 $\pm$ 0.007  & -0.530 $\pm$ 0.024 & 0.070 $\pm$ 0.022  \\
     PackNet & 0.491 $\pm$ 0.025 & -0.194 $\pm$ 0.018 & \textbf{0.000 $\pm$ 0.000} \\
    \hline
    FAME-WD   & \textbf{0.767 $\pm$ 0.024} & -0.003 $\pm$ 0.014 & \underline{0.023 $\pm$ 0.015} \\
    FAME-KL  & \underline{0.733 $\pm$ 0.026} & \textbf{0.022 $\pm$ 0.015} & 0.073 $\pm$ 0.019 \\
    \bottomrule
\end{tabular}
}
\label{tab:metaworld all ten seeds 3 seq}
\end{table}

\subsection{More Experimental Results and Details}\label{appendix:experiment_policy_results}

\noindent \textbf{Metric Scores.} Table~\ref{tab:metaworld all ten seeds} presents detailed results on Average Performance~(\textit{Ave. Perf}), Forward Transfer~(\textit{FT}), and Forgetting for each sequence of tasks. Table~\ref{tab:metaworld all ten seeds 3 seq} further shows the average score over the three sequences. In summary, compared to the baselines, the \texttt{FAME} variants achieve superior performance in Average Performance and Forward Transfer. For the Forgetting metric: as \texttt{PackNet} retains its previous policies, its Forgetting score is 0. By contrast, \texttt{Average} fails to learn meaningful knowledge and achieves poor overall performance, leaving it with nothing to ``forget''. As a result, the best Forgetting score does not always align with the best Average Performance.


\noindent \textbf{Evaluation of Performance Profile and Final Average Performance.} The performance profile~\citep{agarwal2021deep,dolan2002benchmarking} provides a comprehensive view of the fast learner’s overall performance across the entire task sequence. As shown in Figure~\ref{fig:profile_performance_all_10seeds} (left), both \texttt{FAME} variants consistently outperform all baselines, demonstrating the meta-learner’s ability to effectively consolidate knowledge over time and facilitate transfer learning. Moreover, Figure~\ref{fig:profile_performance_all_10seeds} (right) highlights the advantage of \texttt{FAME} across all previously seen tasks. While baseline methods typically degrade as more tasks are introduced, \texttt{FAME-KL} and \texttt{FAME-MD} achieve the highest average performance—indicating minimal catastrophic forgetting and robust retention over time. Notably, the last row in Figure~\ref{fig:profile_performance_all_10seeds} indicates the average performance profile and average performance over the three sequences of tasks.

\noindent \textbf{Learning Curves of the Fast Learner.} Figure~\ref{fig:policy_learning_curve_10seeds} showcases that the fast learner in our FAME methods achieves higher success rates across three sequences of tasks throughout training. The first three rows indicate the learning curves for the three sequences of tasks, while the last row represents their average learning curve. Learning curves are averaged over 10 seeds, and the shade region represents the standard error. This superiority implies that the adaptive meta warm-up enhances the adaptation of the fast learner in each new environment. 


  \begin{figure}[b!]
    \begin{center}
        \begin{subfigure}[b]{0.8\textwidth}
            \includegraphics[width=1\textwidth]{pic/legend_6.pdf}
        \end{subfigure} \\
        \vspace{0.05in}
        \begin{subfigure}[b]{0.45\textwidth}
            \includegraphics[width=1\textwidth,trim=0 0 0 0,clip]{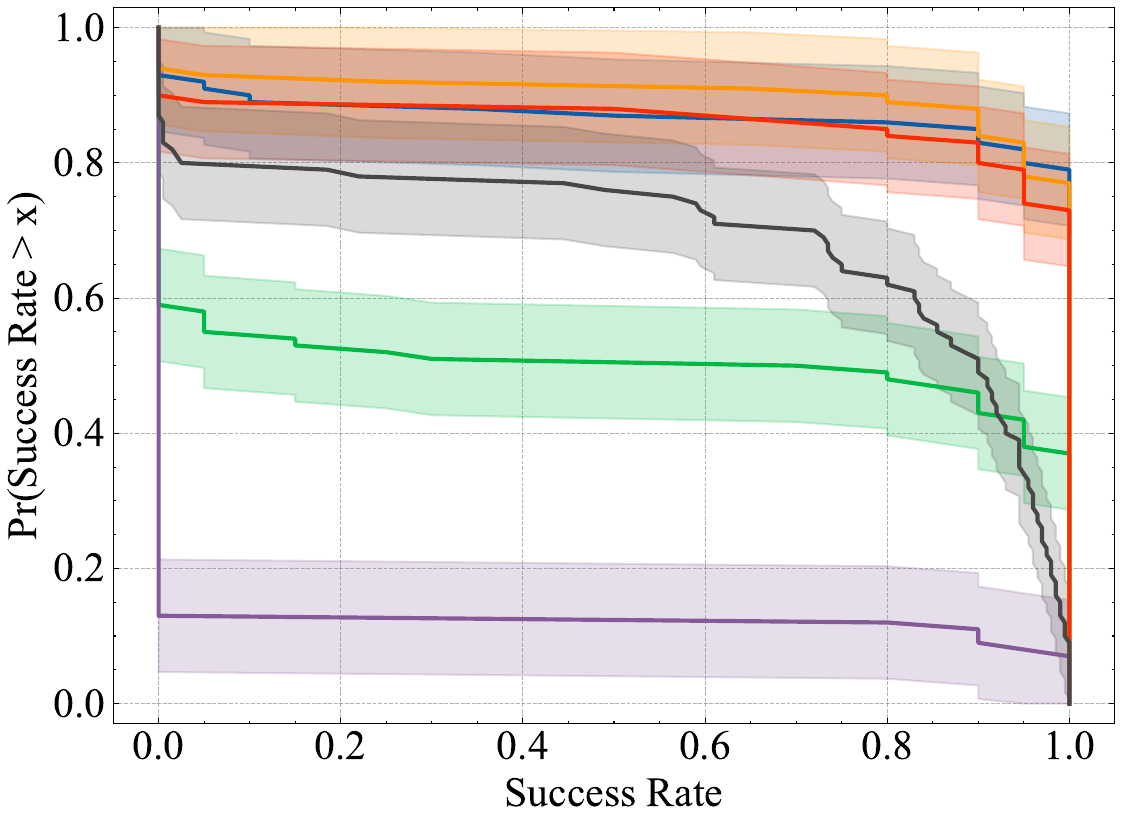}
        \end{subfigure}
        \begin{subfigure}[b]{0.45\textwidth}
            \includegraphics[width=1\textwidth,,trim=0 0 0 0,clip]{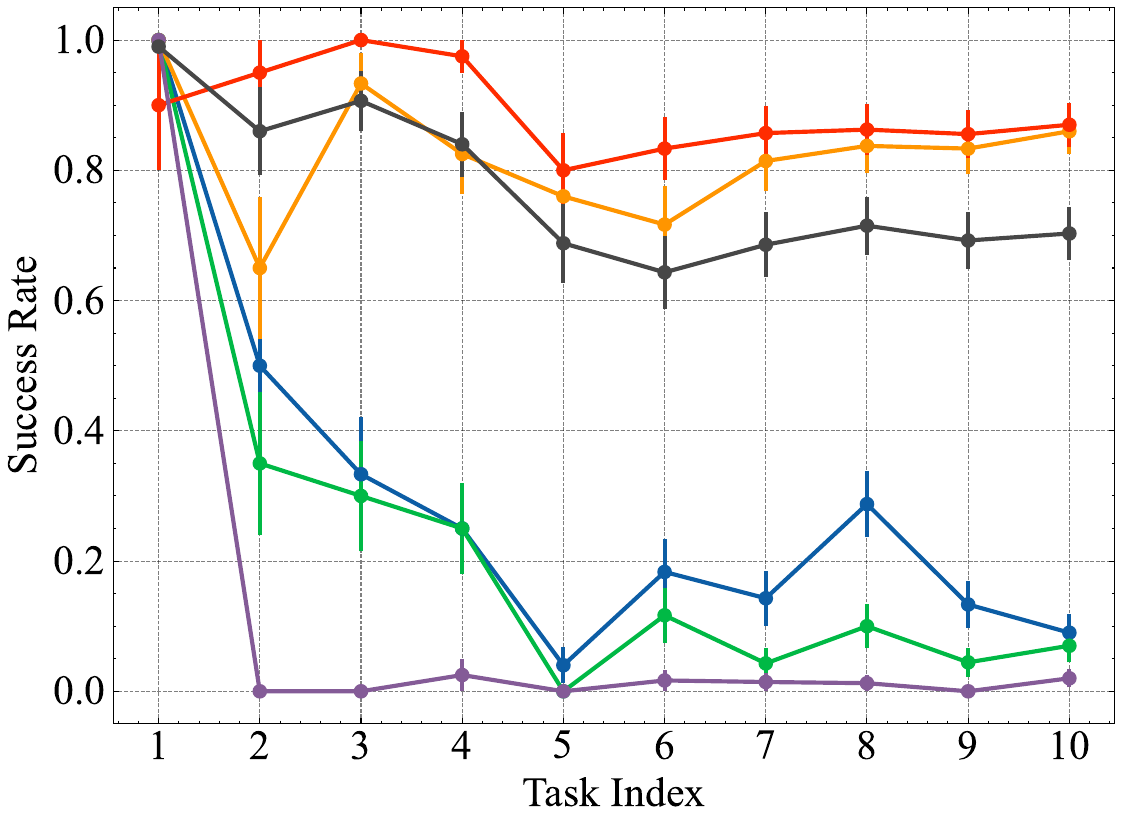}
        \end{subfigure}
                \begin{subfigure}[b]{0.45\textwidth}
            \includegraphics[width=1\textwidth,trim=0 0 0 0,clip]{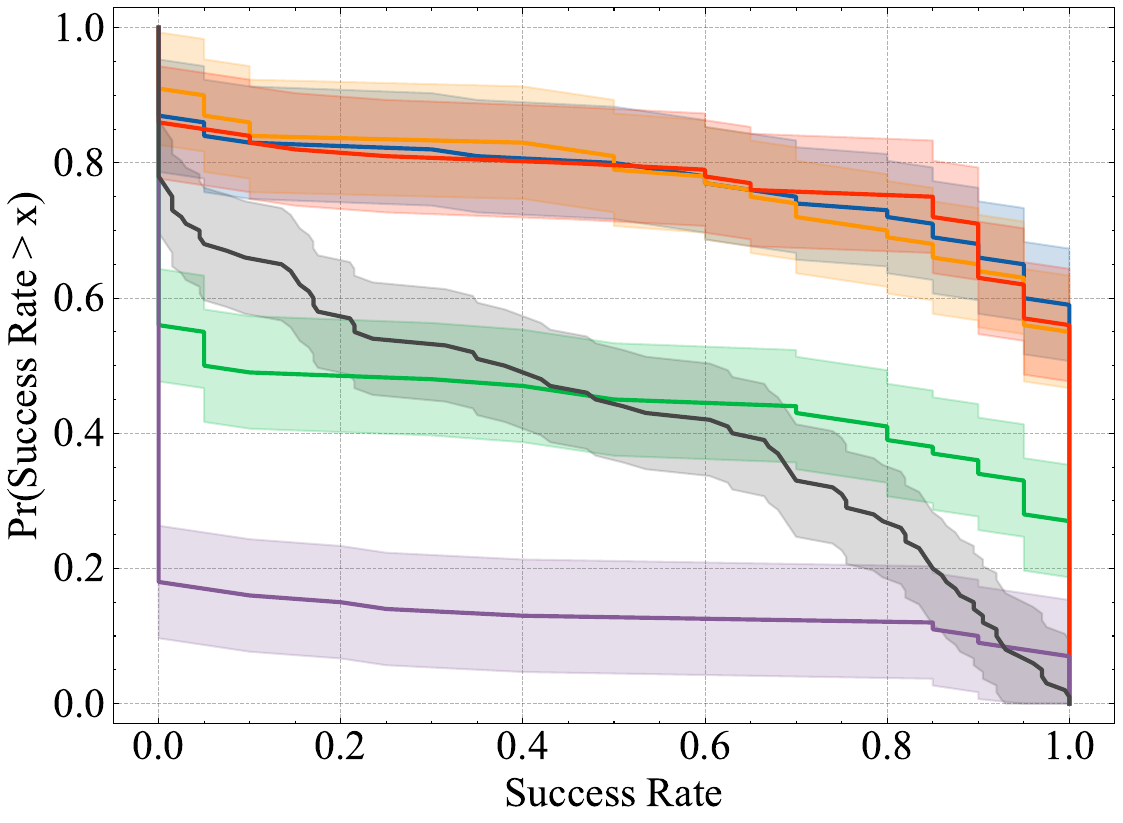}
        \end{subfigure}
        \begin{subfigure}[b]{0.45\textwidth}
            \includegraphics[width=1\textwidth,,trim=0 0 0 0,clip]{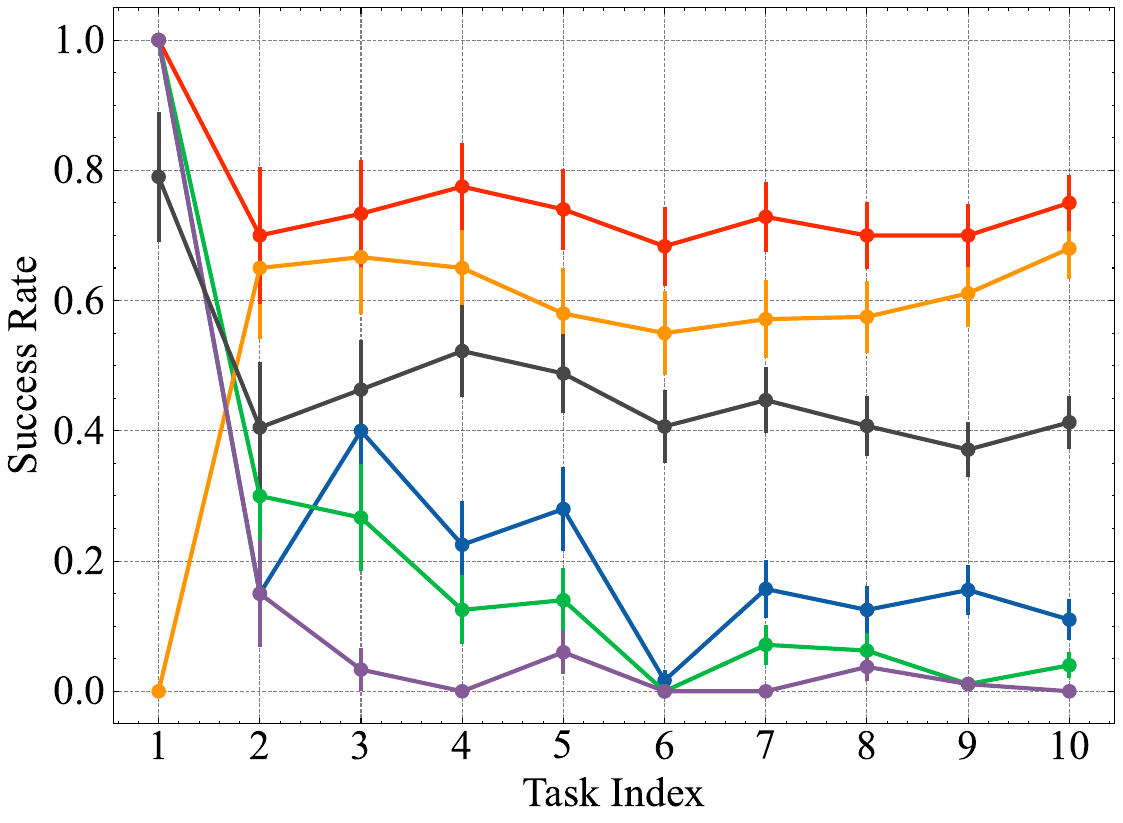}
        \end{subfigure}
        \begin{subfigure}[b]{0.45\textwidth}
            \includegraphics[width=1\textwidth,trim=0 0 0 0,clip]{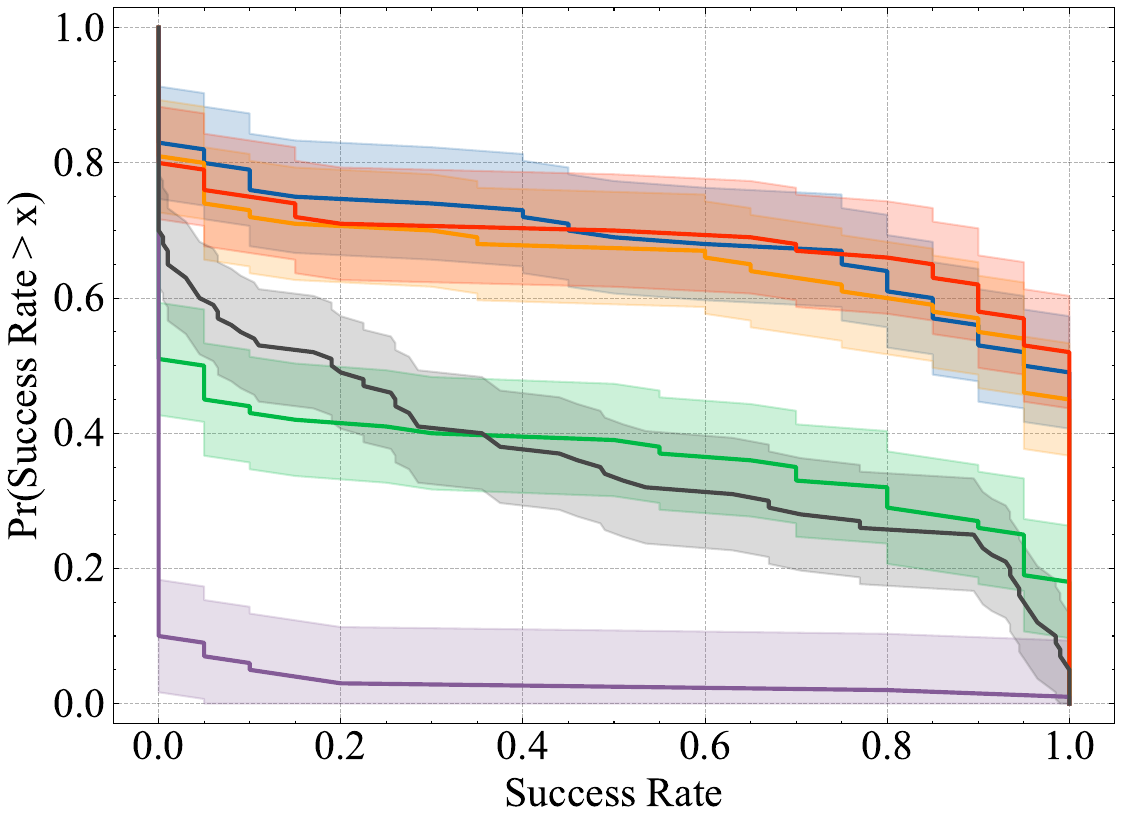}
        \end{subfigure}
        \begin{subfigure}[b]{0.45\textwidth}
            \includegraphics[width=1\textwidth,,trim=0 0 0 0,clip]{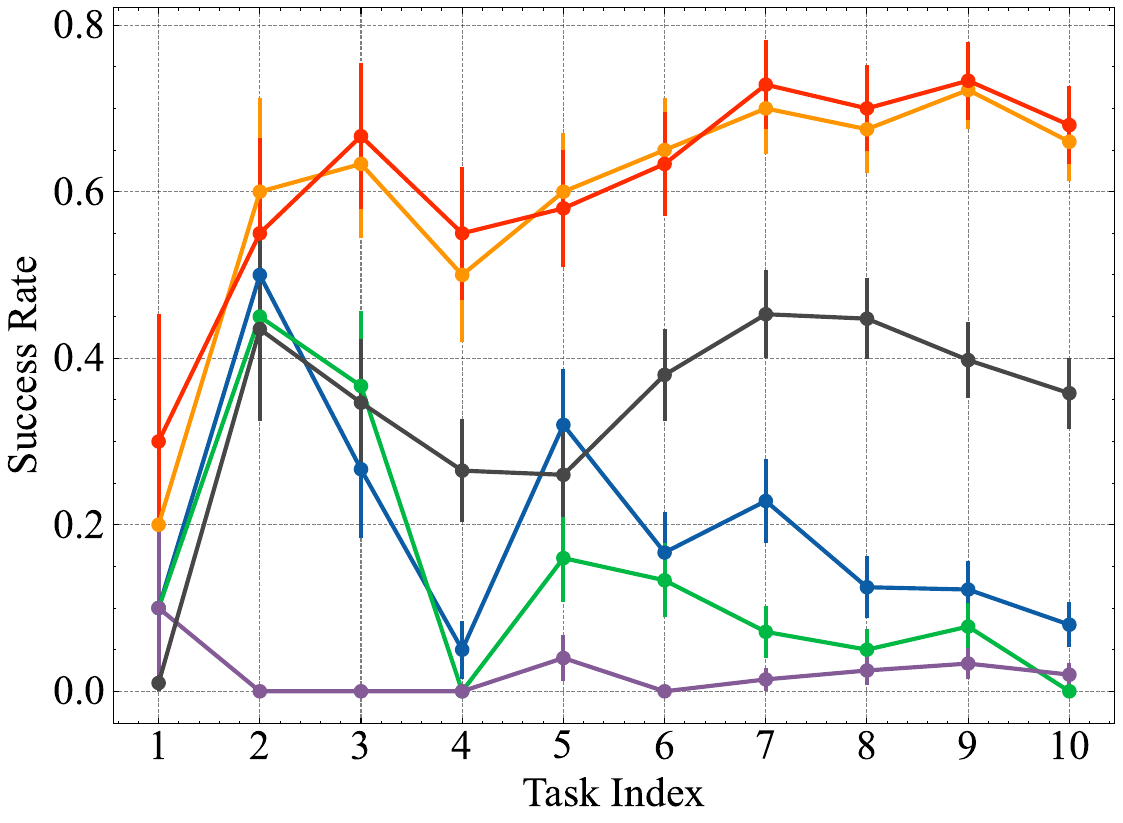}
        \end{subfigure}
        \begin{subfigure}[b]{0.45\textwidth}
            \includegraphics[width=1\textwidth,trim=0 0 0 0,clip]{pic/performance_profile_all_log_6+PackNet.pdf}
        \end{subfigure}
        \begin{subfigure}[b]{0.45\textwidth}
            \includegraphics[width=1\textwidth,,trim=0 0 0 0,clip]{pic/average_performance_all_log_6+PackNet.pdf}
        \end{subfigure}
            \end{center}
            \vspace{-5pt}
     \caption{\footnotesize \textbf{(Left)} Performance profile of the fast learner across tasks, where the y-axis shows the proportion of tasks that achieve a success rate greater than or equal to the x-axis value. \textbf{(Right)} Average performance over time by evaluating the average success rates in the past tasks.  \textbf{Each Row represents the result for one sequence of tasks. The last row shows the average performance of the three sequences.} \normalsize}
     \label{fig:profile_performance_all_10seeds}
    \end{figure}

\begin{figure}[htbp]
    \begin{center}
        \begin{subfigure}[b]{0.9\textwidth}
            \includegraphics[width=1\textwidth]{pic/legend_6.pdf}
        \end{subfigure} \\
        \vspace{0.05in}
        \begin{subfigure}[b]{1\textwidth}
            \includegraphics[width=1\textwidth]{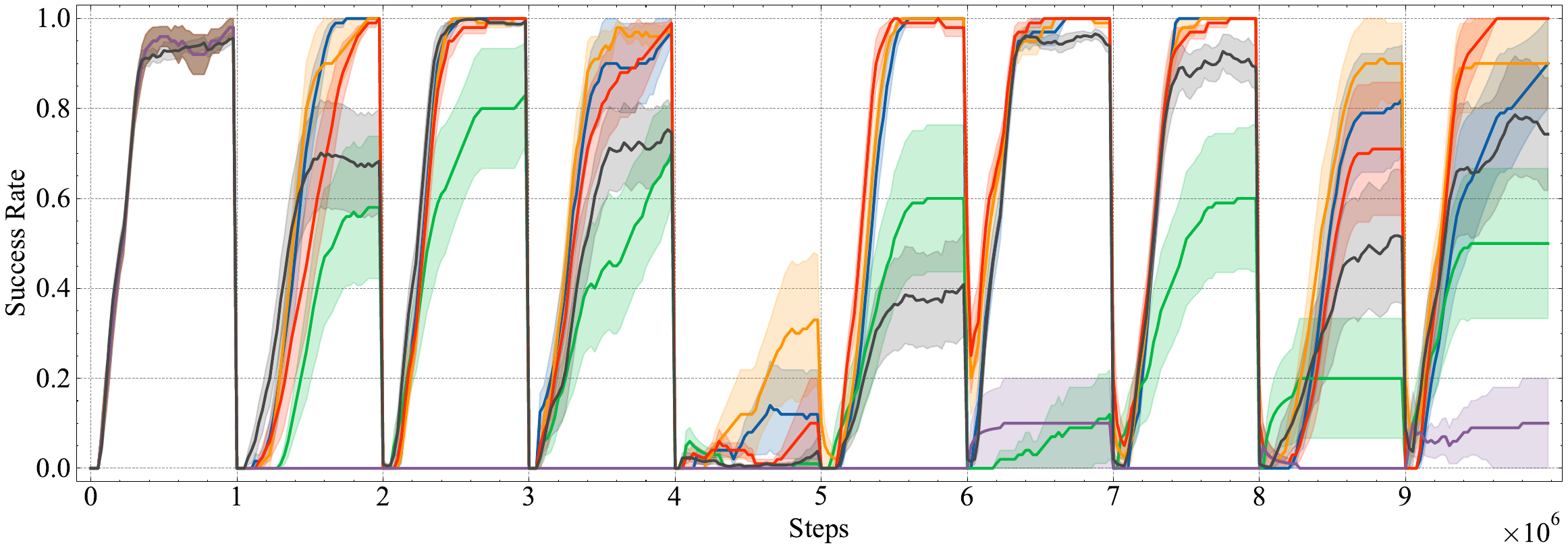}
        \end{subfigure}
                \begin{subfigure}[b]{1\textwidth}
            \includegraphics[width=1\textwidth]{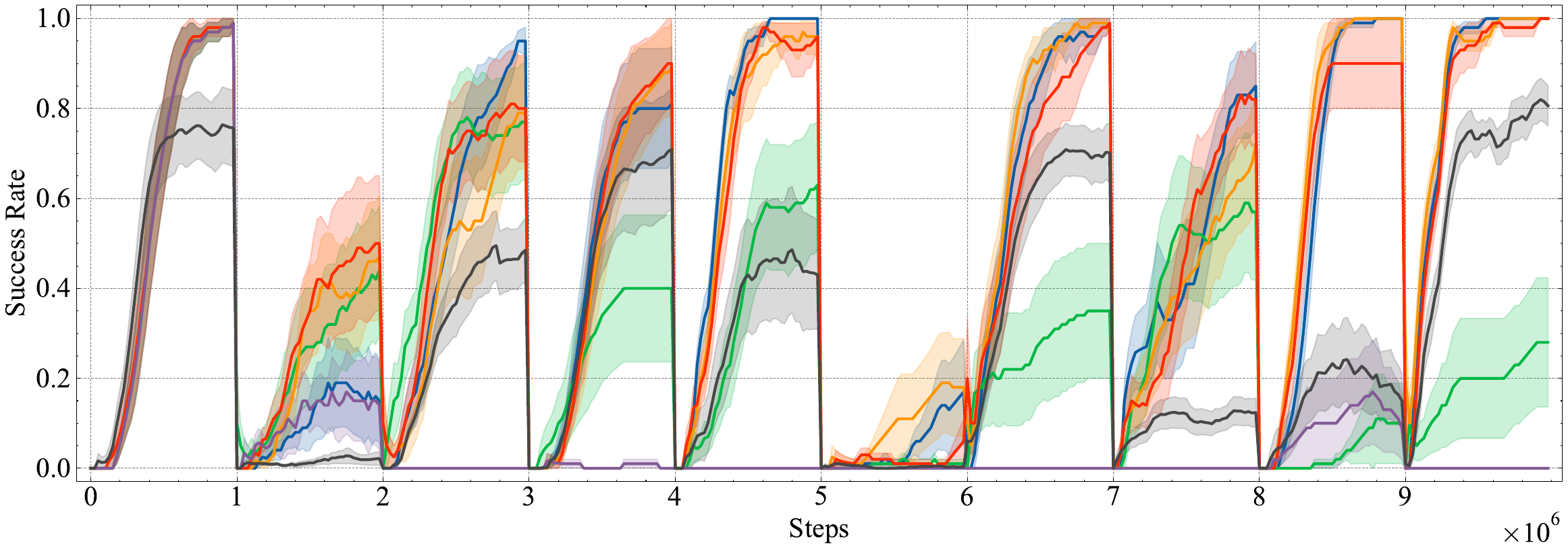}
        \end{subfigure}
        \begin{subfigure}[b]{1\textwidth}
            \includegraphics[width=1\textwidth]{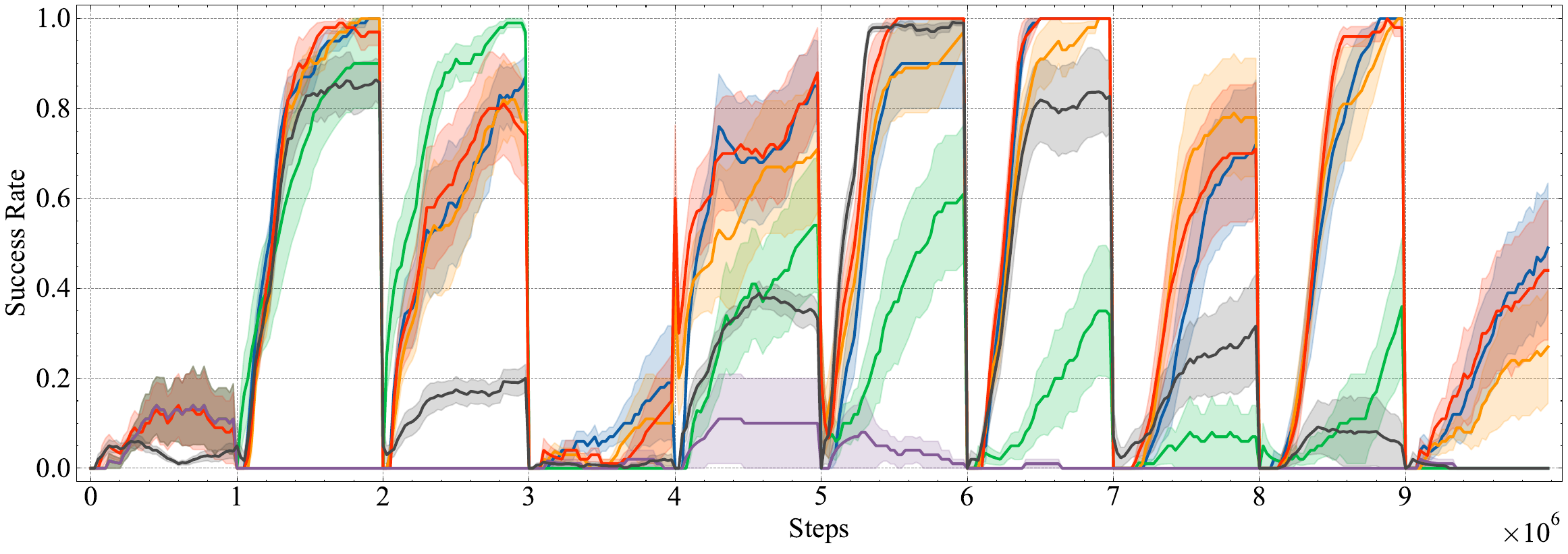}
        \end{subfigure}
        \begin{subfigure}[b]{1\textwidth}
            \includegraphics[width=1\textwidth]{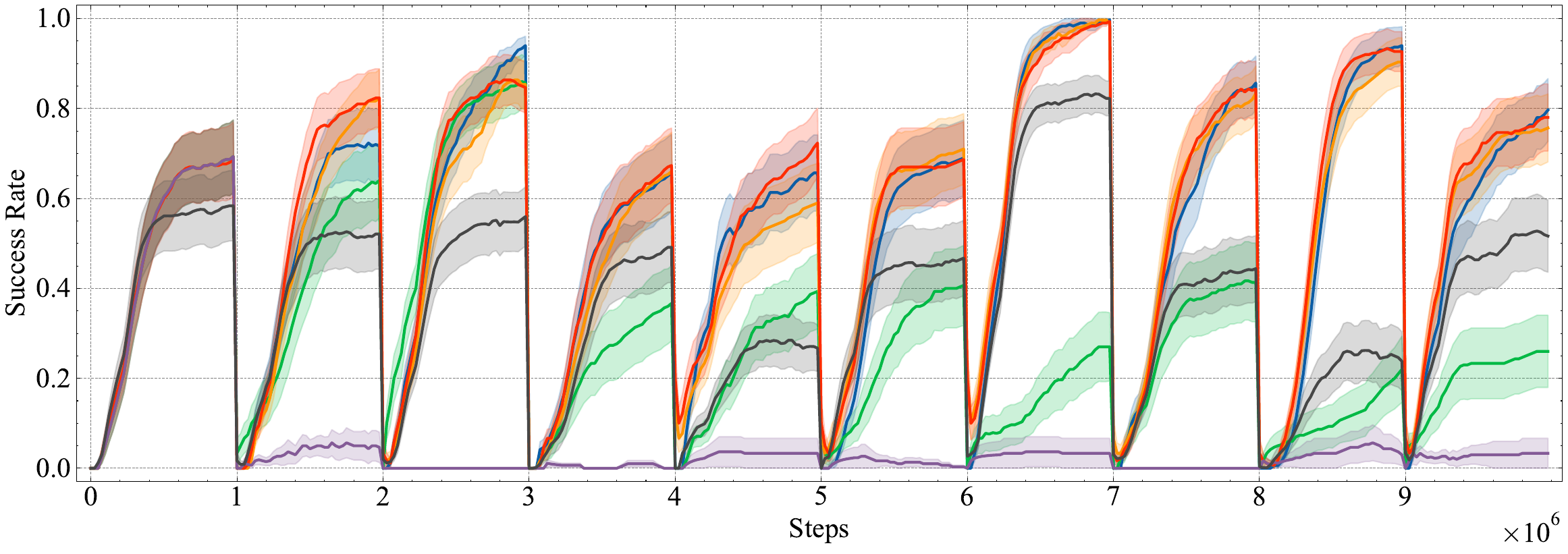}
        \end{subfigure}
            \end{center}
           \caption{\footnotesize Learning curves of the fast learner on the Meta-World benchmark. 
        The x-axis shows the total number of environment interactions, and the y-axis indicates the success rate.  \textbf{Each Row represents the result for one sequence of tasks. The last row shows the average performance of the three sequences.}
        \normalsize}
        \label{fig:policy_learning_curve_10seeds}
    \end{figure}

\section{The Use of Large Language Models (LLMs)}

In this study, large Language Models (LLMs) are only used for minor language polishing and editing. They do not contribute to the theory, methodology, or content development.

\clearpage
\section{Additional Results}

In the camera ready period of this study, we have also added more baselines: ProgressiveNet~\citep{rusu2016progressive} and CompoNet~\citep{malagon2024self}, and a new sequence of task from CW10 in Meta-World environment. The empirical results also suggest the superiority of our FAME algorithms. 

\subsection{More Baselines on the Three Sequences of Tasks in Meta-World}

\begin{table}[htbp]
\vspace{-1pt}
\centering
\caption{\footnotesize Results on Meta-World on Average Performance~(\textit{Ave. Perf}), Forward Transfer~(\textit{FT}), and Forgetting \textbf{for each sequence of tasks with more baselines}. Results are presented as averages and standard errors across 10 seeds. Each table, from top to bottom, represents each sequence of tasks, respectively.  The best results are highlighted in bold, and the second best are underlined. $\uparrow$ denotes a positive metric~(more is better), while $\downarrow$ is a negative one~(less is better). \textit{Reset} is the baseline for evaluating FT.
\normalsize} 
\scalebox{0.9}{
\begin{tabular}{l|c|c|c}
    \toprule
    \textbf{Methods}  & \textbf{Avg. Perf} $\uparrow$ & \textbf{FT} $\uparrow$ & \textbf{Forgetting} $\downarrow$ \\
    \midrule
    Reset  & 0.090 $\pm$ 0.029  & 0.000 $\pm$ 0.000 & 0.800 $\pm$ 0.040 \\
    Finetune  & 0.070 $\pm$ 0.026  & -0.294 $\pm$ 0.039 & 0.480 $\pm$ 0.050 \\
    Average & 0.020 $\pm$ 0.014  & -0.584 $\pm$ 0.035 & 0.100 $\pm$ 0.030  \\
    PackNet & 0.703 $\pm$ 0.041 & -0.111 $\pm$ 0.028 & \textbf{0.000 $\pm$ 0.000} \\
    ProgressiveNet & 0.760 $\pm$ 0.038 & -0.063 $\pm$ 0.031 & \textbf{0.000 $\pm$ 0.000} \\
    CompoNet & 0.803 $\pm$ 0.036 & -0.007 $\pm$ 0.022 & \textbf{0.000 $\pm$ 0.000} \\
    \hline
    FAME-WD   & \textbf{0.87 $\pm$ 0.034} & \underline{0.004 $\pm$ 0.022} & \underline{0.010 $\pm$ 0.017} \\
    FAME-KL  & \underline{0.86 $\pm$ 0.035} & \textbf{0.042 $\pm$ 0.019} & 0.050 $\pm$ 0.026 \\
    \bottomrule
\end{tabular}
} \\
\scalebox{0.9}{
\begin{tabular}{l|c|c|c}
    \toprule
    \textbf{Methods}  & \textbf{Avg. Perf} $\uparrow$ & \textbf{FT} $\uparrow$ & \textbf{Forgetting} $\downarrow$ \\
    \midrule
    Reset  & 0.110 $\pm$ 0.031  & 0.000 $\pm$ 0.000 & 0.680 $\pm$ 0.047 \\
    Finetune  & 0.040 $\pm$ 0.020  & -0.252 $\pm$ 0.045 & 0.440 $\pm$ 0.052 \\
    Average & 0.000 $\pm$ 0.000  & -0.496 $\pm$ 0.039 & 0.110 $\pm$ 0.031  \\
    PackNet & 0.413 $\pm$ 0.041 & -0.249 $\pm$ 0.031 & \textbf{0.000 $\pm$ 0.000}   \\
    ProgressiveNet & 0.604 $\pm$ 0.042 & -0.133 $\pm$ 0.03 & \textbf{0.000 $\pm$ 0.000} \\
    CompoNet & 0.566 $\pm$ 0.041 & -0.125 $\pm$ 0.027 & \textbf{0.000 $\pm$ 0.000} \\
    \hline
    FAME-WD   & \textbf{0.750 $\pm$ 0.044} &  \textbf{0.008 $\pm$ 0.024} & \underline{0.040 $\pm$ 0.032} \\
    FAME-KL  & \underline{0.680 $\pm$ 0.047} & \underline{0.004 $\pm$ 0.029} & 0.120 $\pm$ 0.036 \\
    \bottomrule
\end{tabular}
} \\
\scalebox{0.9}{
\begin{tabular}{l|c|c|c}
    \toprule
    \textbf{Methods}  & \textbf{Avg. Perf} $\uparrow$ & \textbf{FT} $\uparrow$ & \textbf{Forgetting} $\downarrow$ \\
    \midrule
    Reset  & 0.080 $\pm$ 0.027  & \underline{0.000 $\pm$ 0.000} & 0.650 $\pm$ 0.052 \\
    Finetune  & 0.000 $\pm$ 0.000  & -0.25 $\pm$ 0.036 & 0.360 $\pm$ 0.048 \\
    Average & 0.020 $\pm$ 0.014  & -0.509 $\pm$ 0.036 & \underline{0.000 $\pm$ 0.002}  \\
    PackNet & 0.358 $\pm$ 0.043 & -0.222 $\pm$ 0.032 & \textbf{0.000 $\pm$ 0.000} \\
    ProgressiveNet & 0.494 $\pm$ 0.044 & -0.137 $\pm$ 0.031 & \textbf{0.000 $\pm$ 0.000} \\
    CompoNet & 0.468 $\pm$ 0.045 & -0.136 $\pm$ 0.037 & \textbf{0.000 $\pm$ 0.000} \\
    \hline
    FAME-WD   & \textbf{0.680 $\pm$ 0.047} & -0.020 $\pm$ 0.028 & 0.020 $\pm$ 0.032 \\
    FAME-KL  & \underline{0.660 $\pm$ 0.048}& \textbf{0.022 $\pm$ 0.028} & 0.050 $\pm$ 0.030 \\
    \bottomrule
\end{tabular}
}
\end{table}

\begin{table}[htbp]
\vspace{-10pt}
\centering
\caption{\footnotesize Results on Meta-World \textbf{averaged over the three sequences with more baselines} on Average Performance~(\textit{Ave. Perf}), Forward Transfer~(\textit{FT}), and Forgetting. Results are presented as averages and standard errors across 10 seeds. The best results are highlighted in bold, and the second best are underlined. $\uparrow$ denotes a positive metric~(more is better), while $\downarrow$ is a negative one~(less is better). \textit{Reset} is the baseline for evaluating FT.
\normalsize} 
\scalebox{0.9}{
\begin{tabular}{l|c|c|c}
    \toprule
    \textbf{Methods}  & \textbf{Avg. Perf} $\uparrow$ & \textbf{FT} $\uparrow$ & \textbf{Forgetting} $\downarrow$ \\
    \midrule
    Reset  & 0.093 $\pm$ 0.017  & \underline{0.000 $\pm$ 0.000} & 0.710 $\pm$ 0.030 \\
    Finetune  & 0.037 $\pm$ 0.011  & -0.265 $\pm$ 0.028 & 0.427 $\pm$ 0.033 \\
    Average & 0.013 $\pm$ 0.007  & -0.530 $\pm$ 0.024 & 0.070 $\pm$ 0.022  \\
    PackNet & 0.491 $\pm$ 0.025 & -0.194 $\pm$ 0.018 & \textbf{0.000 $\pm$ 0.000} \\
    ProgressiveNet & 0.619 $\pm$ 0.025 & -0.111 $\pm$ 0.016 & \textbf{0.000 $\pm$ 0.000} \\
    CompoNet & 0.612 $\pm$ 0.025 & -0.089 $\pm$ 0.016 & \textbf{0.000 $\pm$ 0.000} \\
    \hline
    FAME-WD   & \textbf{0.767 $\pm$ 0.024} & -0.003 $\pm$ 0.014 & \underline{0.023 $\pm$ 0.015} \\
    FAME-KL  & \underline{0.733 $\pm$ 0.026} & \textbf{0.022 $\pm$ 0.015} & 0.073 $\pm$ 0.019 \\
    \bottomrule
\end{tabular}
}
\end{table}

\begin{figure}[b!]
\begin{center}
    \begin{subfigure}[b]{0.9\textwidth}
        \includegraphics[width=1\textwidth]{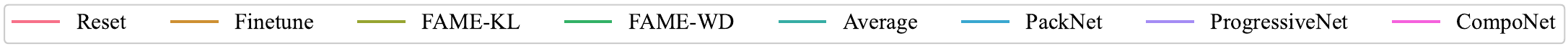}
    \end{subfigure} \\
    \vspace{0.05in}
    \begin{subfigure}[b]{0.45\textwidth}
        \includegraphics[width=1\textwidth,trim=0 0 0 0,clip]{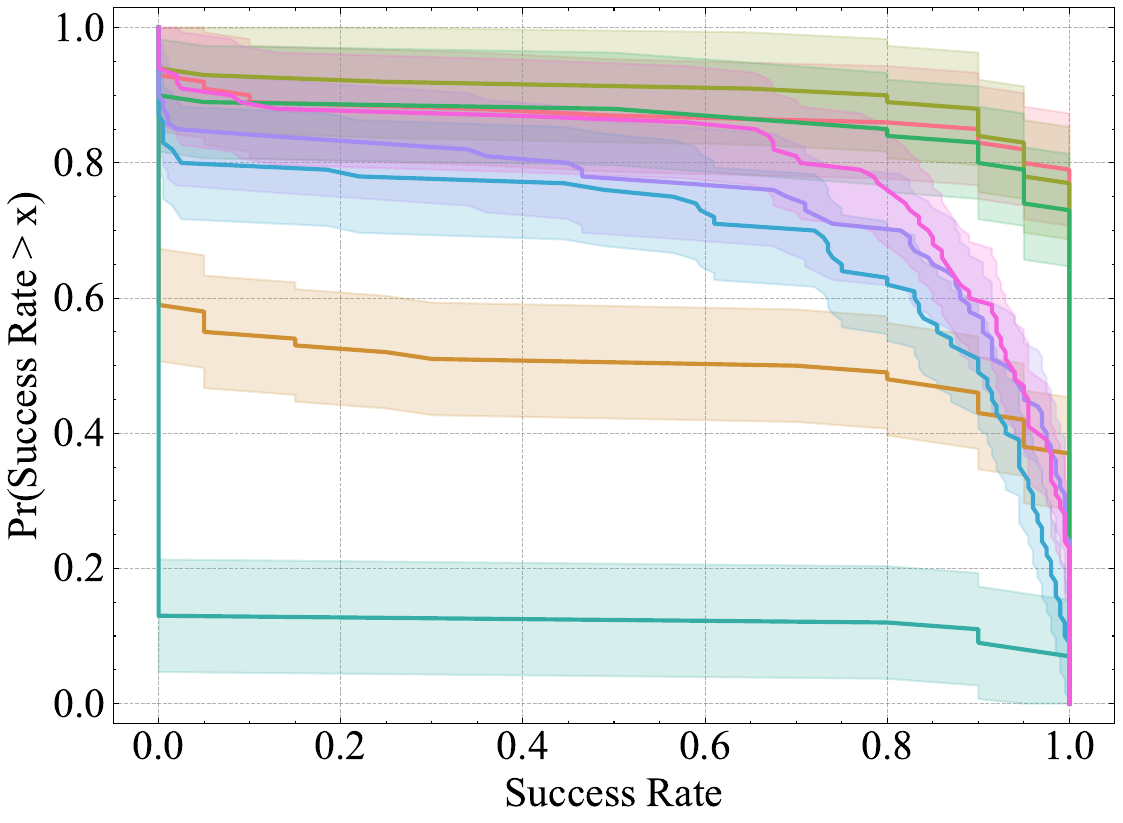}
    \end{subfigure}
    \begin{subfigure}[b]{0.45\textwidth}
        \includegraphics[width=1\textwidth,,trim=0 0 0 0,clip]{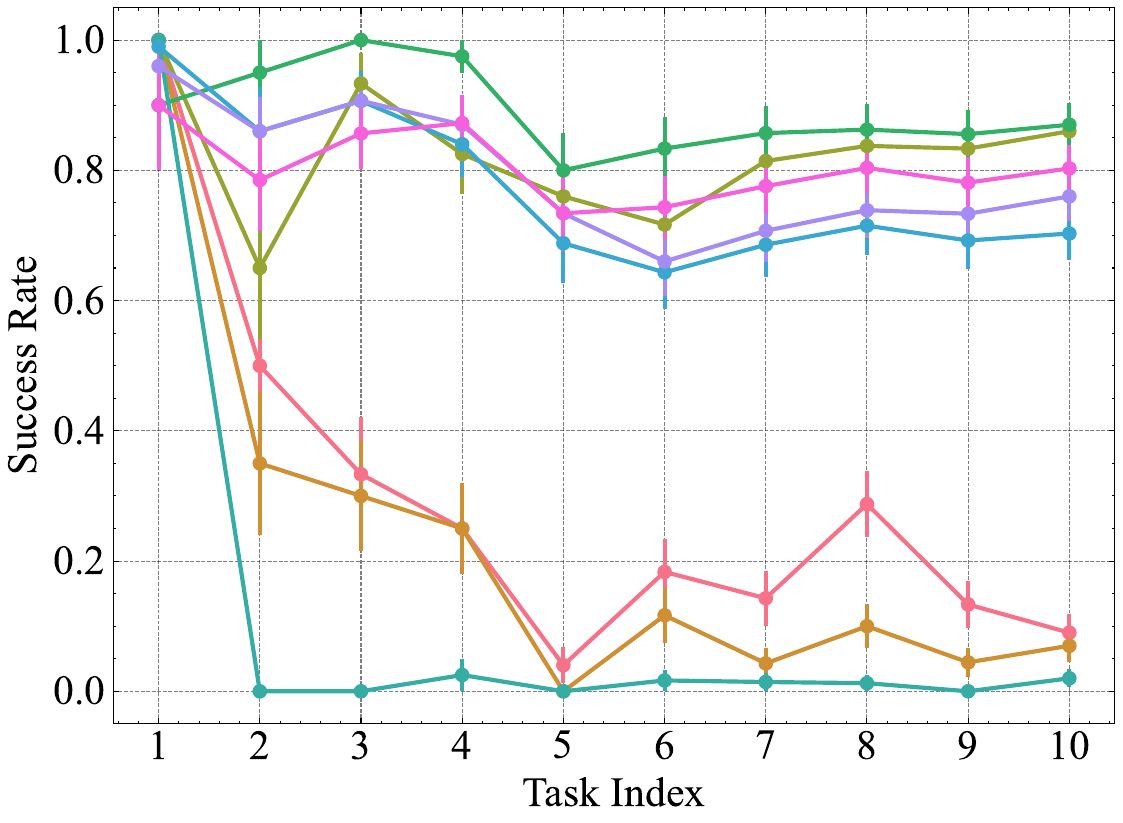}
    \end{subfigure}
            \begin{subfigure}[b]{0.45\textwidth}
        \includegraphics[width=1\textwidth,trim=0 0 0 0,clip]{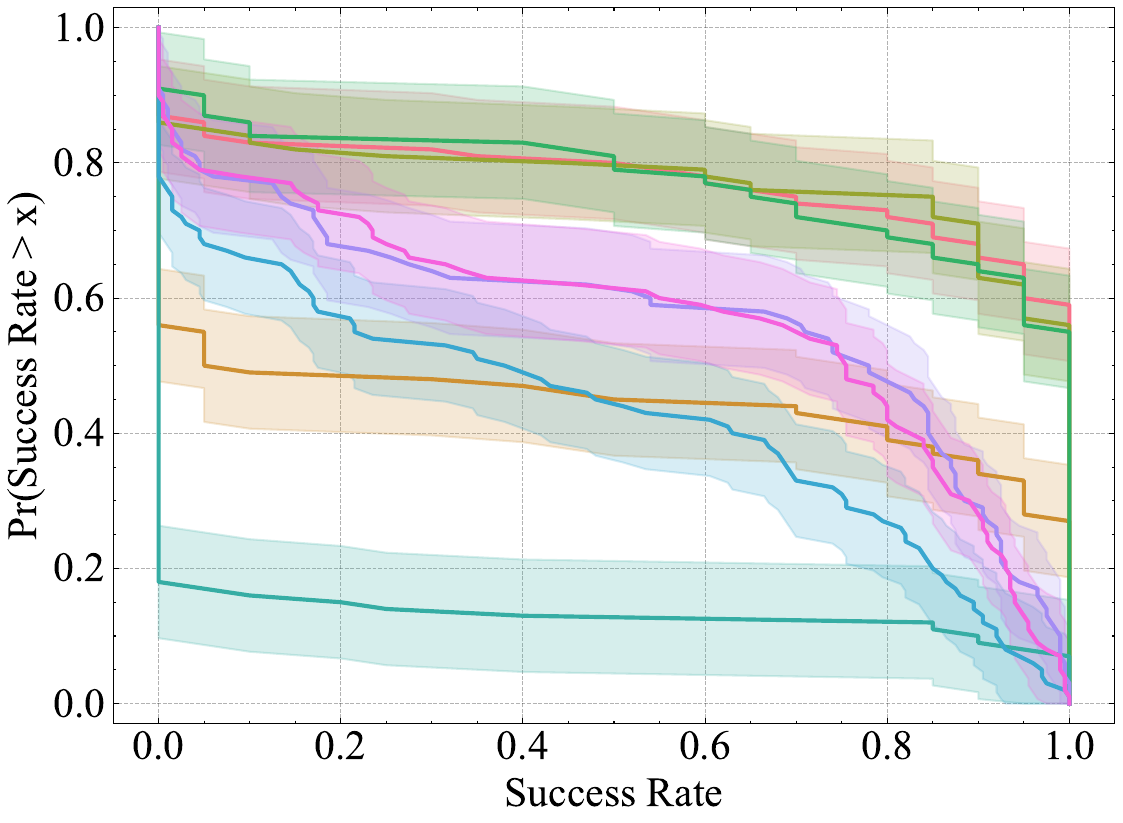}
    \end{subfigure}
    \begin{subfigure}[b]{0.45\textwidth}
        \includegraphics[width=1\textwidth,,trim=0 0 0 0,clip]{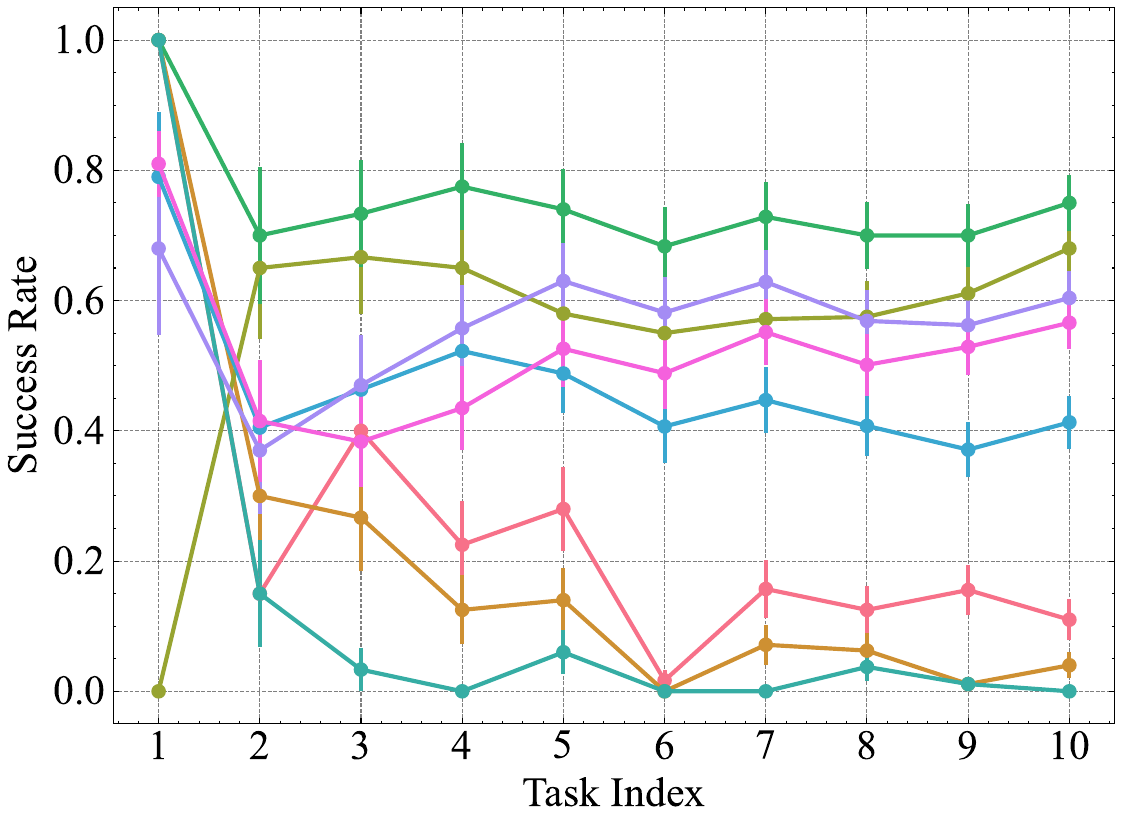}
    \end{subfigure}
    \begin{subfigure}[b]{0.45\textwidth}
        \includegraphics[width=1\textwidth,trim=0 0 0 0,clip]{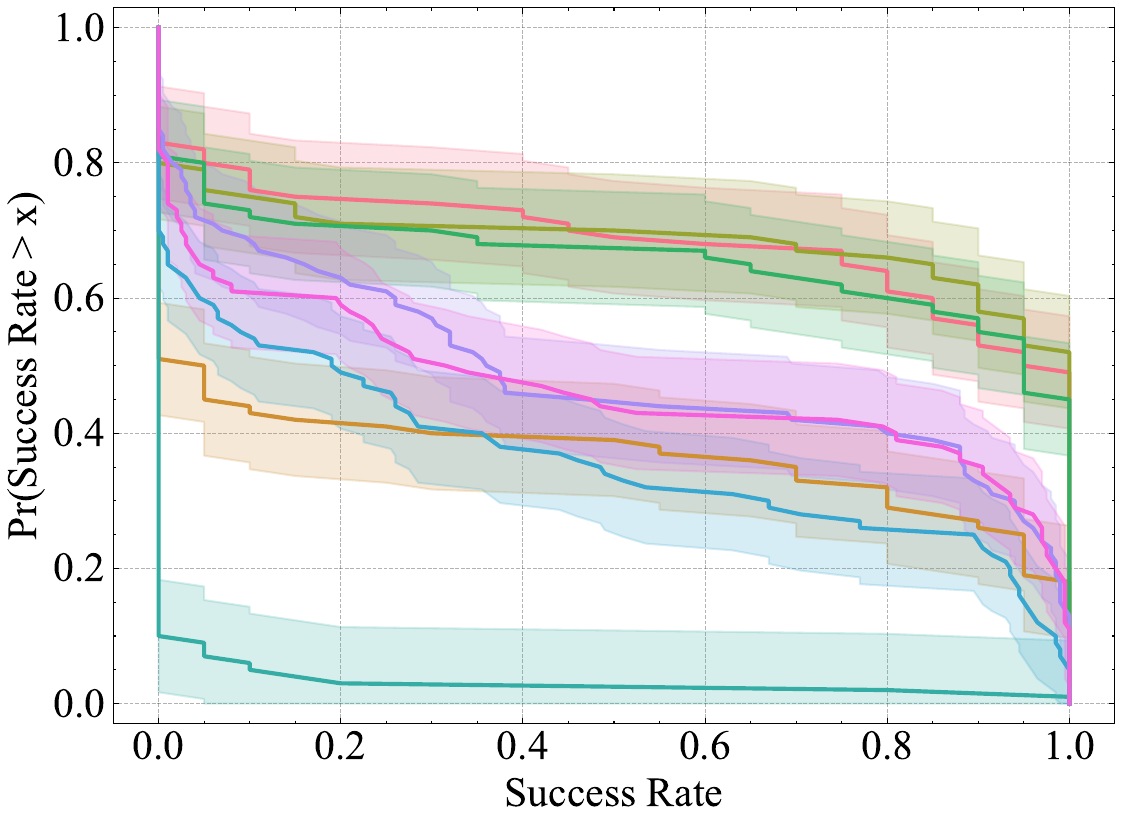}
    \end{subfigure}
    \begin{subfigure}[b]{0.45\textwidth}
        \includegraphics[width=1\textwidth,,trim=0 0 0 0,clip]{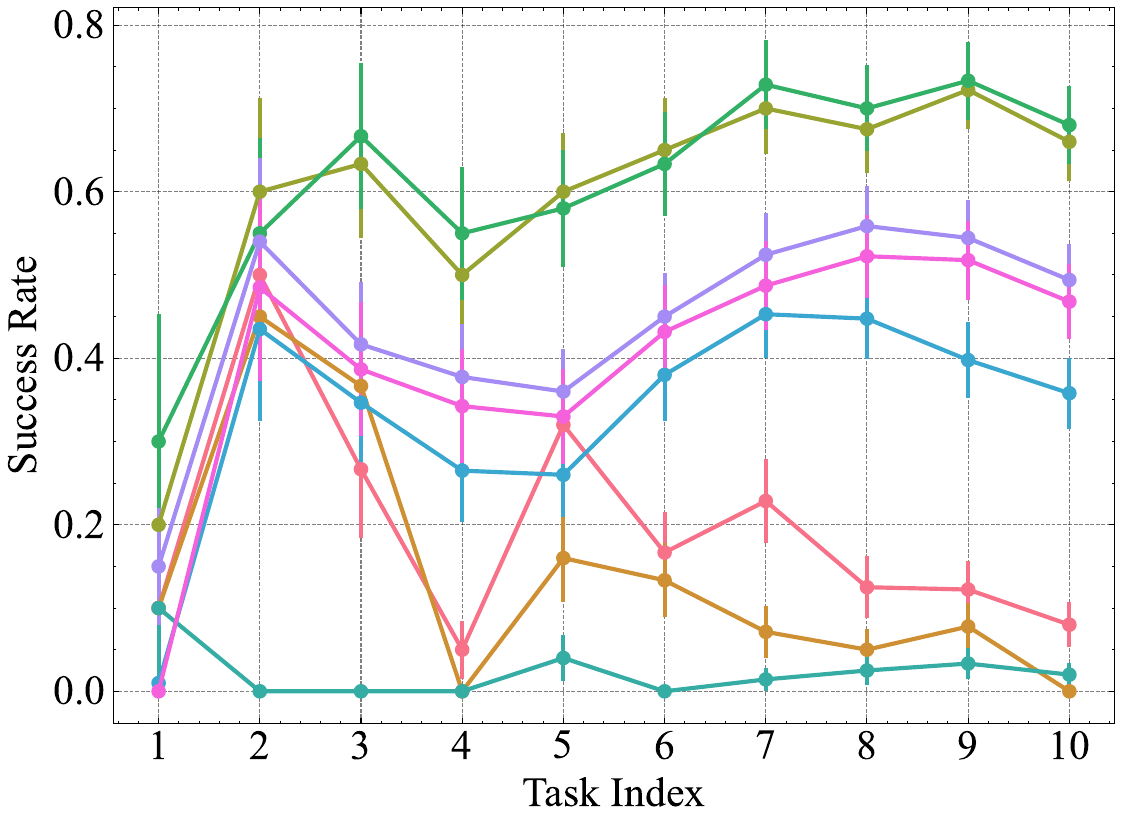}
    \end{subfigure}
    \begin{subfigure}[b]{0.45\textwidth}
        \includegraphics[width=1\textwidth,trim=0 0 0 0,clip]{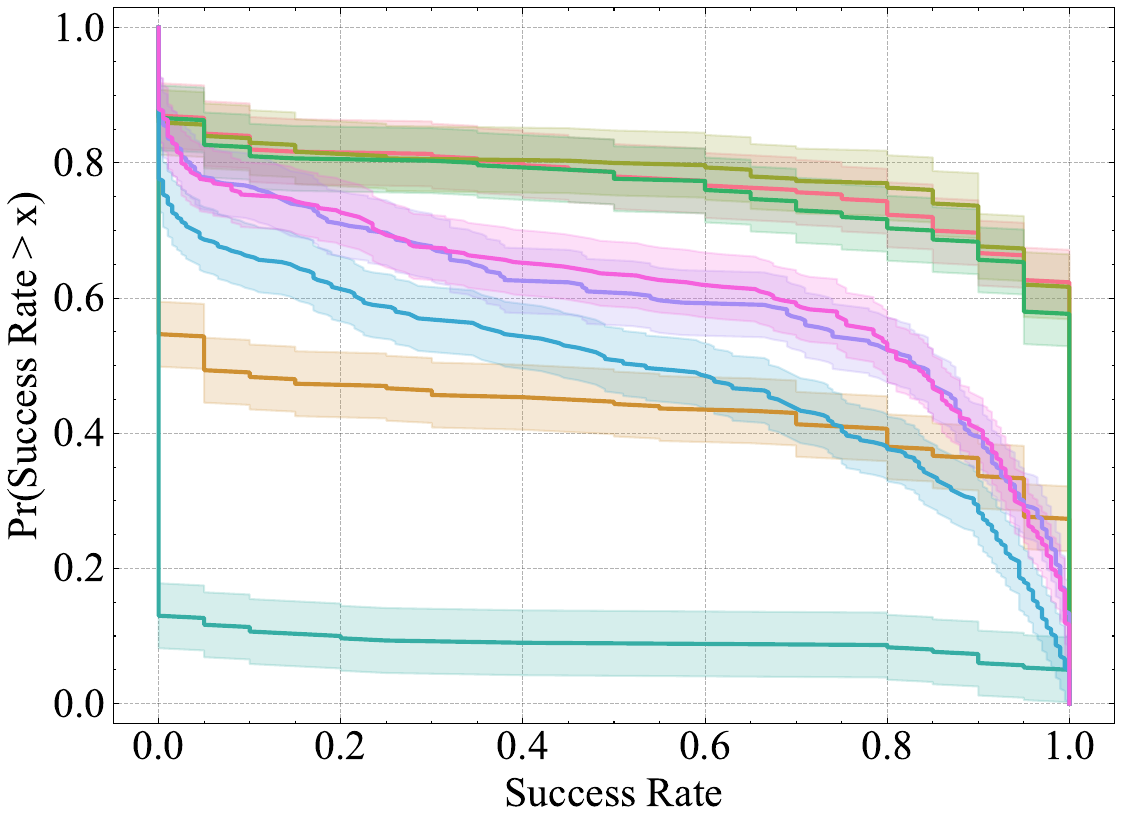}
    \end{subfigure}
    \begin{subfigure}[b]{0.45\textwidth}
        \includegraphics[width=1\textwidth,,trim=0 0 0 0,clip]{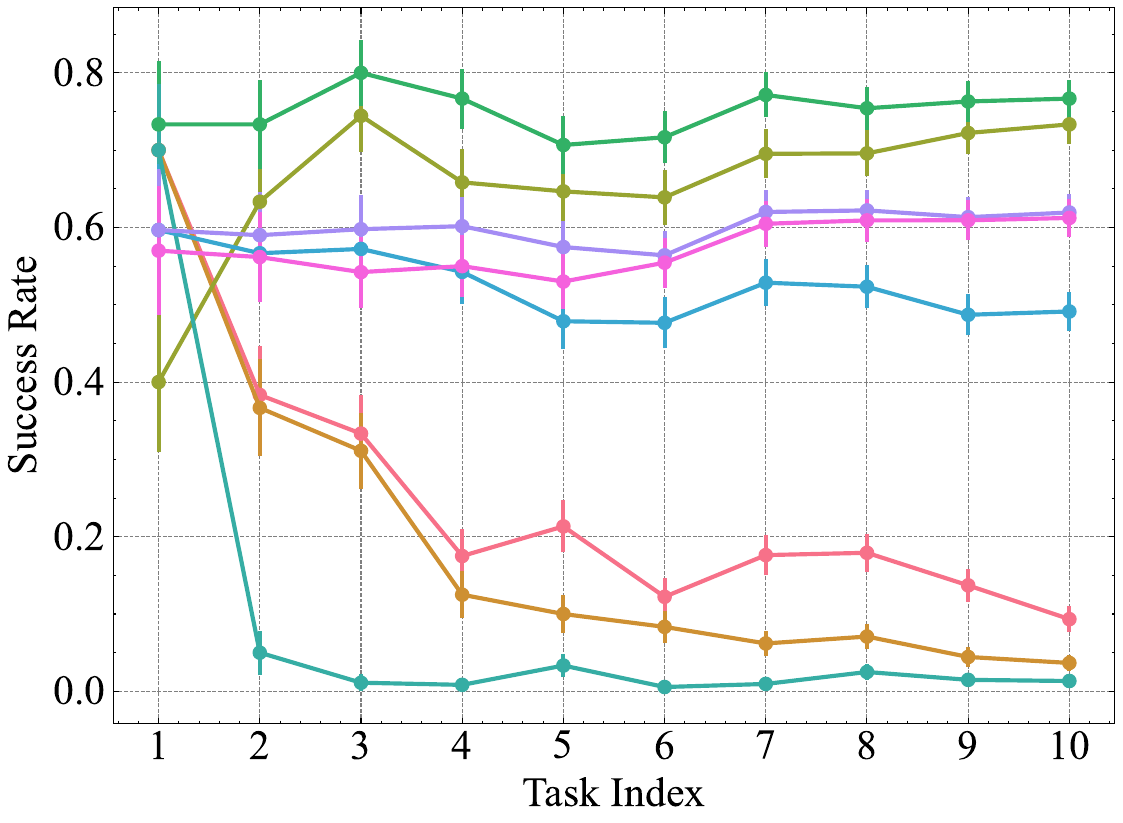}
    \end{subfigure}
        \end{center}
        \vspace{-5pt}
 \caption{\footnotesize \textbf{(Left)} Performance profile of the fast learner across tasks \textbf{with more baselines}, where the y-axis shows the proportion of tasks that achieve a success rate greater than or equal to the x-axis value. \textbf{(Right)} Average performance over time by evaluating the average success rates in the past tasks.  \textbf{Each Row represents the result for one sequence of tasks. The last row shows the average performance of the three sequences.} \normalsize}
\end{figure}

\begin{figure}[htbp]
    \begin{center}
        \begin{subfigure}[b]{1\textwidth}
            \includegraphics[width=1\textwidth]{pic/legend_8.pdf}
        \end{subfigure} \\
        \vspace{0.05in}
        \begin{subfigure}[b]{1\textwidth}
            \includegraphics[width=1\textwidth]{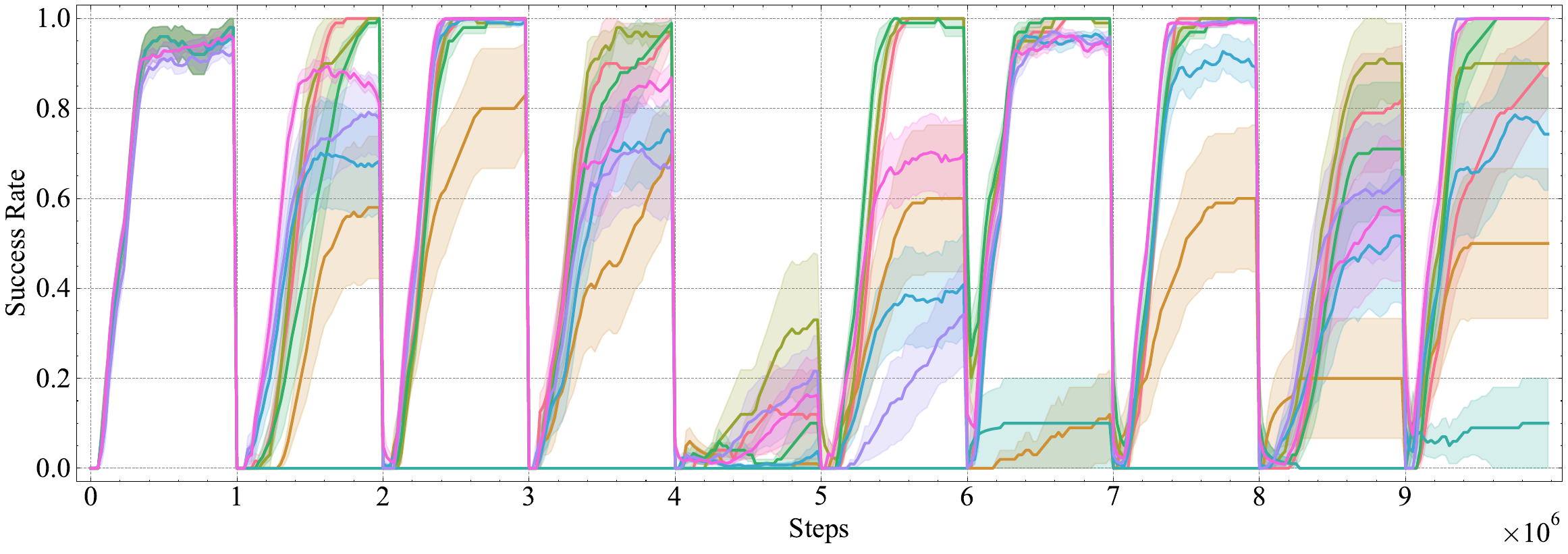}
        \end{subfigure}
                \begin{subfigure}[b]{1\textwidth}
            \includegraphics[width=1\textwidth]{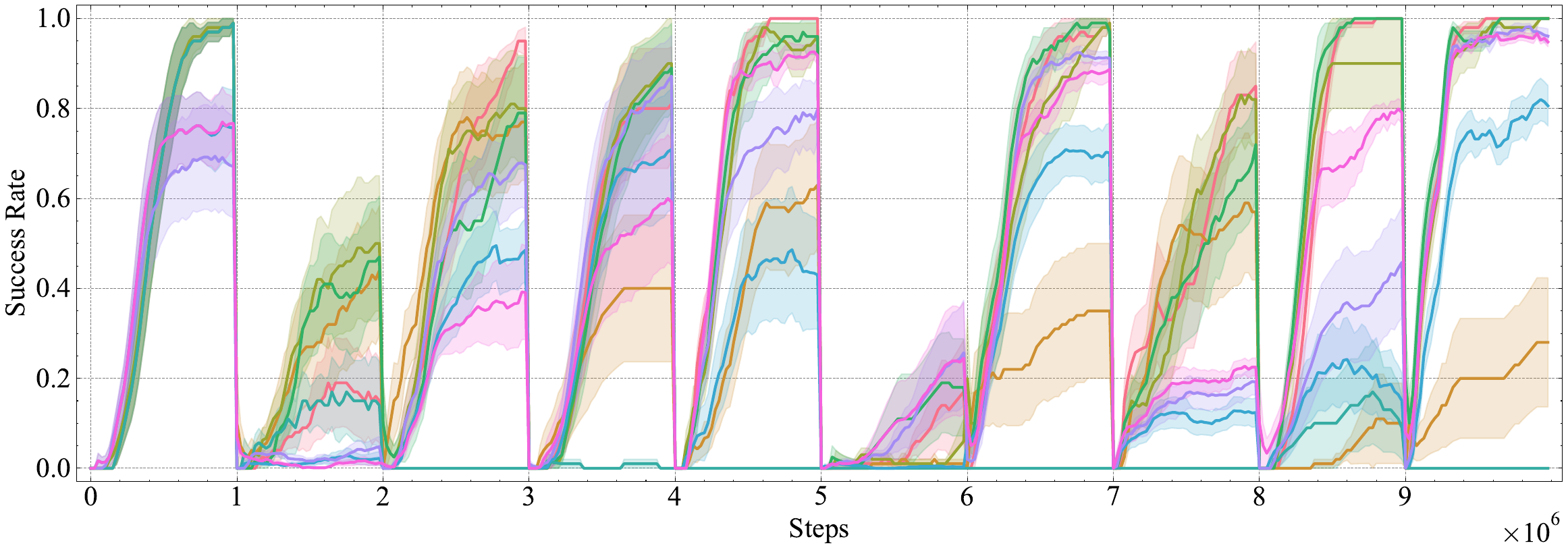}
        \end{subfigure}
        \begin{subfigure}[b]{1\textwidth}
            \includegraphics[width=1\textwidth]{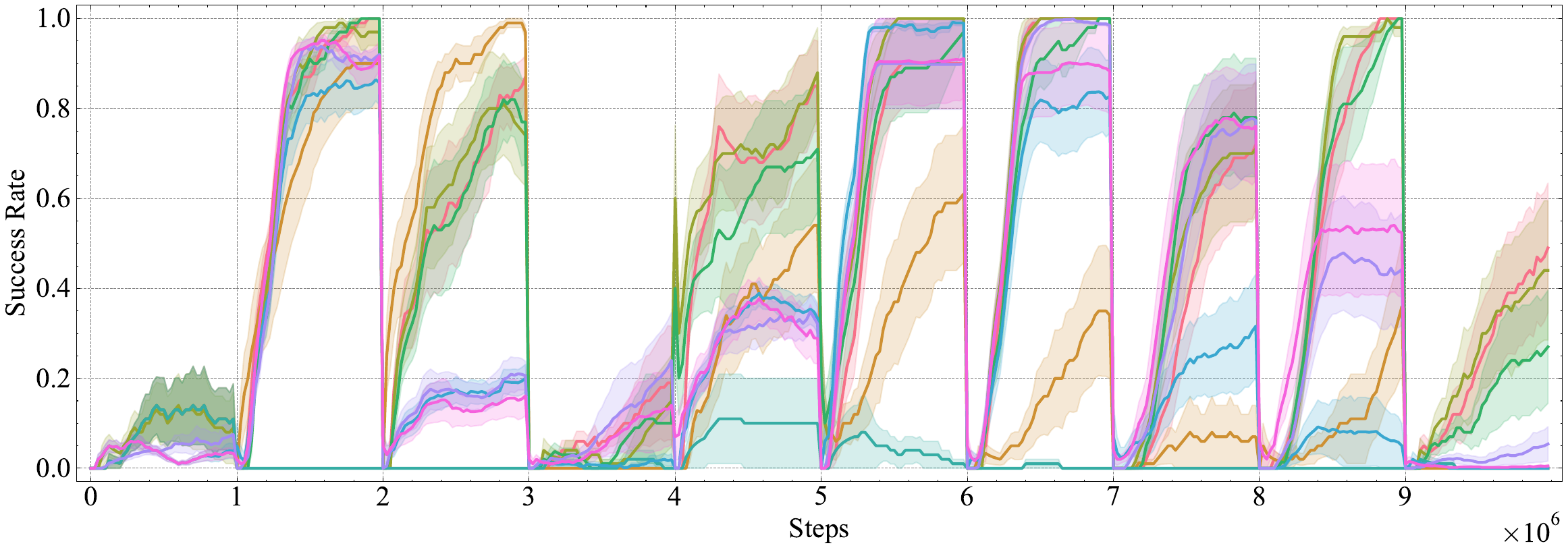}
        \end{subfigure}
        \begin{subfigure}[b]{1\textwidth}
            \includegraphics[width=1\textwidth]{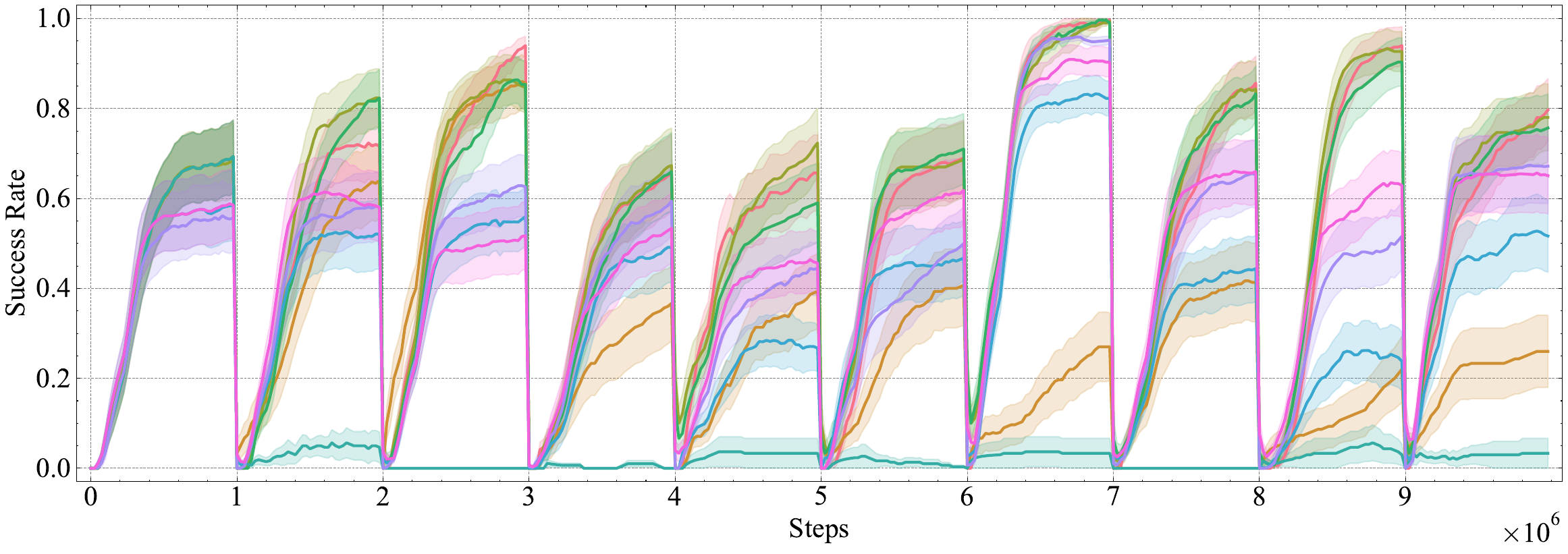}
        \end{subfigure}
            \end{center}
           \caption{\footnotesize Learning curves of the fast learner on the Meta-World benchmark \textbf{with more baselines}.  
        The x-axis shows the total number of environment interactions, and the y-axis indicates the success rate.  \textbf{Each Row represents the result for one sequence of tasks. The last row shows the average performance of the three sequences.}
        \normalsize}
\end{figure}

\clearpage

\subsection{Additional Sequence of Task: CW10}

Due to time constraints, the baselines of PackNet, ProgressiveNet and CompoNet on CW10 were run with 3 random seeds, while all other baselines were run with 10 random seeds. 

\begin{table}[htbp]
\centering
\caption{\footnotesize Results on the sequence of tasks in CW10~\citep{wolczyk2021continual} on Average Performance~(\textit{Ave. Perf}), Forward Transfer~(\textit{FT}), and Forgetting. Results are presented as averages and standard errors across 10 seeds. The best results are highlighted in bold, and the second best are underlined. $\uparrow$ denotes a positive metric~(more is better), while $\downarrow$ is a negative one~(less is better). \textit{Reset} is the baseline for evaluating FT.
\normalsize} 
\vspace{-5pt}
\scalebox{0.9}{
\begin{tabular}{l|c|c|c}
    \toprule
    \textbf{Methods}  & \textbf{Avg. Perf} $\uparrow$ & \textbf{FT} $\uparrow$ & \textbf{Forgetting} $\downarrow$ \\
    \midrule
    Reset  & 0.020 $\pm$ 0.014  & \underline{0.000 $\pm$ 0.000} & 0.370 $\pm$ 0.049 \\
    Finetune  & 0.020 $\pm$ 0.014  & -0.123 $\pm$ 0.028 & 0.240 $\pm$ 0.047 \\
    Average & 0.010 $\pm$ 0.01  & -0.288 $\pm$ 0.039 & \underline{0.010 $\pm$ 0.017}  \\
    PackNet & 0.292 $\pm$ 0.071 & -0.086 $\pm$ 0.019 & \textbf{0.000 $\pm$ 0.000} \\
    ProgressiveNet & 0.325 $\pm$ 0.074 & -0.034 $\pm$ 0.016 & \textbf{0.000 $\pm$ 0.000} \\
    CompoNet & 0.208 $\pm$ 0.063 & -0.129 $\pm$ 0.029 & \textbf{0.000 $\pm$ 0.000} \\
    \hline
    FAME-WD   & \underline{0.340 $\pm$ 0.048} & -0.003 $\pm$ 0.016 & 0.030 $\pm$ 0.017 \\
    FAME-KL  & \textbf{0.370 $\pm$ 0.049} & \textbf{0.015 $\pm$ 0.017} & 0.060 $\pm$ 0.024 \\
    \bottomrule
\end{tabular}
}
\end{table}

\begin{figure}[htbp]
\begin{center}
    \begin{subfigure}[b]{0.9\textwidth}
        \includegraphics[width=1\textwidth]{pic/legend_8.pdf}
    \end{subfigure} \\
    \vspace{0.05in}
    \begin{subfigure}[b]{0.45\textwidth}
        \includegraphics[width=1\textwidth,trim=0 0 0 0,clip]{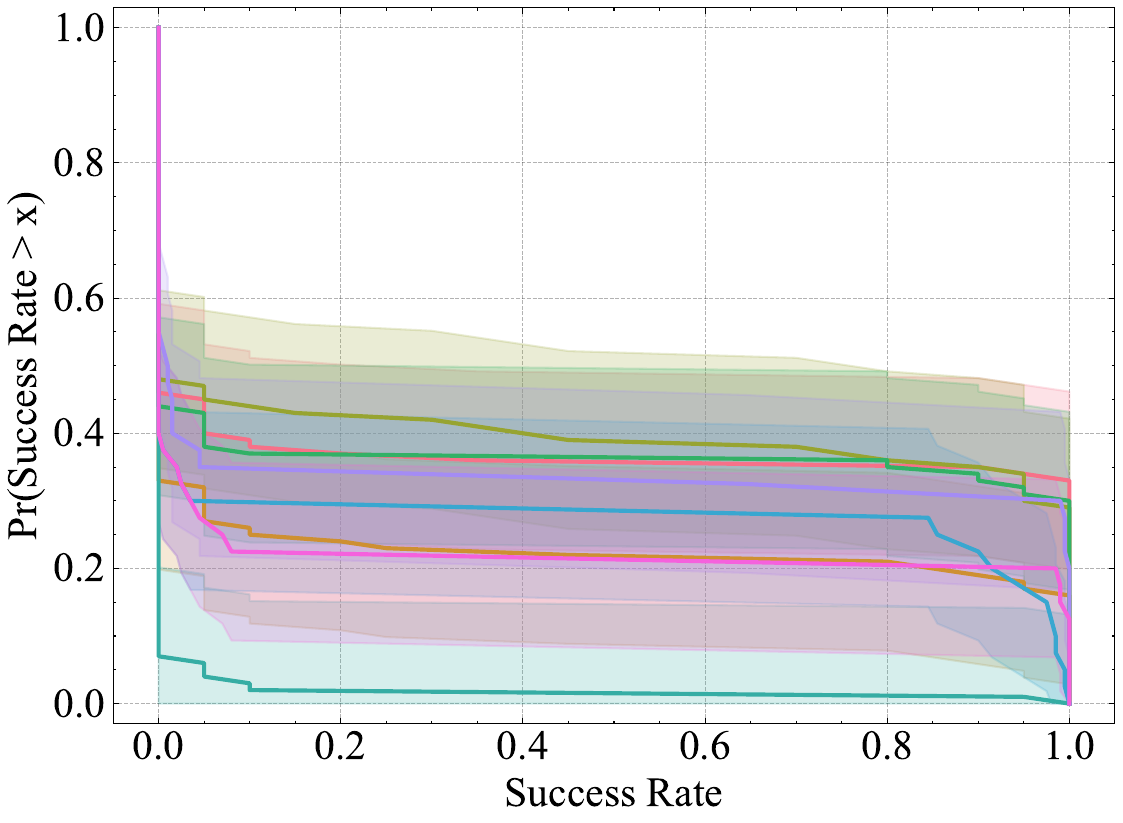}
    \end{subfigure}
    \begin{subfigure}[b]{0.45\textwidth}
        \includegraphics[width=1\textwidth,,trim=0 0 0 0,clip]{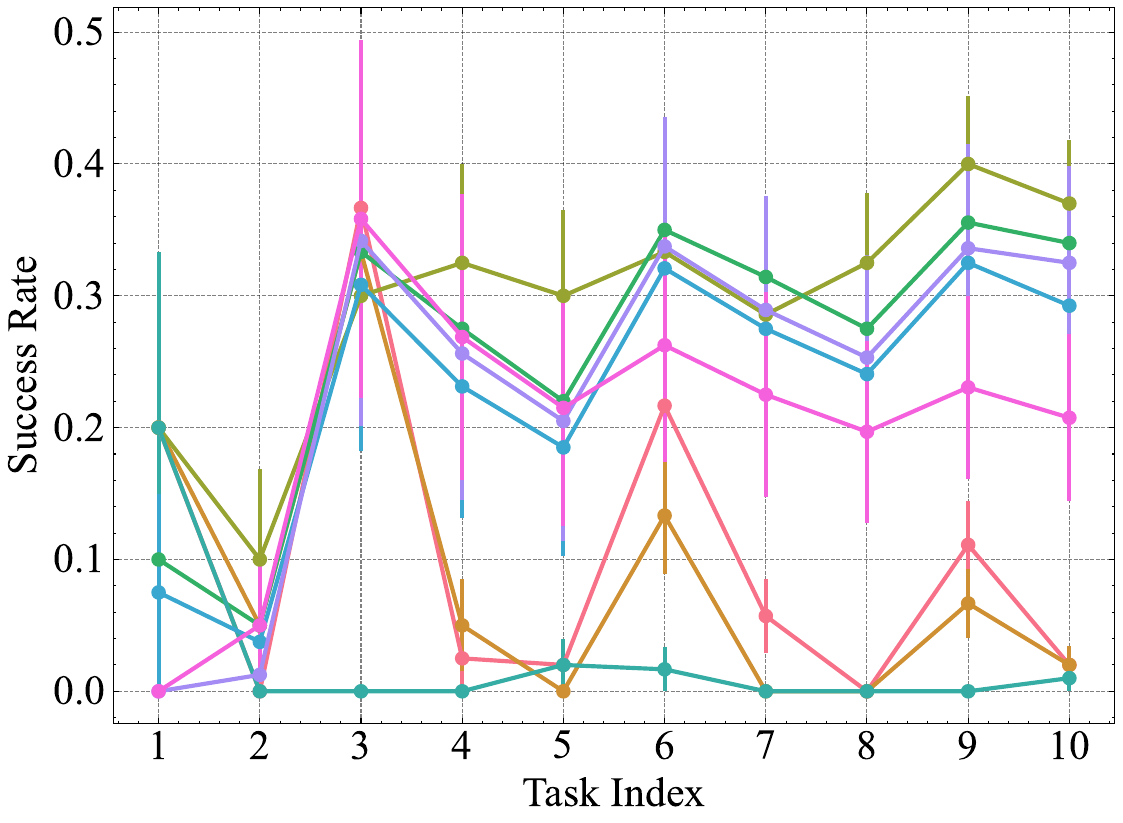}
    \end{subfigure}
        \end{center}
        \vspace{-12pt}
 \caption{\footnotesize Results on the CW10~\citep{wolczyk2021continual} sequence of tasks.  \textbf{(Left)} Performance profile of the fast learner across tasks, where the y-axis shows the proportion of tasks that achieve a success rate greater than or equal to the x-axis value. \textbf{(Right)} Average performance over time by evaluating the average success rates in the past tasks.  \normalsize}
\end{figure}

\begin{figure}[htbp]
    \begin{center}
        \begin{subfigure}[b]{1\textwidth}
            \includegraphics[width=1\textwidth]{pic/legend_8.pdf}
        \end{subfigure} \\
        \vspace{0.05in}
        \begin{subfigure}[b]{1\textwidth}
            \includegraphics[width=1\textwidth]{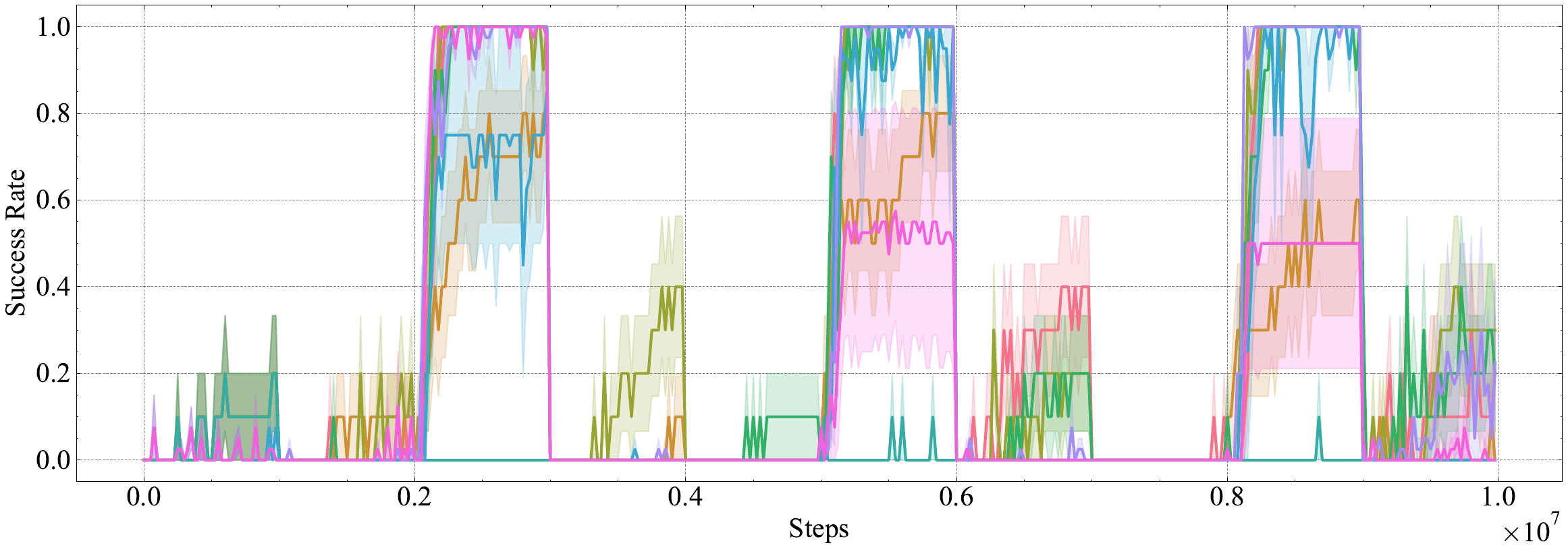}
        \end{subfigure}
            \end{center}
              \vspace{-12pt}
           \caption{\footnotesize Learning curves of the fast learner on the CW10~\citep{wolczyk2021continual} sequence of tasks. 
        The x-axis shows the total number of environment interactions, and the y-axis indicates the success rate.  \textbf{Each Row represents the result for one sequence of tasks. The last row shows the average performance of the three sequences.}
        \normalsize}
\end{figure}

\end{document}